\numberwithin{figure}{section}
\numberwithin{table}{section}
\numberwithin{equation}{section}%
\newcommand{\RNum}[1]{\uppercase\expandafter{\romannumeral #1\relax}}
\theoremstyle{plain}%
\newtheorem{theorem}{Theorem}[section]
\newtheorem{lemma}[theorem]{Lemma}
\newtheorem{corollary}[theorem]{Corollary}
\newtheorem{claim}[theorem]{Claim} 
\newtheorem{observation}[theorem]{Observation}
\newtheorem{definition}[theorem]{Definition}
\theoremstyle{plain}%
\newtheorem{remark}[theorem]{Remark}%
\newtheorem*{remark:unnumbered}[theorem]{Remark}%
\theoremstyle{nonumberplain}%
\newtheorem{proof}{Proof:}%
\newcommand{\myqedsymbol}{$\square$}
\def\compactify{\itemsep=0pt \topsep=0pt \partopsep=0pt \parsep=0pt}
\let\latexusecounter=\usecounter
\newcommand{\eps}{\epsilon}%
\def\bar{\overline}
\def\script#1{\mathcal{#1}}
\def\E{\mathbf{E}}
\newcommand{\poly}{\mathop{\mathrm{poly}}}%
\def\C{\mathcal{C}}
\def\sA{\script{A}}
\def\O{\script{O}}
\newcommand{\xmark}{\ding{55}}
\newcommand{\dist}{\mathrm{dist}}
\newcommand{\Tr}{\mathsf{Tr}}
\newcommand{\SKALG}{\mathsf{ALG}}
\newcommand{\CS}{\mathsf{CS}}
\newcommand{\row}{\mathsf{row}}
\newcommand{\col}{\mathsf{col}}
\newcommand\norm[1]{\left\lVert#1\right\rVert}
\newcommand{\abs}[1]{\left|#1\right|}
\definecolor{mygray}{gray}{0.5}
\DeclareMathOperator{\rankk}{rank-\mathnormal{k}}
\def\rowsp{\mathrm{rowsp}}
\def\nnz{\mathrm{nnz}}
\def\train{\mathrm{train}}
\def\rank{\mathrm{rank}}
\DeclareMathOperator{\Span}{\mathsf{span}}
\DeclareMathOperator{\proj}{proj}
\newcommand{\sS}{\script{S}}
\DeclareMathOperator*{\argmax}{arg\,max}
\DeclareMathOperator*{\argmin}{arg\,min}
\def\LRA{\mathrm{LRA}}
\title{Learning the Positions in CountSketch}
\author{%
    Simin Liu\thanks{Carnegie Mellon University; \href{siminliu@andrew.cmu.edu}{siminliu@andrew.cmu.edu}}
\and 
    Tianrui Liu\thanks{Nankai University; \href{1711435@mail.nankai.edu.cn}{1711435@mail.nankai.edu.cn}}
\and
  	Ali Vakilian\thanks{Toyota Technological Institute at Chicago; \href{vakilian@ttic.edu}{vakilian@ttic.edu}.} 
\and
    Yulin Wan\thanks{Anhui University; \href{Y91714026@stu.ahu.edu.cn}{Y91714026@stu.ahu.edu.cn}}
\and
    David P. Woodruff\thanks{Carnegie Mellon University; \href{dwoodruf@andrew.cmu.edu}{dwoodruf@andrew.cmu.edu}}
}
\begin{document}
\date{}
\maketitle
\begin{abstract}
We consider sketching algorithms which first quickly compress data by multiplication with a random sketch matrix, and then apply the sketch to quickly solve an optimization problem, e.g., low rank approximation. In the learning-based sketching paradigm proposed by~\cite{indyk2019learning}, the sketch matrix is found by choosing a random sparse matrix, e.g., the CountSketch, and then updating the values of the non-zero entries by running gradient descent on a training data set. Despite the growing body of work on this paradigm, a noticeable omission is that the {\it locations} of the non-zero entries of previous algorithms were fixed, and only their values were learned. In this work we propose the first learning algorithm that also optimizes the locations of the non-zero entries. We show this algorithm gives better accuracy for low rank approximation than previous work, and apply it to other problems such as $k$-means clustering for the first time. We show that our algorithm is provably better in the spiked covariance model and for Zipfian matrices. We also show the importance of the sketch monotonicity property for combining learned sketches. Our empirical results show the importance of optimizing not only the values of the non-zero entries but also their positions. \end{abstract}

\section{Introduction}
A recent work of~\cite{indyk2019learning} investigated learning-based sketching algorithms for the low-rank approximation problem. A sketching algorithm is a method of constructing approximate solutions for optimization problems via summarizing the data. In particular, \textit{linear} sketching algorithms compress data by multiplication with a sparse ``sketch matrix'' and then use just the compressed data to find an approximate solution. Generally, this technique results in much faster or more space efficient algorithms for a fixed approximation error. 
The work of \cite{indyk2019learning} shows it is possible to learn sketch matrices for low rank approximation (LRA) with better average performance than classical sketches.

In this model, we assume inputs come from an {\em unknown} distribution and learn a sketch matrix with strong expected performance over the distribution. This distributional assumption is often realistic---there are many situations where a sketching algorithm is applied to a large batch of related data. For example, genomics researchers might sketch DNA from different individuals, which is known to exhibit strong commonalities.
The high-performance computing industry has also begun exploring sketching. Recently, researchers at NVIDIA showed interest in creating standard implementations of sketching algorithms for CUDA, a widely used GPU library. They investigated the (classical) sketched singular value decomposition (SVD), but found that the solutions were not exact enough across a spectrum of inputs. This is precisely the issue addressed by the learned sketch paradigm where we optimize for ``good'' average performance across a range of inputs.

While promising results have been shown using previous learned sketching techniques, notable gaps remain. In particular, all previous methods work by initializing the sketching matrix with a random sparse matrix, e.g., each column of the sketching matrix has a single non-zero value chosen at a uniformly random position. Then, the {\it values} of the non-zero entries are updated by running gradient descent on a training data set. However, the {\it locations} of the non-zero entries are held fixed throughout the entire training process. 

Clearly this is sub-optimal. Indeed, suppose the input matrix $A$ is an $n \times d$ matrix with first $d$ rows equal to the $d \times d$ identity matrix, and remaining rows equal to $0$. A random sketching matrix $S$ with a single non-zero per column is known to require $m = \Omega(d^2)$ rows in order for $S \cdot A$ to preserve the rank of $A$~\citep{nn14}; this follows by a simple birthday paradox argument. On the other hand, it is clear that if $S$ is a $d \times n$ matrix with first $d$ rows equal to the identity matrix, then $\|S \cdot Ax\|_2 = \|Ax\|_2$ for all vectors $x$, and so preserves not only the rank of $A$ but all important spectral properties. A random matrix would never choose the non-zero entries in the first $d$ columns of $S$ so perfectly, whereas an algorithm trained to optimize the locations of non-zero entries would notice and correct for this. This is precisely the gap in our understanding that we seek to fill.

\paragraph{Our Results.}
In this work, in addition to learning the values of the non-zero entries, we learn the locations of the non-zero entries (Section~\ref{sec:learn-countsketch}). We also bound the generalization error for low rank approximation for inputs coming from a spiked covariance or Zipfian distribution using our proposed learned sketches (Section~\ref{sec:local-opt}). We also apply our technique to $k$-means clustering, where learned sketches had previously not been considered. 

We show strong empirical results for five datasets across three domains (image, natural language, graph). Our method outperforms classical sketches for both problems and significantly for low rank approximation, $70\%$ (Section~\ref{sec:evaluation}). 

We note that for each optimization problem we consider, we can guarantee that we do no worse than a random sketch, simply by running the algorithm with a random sketch in parallel, evaluating the cost of the output of our algorithm and that of the output of the random sketch based algorithm, and choosing the better of the two. We note that, to obtain time-optimal bounds, additional sketching can be used to evaluate the cost of the output. 

Following \cite{indyk2019learning}, we also show for a number of problems one can simply adjoin a random sketching matrix to the learned sketching matrix rather than running two separate algorithms (the learned sketch and random sketch), which asymptotically has no worst-case advantages. However, by the {\em sketch monotonicity property}---the quality of our final output never decreases when we add rows to the sketch matrix---this is especially useful for combining a learned sketch and a random sketch (Section~\ref{sec:sketch-monotonicity}).   

\paragraph{Additional Related Work.}
In the last few years, there has been much work on leveraging machine learning technique to improve classical algorithms; we only mention a few examples here which are of the sketch-based kind that we consider. One related body of work is data-dependent dimensionality reduction, such as an approach for pair-wise/multi-wise similarity preservation for indexing big data \citep{learning_hash}, learned sketching for streaming problems
\citep{jiang2020learningaugmented,cohen2020composable,edenlearning}, learned algorithms for nearest neighbor search \citep{dong2019learning}, 
and a method for learning linear projections for general applications~\citep{numax}.  
While we also learn linear embeddings, our embeddings are optimized for specific applications (low rank approximation, $k$-means). In fact, one of our central challenges is that the theory and practice of learned sketches generally needs to be tailored to each application.

Our work builds off of the work of~\citeauthor{indyk2019learning}, which introduced gradient descent optimization for low rank approximation, but a major difference is that we also optimize the locations of the non-zero entries in the sketching matrix. 
\section{Preliminaries}
We now introduce our optimization problems and review the learned sketch paradigm. 
\paragraph{Notation.} Denote the SVD of $A$ by $A = U \Sigma V^{\top}$. Define $\left[A\right]_k = U_k \Sigma_k V^{\top}_k$ as the optimal $\rankk$ approximation to $A$, computed by the truncated SVD. Also, define the Moore-Penrose pseudo-inverse of $A$ to be $A^{\dagger} = V \Sigma^{-1} U^{\top}$, where $\Sigma^{-1}$ is constructed by inverting the non-zero diagonal entries. 
Let $\row(A)$ and $\col(A)$ be the rowspace of $A$ and the column space respectively. 

Throughout the paper, we assume our data $A \in \mathbbm{R}^{n \times d}$ is sampled from an unknown distribution $\mathcal{D}$. Specifically, we have a training set $\Tr = \left\{A_1, \ldots, A_N\right\} \in \mathcal{D}$.
The generic form of our optimization problems is $\min_X f(A, X)$ where $A$ is the input matrix and $X$ is a feasible solution of $f$ on $A$. 

For a given optimization problem and a set of sketching matrices $\sS$, define algorithm $\SKALG(\sS, A)$ as its classical sketching algorithm; this uses the sketching matrices in $\sS$ to map the given input $A$ and construct an approximate solution $\hat{X}$. We remark that the number of sketches used by an algorithm can vary and in its simplest case, $\sS$ is a single sketches but in more complicated sketching approaches we may need to apply sketching more than one---hence $\sS$ may have more than one sketch matrix.

\paragraph{CountSketch.} We define $S_C \in \mathbbm{R}^{m \times n}$ as a classical CountSketch (abbreviated as $\CS$) matrix. It is a sparse matrix with one nonzero entry from $\{\pm 1\}$ per column. The position and value of this nonzero entry is chosen uniformly at random. CountSketch matrices can be succinctly represented as two vectors. We define $p\in [m]^n, v \in \mathbbm{R}^{n}$ as the positions and values of the nonzero entries. Further, we define $\CS(p, v)$ as the CountSketch constructed from vectors $p$ and $v$. 

\paragraph{Problem Definitions.}
Low-rank decomposition (LRA), and $k$-means clustering are both problems of the form $\min_X f(A, X)$. In this section, we define the objective function $f(\cdot)$ and a time-optimal classical sketching algorithm $\SKALG(\sS, A)$ for each problem. 
The classical sketching algorithms work with {\em classical sketches} and provide the worst-case guarantee of form:
\begin{center}
  \emph{For runtime $\mathcal{T}(\epsilon)$, w.h.p, $\Delta := f(A, \SKALG(\sS, A)) - f(A, X^{*}) \leq \epsilon f(A, X^{*}) ,\; \forall A$,}  
\end{center}
where $X^* = \argmin_{X} f(A,X)$. Note that since the dimension of the sketch matrix (or sketch matrices) $m$ fully determines the runtime, the worst-case guarantee can also be stated in terms of sketch matrix dimension $m$ instead of $\mathcal{T}(\epsilon)$.

\begin{itemize}[leftmargin=*]
    \item{\bf Low-rank approximation (LRA).} In LRA, we find a $\rankk$ approximation of our data that minimizes the norm of the approximation error. For $A\in \mathbb{R}^{n\times d}$,
    \begin{align*}
        \min_{\rankk X} f_{\LRA}(A,X) = \min_{\rankk X} \norm{A - X}_F^2
    \end{align*}
    \item{\bf $k$-means clustering.} In $k$-means clustering, we find $k$ clusters on our data points so as to minimize the sum of squared distances of each point to its cluster center. Given points $A_{1},\ldots,A_n \in \mathbb{R}^d$ (succinctly represented as $A \in \mathbbm{R}^{n \times d}$), we find $k$ clusters $\C=\{\C_1,\ldots,\C_k\}$ with centers $\{\mu_1,\ldots,\mu_k\}$:
    \begin{align*}
        \min_{\C} f_{k\text{-means}}(A, \C) = \min_{\C} \sum_{i\in [k]}\min_{\mu_i\in\mathbb{R}^d}\sum_{j \in \C_i} \left\|{A_j -\mu_i}\right\|_2^2
    \end{align*}
\end{itemize}

\paragraph{Learned CountSketch Paradigm of~\cite{indyk2019learning}.}
Consider a scenario in which a sketching algorithm is run on inputs from a distribution $\mathcal{D}$. We have training samples from this distribution: $\Tr = \left\{A_1, \ldots, A_N\right\} \in \mathcal{D}$. 

The learned sketch framework has two parts: (1) offline sketch learning and (2) ``online'' sketching (i.e. applying the learned sketch and some sketching algorithm to possibly unseen data). 
In offline sketch learning, the goal is to construct a CountSketch matrix with the minimum expected error for the problem of interest. Formally, that is,
\begin{align*}
\min_{\CS~S} \E_{A\in \Tr} f(A, \SKALG(S, A)) - f(A, X^{*})
= \min_{\CS~S} \E_{A\in \Tr} f(A, \SKALG(S, A))
\end{align*}
where $X^{*}$ denotes the optimal solution. Moreover, the minimum is taken over all possible constructions of CountSketch. We remark that when $\SKALG$ needs more than one CountSketch to be learned (e.g., in the sketching algorithm we consider for LRA), we optimize each CountSketch independently using a surrogate loss function. 

In the second part of the learned sketch paradigm, we take the sketch from part one and use it within a sketching algorithm. This learned sketch and sketching algorithm can be applied, again and again, to different inputs. Finally, we augment the sketching algorithm to provide worst-case guarantees when used with learned sketches. The goal is to have good performance on $A \in \mathcal{D}$ while the worst-case performance on $A \not\in \mathcal{D}$ remains comparable to the guarantees of classical sketches.

\begin{remark}
When we mention time complexity in this paper, we refer to the runtime of online sketching in the learned sketch paradigm. We do not include the runtime of offline sketch learning; this is a one-time cost that will be amortized against the many applications of the sketching algorithm. However, for completeness and fair comparison, we do provide wall-clock timing for the offline stage in all our experiments in Section~\ref{sec:evaluation}.
\end{remark}
\section{High-Level Description of Our Approach}
We describe our contributions to the learning-based sketching paradigm which, as mentioned, is to {\em learn the locations of non-zero values} in the sketch matrix. In Section~\ref{sec:local-opt}, we provide provable guarantees for the performance of this method on two natural families of distributions.

Moreover, to attain similar worst-case guarantees to the ones of sketching algorithms with the classical CountSketch, in Section~\ref{sec:approxcheck}, we consider a simple ``approximately compare the solution and return the better of the two'' approach denoted as {\em ApproxCheck} for all problems studied in this paper. 
We emphasize that this simple approach is asymptotically is as good as the sketch monotonicity approach of~\citeauthor{indyk2019learning}. In Section~\ref{sec:sketch-monotonicity}, 
We show that the sketch monotonicity property holds for the time-optimal sketching algorithm of LRA and the sketching algorithm of $k$-means. 

\subsection{Learning CountSketch}\label{sec:learn-countsketch}
To learn a CountSketch for the given training data set, We will locally optimize 
\begin{align}
\min_{S} \E_{A \in \mathcal{D}} \left[ f(A, \SKALG(S, A)) \right]
\label{eq:obj_fn}
\end{align}
in two stages: (1) we compute the positions of the nonzero entries, then (2) we fix the positions and optimize their values. 

\paragraph{Stage 1: Optimizing Positions.}  
In sparse sketches, the placement of no-zero values has a huge effect on the compression quality.  
Consider the time-optimal sketching algorithm for LRA (Algorithm~\ref{alg:lowrank-sketch}) which seeks the best $\rankk$ approximation to $A$ in $\row(SA) \cap \col(AR)$. The optimal $\rankk$ approximation is $A_k = U_k \Sigma_k V_k^{\top}$, which belongs to $\col(U_k) \cap \row(V_k^{\top})$. Thus, we get a better approximate solution if $\row(SA)$ closely captures $\row(V_k^{\top})$, the $k$ most important components of $A$'s row space (respectively, if $\col(AR)$ closely captures $\col(U_k)$ too). Now, when $S$ is CountSketch, computing $SA$ amounts to hashing the $n$ rows of $A$ to the $m \ll n$ rows of $SA$. In order to preserve the $k$ most important components of $A$'s row space, the sketch should not hash the ``important'' rows of $A$ to the same bin. 
In fact, this insight is the basis of our theoretical contributions for optimizing positions, Theorem~\ref{thm:greedy_general}, which says that for LRA and certain families of input matrices, position optimization is \textit{provably} better than the classical CountSketch. 

\begin{algorithm}[H]
	\begin{algorithmic}[1]
		\REQUIRE{$A \in \mathbb{R}^{n \times d}, S\in \mathbb{R}^{m_S \times n}, R\in \mathbb{R}^{m_R \times d}, V \in \mathbb{R}^{m_V \times n}, W \in \mathbb{R}^{m_W \times d}$} 	
		\STATE $U_C \begin{bmatrix} T_C & T_C^{'} \end{bmatrix}\leftarrow V A R^{\top}, \begin{bmatrix} T_D^{\top} \\ T_D^{' \top} \end{bmatrix} U^\top_D \leftarrow S A W^\top$ with $U_C, U_D$ orthogonal 
		\STATE $G\leftarrow V A W^\top$, \quad $Z_L^{'} Z_R^{'} \leftarrow [U^\top_C G U_D]_k$
		\STATE $Z_L = \begin{bmatrix} Z_L^{'} (T_D^{-1})^{\top} & 0 \end{bmatrix}, Z_R = \begin{bmatrix} T_C^{-1} Z_R^{'} \\ 0 \end{bmatrix}$
		\STATE $Z = Z_L Z_R$
		\STATE {\bfseries return:} $AR^\top Z SA$ in form $P_{n\times k}, Q_{k\times d}$ 
	\end{algorithmic}
	\caption{$\SKALG_{\LRA}$(\textsc{Sketch-LowRank})~\citep{sarlos2006improved,clarksonwoodruff,avron2016sharper}.}
	\label{alg:lowrank-sketch}
\end{algorithm}

Now, we explain how to do this optimization. This is a combinatorial optimization problem with an empirical risk minimization (ERM) objective. The na\"ive solution is to compute the objective value of the exponentially many ($m^n$) possible placements, but this is clearly intractable. Instead, we iteratively construct a full placement in a greedy fashion. We start with $S$ as a zero matrix. Then, we iterate through the columns of $S$ in an order determined by the algorithm, adding a nonzero entry to each. The best position in each column is the one that minimizes Eq.~\eqref{eq:obj_fn} if an entry were to be added there. 
For each column, we evaluate Eq.~\eqref{eq:obj_fn} $\mathcal{O}(m)$ times, once for each prospective half-built sketch. In the following algorithm, $e_i$ is the $i^{th}$ standard unit vector and $(e_{j} e_i^{\top})$ is a matrix with $1$ at entry $(j, i)$ and zeros everywhere else. 
\begin{algorithm}[!ht]
	\begin{algorithmic}[1]
		\REQUIRE{$f, \SKALG, \Tr = \{A_1,...,A_{N} \in \mathbb{R}^{n \times d}\}$; sketch dimension $m$} 	
		\STATE {\bf initialize} $S_L = \mathbbm{O}^{m \times n}$
		\FOR{$i = 1$ to $n$}
			\STATE $\bar{j} = \underset{j \in [m]}{\argmin} \underset{A \in \Tr}{\sum}  f(A, \SKALG(S_L \pm (e_j e_i^{\top}), A))$ 
			\STATE $S_L = S_L \pm (e_{\bar{j}} e_i^{\top})$
		\ENDFOR
		\STATE \textbf{return} $p$ for $S_L = \CS(p, v)$
	\end{algorithmic}
	\caption{\textsc{Position optimization}}
	\label{alg:greedy_alg}
\end{algorithm}
While this greedy strategy is simple to state, additional tactics are required for each problem to make it more tractable. For LRA and $k$-means, the objective evaluation (Algorithm~\ref{alg:greedy_alg}, line 3) is too slow, so we must leverage our insight into their sketching algorithms to pick a proxy objective. Note that we can reuse these proxies for value optimization, since they may make gradient computation faster too. 
\begin{itemize}[leftmargin=*]
    \item\textbf{Proxy objective for LRA.} For LRA, we optimize the positions in both sketches $S$ and $R$. We cannot use $f(A, \SKALG(S, R, A))$ as our objective because then we would have to consider \textit{combinations} of placements between $S$ and $R$. To find a proxy, we note that a prerequisite for good performance is for $\row(SA)$ and $\col(AR)$ to both contain a good $\rankk$ approximation to $A$ (see proof of Lemma~\ref{lem:two-sketch}). Thus, we can decouple the optimization of $S$ and $R$. The proxy objective for $S$ is $\norm{[AV]_k V^{\top} - A}_F^2$ where $SA = U\Sigma V^{\top}$. In this expression, $\hat{X} = [AV]_k V^{\top}$ is the best $\rankk$ of approximation of $A$ in $\row(SA)$. The proxy objective for $R$ is defined analogously.
    
    \item\textbf{Proxy objective for $k$-means.} 
    To find a proxy objective, we use the fact that we have good performance if $A$ is projected to an approximate top singular vector space~\citep{cohen2015dimensionality}. That is, if $SA =  U \Sigma V^{\top}$ and $A_m = U_m \Sigma_m V_m^{\top}$ for sufficiently large $m$, we want $\row(V^{\top}) \sim \row(V_m^{ \top})$. Now, $V_m$ is the minimizer of $\min_{V \in \mathbbm{R}^{d \times m}} \norm{[AV]_k V^{\top} - A}_F^2$, so the better $V$ approximates this expression, the more ``similar'' $\row(V^{\top})$ and $\row(V_m^{\top})$ are. Thus, our proxy for $k$-means is $\norm{[AV]_k V^{\top} - A}_F^2$, which is the same as for LRA. 
\end{itemize}

\paragraph{Stage 2: Optimizing Values.} This stage is similar to the approach of~\cite{indyk2019learning}. However, instead of the power method, we use an automatic differentiation package, PyTorch~\citep{pytorch}, and we pass it our objective, Eq.~\eqref{eqn:grad_objective}, implemented as a chain of differentiable operations. It will automatically compute the gradient using the chain rule.
\begin{align}
\min_{v \in \mathbbm{R}^{n}} \E_{A \in \mathcal{D}} \left [ f(A, \SKALG(\CS(p, v), A)) \right] \label{eqn:grad_objective} 
\end{align}

\subsection{Provable Guarantees for Location Optimization}\label{sec:local-opt}
While the position optimization idea is simple, one particularly interesting aspect is that it is provably better than a random placement in some scenarios (Theorem.~\ref{thm:greedy_general}). Specifically, it is provably beneficial for LRA when inputs follow the spiked covariance or Zipfian distribution (relatively common for real data) and we iterate through the columns of $S$ in a certain order. 

\begin{itemize}[leftmargin=*]
    \item{\bf Spiked covariance model with sparse left singular vectors.} Every matrix $A \in \mathbb{R}^{n \times d}$ from the distribution $\sA_{sp}(s,\ell)$ has $s < k$ ``heavy'' rows ($A_{r_1}, \cdots, A_{r_s}$) of norm $\ell >1$. The indices of the heavy rows can be arbitrary, but must be the same for all members of the distribution and are unknown to the algorithm.  The remaining rows (called ``light'' rows) have unit norm. 
    In other words, let $\mathcal{R} = \{r_1, \ldots, r_s\}$. For all rows $A_i, i \in [n]$, $A_i = \ell \cdot v_i$ if $i\in \mathcal{R}$ and $A_i = v_i$ otherwise, where $v_i$ is a uniformly random unit vector.
    \item{\bf Zipfian on squared row norms.} Every matrix $A \in \mathbb{R}^{n \times d} \sim \sA_{zipf}$ has rows which are uniformly random and orthogonal. Each $A$ has $2^{i+1}$ rows of squared norm $n^2/2^{2i}$ for $i \in [1, \ldots, \O(\log(n))]$. We also assume that each row has the same squared norm for all members of $\sA_{zipf}$. 
\end{itemize}

\begin{theorem}\label{thm:greedy_general}
Consider a matrix $A$ from either the spiked covariance or Zipfian distributions. Let $S_L$ denote a CountSketch constructed by Algorithm~\ref{alg:greedy_alg} that optimizes the positions of non-zero values w.r.t.~$A$. Let $S_C$ denote a CountSketch matrix. Then there is a fixed $\eta > 0$ s.t., 
\begin{align*}
\min_{\rankk X \in \rowsp(S_L A)}\left\| X - A \right\|_F^2 \leq (1 - \eta) \min_{\rankk X \in \rowsp(S_C A)}\left\| X - A \right\|_F^2. 
\end{align*}
\end{theorem}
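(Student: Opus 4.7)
The plan is to show separately, for each of the two distributions, that there is a notion of ``important'' rows of $A$, that a random CountSketch $S_C$ places two important rows into the same bin with constant probability (by a birthday argument), and that each such collision inflates the rank-$k$ approximation cost by a quantifiable amount. The greedy algorithm (Algorithm~\ref{alg:greedy_alg}), when its outer loop processes the $n$ rows of $A$ in decreasing order of norm, avoids these collisions entirely because placing an important row into an already-occupied bin strictly increases the proxy objective. Combining the two bounds will give a multiplicative gap of $(1-\eta)$ for a fixed $\eta > 0$.

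First I would analyze the structure of the objective on the relevant row space. Observe that $\rowsp(SA)$ is spanned by the $m$ rows of $SA$, each of which is a signed sum of the rows of $A$ hashed to its bin. Therefore any rank-$k$ subspace of $\rowsp(SA)$ can capture at most $k$ linearly independent directions from $A$. If two important rows land in a common bin, the span of $SA$ contains only a single linear combination of their directions rather than both, and the optimal rank-$k$ approximation inside $\rowsp(SA)$ loses an amount proportional to the squared norm of the effectively discarded direction.

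For the spiked covariance model $\sA_{sp}(s,\ell)$, with $s<k$ heavy rows of norm $\ell$ and light rows of unit norm that are (with high probability) nearly mutually orthogonal and nearly orthogonal to the heavy ones in sufficiently high dimension, a collision-free placement makes $\rowsp(S_L A)$ contain all $s$ heavy directions plus spans of light rows from the remaining $m - s$ bins. The best rank-$k$ approximation here captures all $s\cdot \ell^2$ contributions plus the top $k-s$ light contributions. If two heavy rows collide, one $\Theta(\ell^2)$-norm direction is effectively lost. Running Algorithm~\ref{alg:greedy_alg} on rows in decreasing order of norm, the greedy choice will always select an empty bin for a heavy row because the proxy gap versus a heavy-occupied bin is $\Omega(\ell^2)$. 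A birthday argument then shows that a random $S_C$ collides two of the $s$ heavy rows with constant probability whenever $m = O(s^2)$, so in expectation the random sketch pays $\Omega(\ell^2)$ more than the greedy sketch; since the optimal cost is $O(s\ell^2 + n)$, this yields a constant $\eta$ depending on $\ell, s, k$.

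The Zipfian case is analogous: sort by squared norm and process the top-bucket rows first. Greedy places each of the $4$ rows of squared norm $n^2/4$ into its own bin, while a random CountSketch collides at least two of them with constant probability, incurring $\Omega(n^2)$ extra cost against a total cost of $O(n^2)$ (the bucket sum $\sum_i 2^{i+1}\cdot n^2/2^{2i} = O(n^2)$), again giving a constant $\eta$. The main obstacle in both cases will be verifying that the objective truly decreases by a quantifiable amount when an important row is placed in an empty bin rather than in a heavy-occupied one, and ensuring that the subsequent placement of light rows (respectively, lower buckets) does not wash out the savings through new collisions. This can be handled by an expectation calculation exploiting near-orthogonality of uniformly random unit vectors in high dimension, together with a simple concentration argument bounding the contribution of light-row collisions by a vanishingly small fraction of the saved $\Omega(\ell^2)$ (respectively $\Omega(n^2)$) term.
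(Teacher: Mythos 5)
Your proposal follows the same high-level route as the paper's proof: run the greedy algorithm in non-increasing row-norm order so that heavy rows are isolated, argue that a random CountSketch collides heavy rows and that each collision forfeits roughly one heavy row's squared norm, and check that the placement of light rows does not destroy the advantage. However, there are concrete gaps in the quantitative steps. First, your denominator is wrong: the greedy sketch's cost in the spiked model is about $n-s$ (the heavy directions are \emph{fully} captured once isolated, so they contribute nothing to the residual), not $O(s\ell^2+n)$. A constant multiplicative gap therefore requires the random sketch's \emph{extra} loss to be $\Omega(n)$, and a single collision costing $\Theta(\ell^2)$ achieves this only if $\ell^2=\Omega(n)$. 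The paper instead works in the regime $s=\alpha(k-1)$ with $0.7<\alpha<1$ and heavy squared norm at least $(4e+1)n/(\alpha k)$, so that a \emph{constant fraction} of the $\Theta(k)$ heavy rows collide in expectation and the total extra loss is $\Omega(n)$ (Lemma~\ref{lem:random-pattern}, Corollary~\ref{cor:comparison}). Without pinning down such a parameter regime, $\eta$ degenerates to $o(1)$ as $n$ grows. Relatedly, for the Zipfian case your ``constant probability'' birthday claim fails: the four top rows land in $k$ bins and collide with probability only $\Theta(1/k)$. The bound must be an expected-cost computation, and the paper obtains it by summing the expected collision loss over \emph{all} norm classes $\sA_j$ (Lemma~\ref{lem:zipf-random-pattern}), not just the top one.

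Second, the step you defer as ``the main obstacle''---that light rows never wash out the savings---is in fact the bulk of the paper's proof, and the mechanism is not the concentration argument you sketch. The key fact (Claims~\ref{clm:not-proc}--\ref{clm:proc-cont} and Lemma~\ref{lem:light-rows}, proved by induction over the greedy iterations using an almost-orthogonal QR-type decomposition, Lemma~\ref{lem:almost-orth}) is that a bin containing $c$ near-orthogonal unit rows captures total projected energy $\approx 1$ regardless of $c$, so appending a light row to a light-only bin decreases the captured energy by only $\O(n^{-1})$, whereas appending it to a bin holding a heavy row of squared norm $\ell$ decreases it by roughly $1-1/\sqrt{\ell}$; this is what forces greedy to keep every heavy bin clean at every subsequent step, and it is also what certifies the greedy cost upper bound of $n-s$. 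Your proposal gestures at ``near-orthogonality plus concentration'' but does not identify this energy-accounting argument, and without it neither the greedy upper bound nor the claim that greedy avoids heavy bins is established.
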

\begin{remark}
Note that the above theorem implicitly provide an upper bound on the generalization error of the greedy placement method on the two distributions tha we considered in this paper. More precisely, for each of these two distributions, if the learned sketch $\Pi$ is learned via our greedy approach over a set of training matrices sampled from the distributions, the solution returned by the sketching algorithm using $\Pi$ over any (test) matrix $A$ sampled from the distribution has error at most $(1 - \eta) \min_{\rankk X \in \rowsp(S_C A)}\left\| X - A \right\|_F^2$.  
\end{remark}

A key structural property of the matrices from these two distributions that is crucial in our analysis is the {\em $\eps$-almost orthogonality} of their rows (i.e., (normalized) pairwise  inner products are at most $\eps$). Hence, we can find a QR-factorization of the matrix of such vectors where the upper diagonal matrix $R$ has diagonal entries close to $1$ and entries above the diagonal are close to $0$.  

To state our result, we first provide an interpretation of the location optimization task as a selection of hash function for the rows of $A$. Note that left-multiplying $A$ by CountSketch $S \in \mathbb{R}^{m \times n}$ is equivalent to hashing the rows of $A$ to $m$ bins with coefficients in $\{\pm 1\}$. The greedy algorithm proceeds through the rows of $A$ (in some order) and decides which bin to hash to, denoting this by adding an entry to $S$. 
The intuition is that our greedy approach separates heavy-norm rows (which are important ``directions'' in the row space) into different bins.  

{\em Proof outline:} The first step is to observe that in the greedy algorithm, when rows are examined according to a non-decreasing order of squared norms, will isolate rows into their singleton bins until all bins are filled. In particular, this means that the heavy norm rows (the first to be processed) will all be isolated. 

Next, we show that none of the rows left to be processed (all light rows) will be assigned to the same bin as a heavy row. The main proof idea is to compare the cost of ``colliding'' with a heavy row to the cost of ``avoiding'' the heavy rows. Specifically, we compare the \textit{decrease} (before and after bin assignment of a light row) in the sum of squared projection coefficients, lower-bounding it in the former case and upper-bounding it in the latter. This is the main place we use the properties of the aforementioned distributions and the fact that each heavy row is already mapped to a singleton bin.  
Overall, we show that at the end of the algorithm no light row will be assigned to the bins that contain heavy rows. The complete proof of Theorem~\ref{thm:greedy_general} is deferred to Section~\ref{sec:greedy-init}.

\subsection{Attaining Worst-Case Guarantee: Sketch Monotonicity}\label{sec:sketch-monotonicity}
Both the sketch concatenation method (a.k.a. MixedSketch)---whose guarantee is via the sketch monotonicity property---and approximate comparison method (a.k.a. ApproxCheck), which just evaluates the cost of two solutions and takes the better one, asymptotically achieve the same worst-case guarantee. However, for any input matrix $A$ and any pair of sketches $S, T$, the performance of the MixedSketch method on $(A, S, T)$ is never worse than the performance of its corresponding ApproxCheck method on $(A, S, T)$, and can be much better. 
\begin{remark}
Let $A = \mathrm{diag}(2, 2, \sqrt{2}, \sqrt{2})$, and suppose the goal is to find a rank-$2$ approximation of $A$. Consider two sketches $S$ and $T$ such that $SA$ and $TA$ capture $\Span(e_1, e_3)$ and $\Span(e_2, e_4)$ respectively. 
Then for both $SA$ and $TA$, the best solution in the subspace of one of these two spaces is a $(\frac{3}{2})$-approximation: $\norm{A - A_2}_F^2 = 4$ and $\norm{A - P_{SA}}_F^2 = \norm{A - P_{TA}}_F^2 = 6$ where $P_{SA}$ and $P_{TA}$ respectively denote the best approximation of $A$ in the space spanned by $SA$ and $TA$. 

However if we find the best rank-$2$ approximation of $A$, $Z$, inside the span of the union of $SA$ and $TA$, then $\norm{A - Z}_F^2 = 4$. Since ApproxCheck just chooses the better of $SA$ and $TA$ by evaluating their costs, it misses out on the opportunity to do as well as MixedSketch.
\end{remark}

In this section, we prove the sketch monotonicity property for (time-optimal) sketching algorithms of LRA and $k$-means. Previously,~\citeauthor{indyk2019learning} had only proved it for a na\"ive sketching algorithm of LRA and never for $k$-means. 
%It is intuitive why the sketch monotonicity property gives mixed sketches worst-case guarantees.  "Sketch monotonicity" says that larger sketches give better performance. A mixed sketch contains a random sketch, so we can see it as a random sketch with rows added. Hence, it's performance is at least as good as the random sketch, which actually carries worst-case guarantees. We prove this carefully in the Appendix, section Blah.

\paragraph{Sketch Monotonicity for LRA.} Here, we show the sketch monotonicity property for LRA.
\begin{theorem}\label{thm:mixedsketch-LRD}
Let $A \in \mathbbm{R}^{n \times d}$ be an input matrix, $V$ and $W$ be $\eta$-affine embeddings, and $S_1 \in \mathbbm{R}^{m_S \times n}, R_1 \in \mathbbm{R}^{m_R \times n}$ be arbitrary matrices. Consider arbitrary extensions to $S_1, R_1$: $\bar{S}, \bar{R}$ (e.g., $\bar{S}$ is a concatenation of $S_1$ with an arbitrary matrix with the same number of columns). Then,  
    $\norm{A - \SKALG_\LRA((\bar{S}, \bar{R}, V, W), A))}_F^2
    \leq (1 + \eta)^2 \norm{A - \SKALG_\LRA((S_1, R_1, V, W), A)}_F^2$
\end{theorem}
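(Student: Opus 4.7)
The plan is to reduce the claim to a standard characterization of Algorithm~\ref{alg:lowrank-sketch}: for any sketching matrices $S, R$ and any $\eta$-affine embeddings $V, W$ for the relevant regression problems, $\SKALG_\LRA((S, R, V, W), A)$ produces a rank-$k$ matrix of the form $A R^\top Z S A$ that is a $(1+\eta)^2$-approximation of the best rank-$k$ matrix whose row space lies in $\row(SA)$ and column space lies in $\col(A R^\top)$. Writing
\[
\mathcal{F}(S, R) \;=\; \{X : \rank(X) \leq k,\ \row(X) \subseteq \row(SA),\ \col(X) \subseteq \col(A R^\top)\},
\]
this says $\norm{A - \SKALG_\LRA((S, R, V, W), A)}_F^2 \leq (1+\eta)^2 \min_{X \in \mathcal{F}(S,R)} \norm{A - X}_F^2$. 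I would invoke this as a black box, since it is precisely the content of Lemma~\ref{lem:two-sketch} and is the standard affine-embedding analysis underlying the algorithms of \cite{clarksonwoodruff,avron2016sharper}.

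Granted this characterization, the core of the argument is a one-line monotonicity observation. Because $\bar{S}$ is obtained from $S_1$ by appending rows, its row space only grows: $\row(S_1 A) \subseteq \row(\bar{S} A)$; analogously $\col(A R_1^\top) \subseteq \col(A \bar{R}^\top)$. Hence $\mathcal{F}(S_1, R_1) \subseteq \mathcal{F}(\bar{S}, \bar{R})$. Now $\tilde{X}_1 := \SKALG_\LRA((S_1, R_1, V, W), A)$ is by construction a rank-$k$ matrix lying in $\mathcal{F}(S_1, R_1)$, so it is automatically a feasible candidate in the larger search space $\mathcal{F}(\bar{S}, \bar{R})$.

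Putting these together, I apply the characterization to $(\bar{S}, \bar{R}, V, W)$ and then plug $\tilde{X}_1$ into the inner minimum:
\[
\norm{A - \SKALG_\LRA((\bar{S}, \bar{R}, V, W), A)}_F^2 \;\leq\; (1+\eta)^2 \min_{X \in \mathcal{F}(\bar{S}, \bar{R})} \norm{A - X}_F^2 \;\leq\; (1+\eta)^2 \norm{A - \tilde{X}_1}_F^2,
\]
which is precisely the claimed inequality.

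The main obstacle is the technical bookkeeping in the characterization step: one must ensure that the same $V, W$ serve as $\eta$-affine embeddings for \emph{both} the $(S_1, R_1)$ problem and the extended $(\bar{S}, \bar{R})$ problem. The clean way to dispatch this is to take $V, W$ to be affine embeddings for the regression problems induced by the larger sketches (those involving $A \bar{R}^\top$ and $\bar{S} A$); since the column/row spaces appearing in the $(S_1, R_1)$ problem are subspaces of these, the affine embedding property restricts automatically. Once this subtlety is handled, the argument is purely set-theoretic, resting entirely on the fact that row/column spaces are monotone under row concatenation of the sketch.
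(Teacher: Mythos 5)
Your proposal is correct and follows essentially the same route as the paper's proof: both rest on (i) the affine-embedding guarantee that $\SKALG_\LRA$ with sketches $\bar{S},\bar{R}$ returns a near-minimizer of $\norm{A\bar{R}^\top X \bar{S}A - A}_F^2$ over rank-$k$ $X$, i.e.\ over $\row(\bar{S}A)\cap\col(A\bar{R}^\top)$, and (ii) the containment $\row(S_1A)\cap\col(AR_1^\top)\subseteq\row(\bar{S}A)\cap\col(A\bar{R}^\top)$ together with the feasibility of the $(S_1,R_1)$ output in the smaller set. The only cosmetic difference is bookkeeping of the $(1+\eta)$ factors (you place $(1+\eta)^2$ on the characterization step, the paper splits one factor onto each end of the chain), which does not change the argument.
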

\begin{proof}
We have
\begin{align*}
    \norm{A - \SKALG_\LRA((\bar{S}, \bar{R}, V, W), A)}_F^2 
    &\leq (1 + \eta) \min_{\rankk X} \norm{A\bar{R} X \bar{S}A - A}_F^2 \\
    &= (1 + \eta) \min_{\rankk X: X\in \row(\bar{S}A) \cap \col(A\bar{R})} \norm{X - A}_F^2,
\end{align*}
which is in turn at most
\begin{align*}
 (1 + \eta) \min_{\rankk X: X\in \row(S_1A) \cap \col(AR_1)} \norm{X - A}_F^2 
 &= (1 + \eta) \min_{\rankk X} \norm{AR_1 X S_1A - A}_F^2\\ 
 &\leq (1 + \eta)^2 \norm{A - \SKALG_\LRA((S_1, R_1, V, W), A)}_F^2,
\end{align*}
where we use the fact the $V, W$ are affine $\eta$-embeddings (Definition~\ref{def:affine-embedding}), 
as well as the fact that $\left( \col(AR_1) \cap \row(S_1A) \right) \subseteq \left( \col(A\bar{R}) \cap \row(\bar{S}A) \right)$. 
\end{proof}
\paragraph{Sketch Monotonicity for $k$-means.} Next, we prove the sketch monotonicity property for $k$-means.
\begin{algorithm}[!h]
	\begin{algorithmic}[1]
    \REQUIRE{$A \in \mathbbm{R}^{n \times d}, S \in \mathbb{R}^{m \times n}$, $\alpha$-approximation algorithm of $k$-means: $\sA$}
    \STATE \textbf{return} $\sA(AV)$ \COMMENT{where $SA = U \Sigma V^{\top}$} 
	\end{algorithmic}
	\caption{\textsc{sketch-kmeans}~\citep{cohen2015dimensionality}}
	\label{alg:kmeans-sketch}
\end{algorithm}
\begin{theorem}\label{thm:mixedsketch-kmeans}
Let $A\in \mathbb{R}^{n \times d}$ be an input matrix and let $S \in \mathbb{R}^{m \times n}$ be a sketch matrix satisfying $\norm{A - AVV^{\top}}_2^2 \leq \frac{\eta}{k}\norm{A - A_k}_F^2$ for $SA = U \Sigma V^{\top}$. 
Let $\bar{S}$ be an arbitrary extension to $S$. Then, the solution computed via Algorithm~\ref{alg:kmeans-sketch} using $\bar{S}$ and $\alpha$-approximation algorithm of $k$-means is an $\left(\alpha\cdot(1 + \O(\eta)) \right)$-approximation.
\end{theorem}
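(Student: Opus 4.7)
The plan is to exploit the fact that enlarging the sketch matrix can only enlarge the subspace onto which Algorithm~\ref{alg:kmeans-sketch} implicitly projects $A$, and then to invoke the projection-cost preserving sketch (PCP) reduction of \cite{cohen2015dimensionality}. Concretely, let $\bar{S}A = \bar{U}\bar{\Sigma}\bar{V}^{\top}$, so that running Algorithm~\ref{alg:kmeans-sketch} with $\bar{S}$ amounts to applying the $\alpha$-approximation $k$-means algorithm $\sA$ to $A\bar{V}$. Because $\bar{S}$ is an extension of $S$, we have $\row(SA) \subseteq \row(\bar{S}A)$, hence $\col(V) \subseteq \col(\bar{V})$ and the orthogonal projectors satisfy $\bar{V}\bar{V}^{\top} \succeq VV^{\top}$ in the Loewner order.

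The first step is to show that this nesting of subspaces preserves (indeed, only strengthens) the spectral-norm guarantee that the theorem assumes on $V$, namely that
\[
\norm{A - A\bar{V}\bar{V}^{\top}}_2^2 \;\le\; \norm{A - AVV^{\top}}_2^2 \;\le\; \frac{\eta}{k}\norm{A - A_k}_F^2.
\]
To prove the first inequality, note that $I - \bar{V}\bar{V}^{\top} \preceq I - VV^{\top}$, so
\[
A(I-VV^{\top})A^{\top} - A(I-\bar{V}\bar{V}^{\top})A^{\top} \;=\; A(\bar{V}\bar{V}^{\top} - VV^{\top})A^{\top} \;\succeq\; 0.
\]
Since both $I - VV^{\top}$ and $I - \bar{V}\bar{V}^{\top}$ are idempotent, for each such projector $P$ one has $\norm{A(I-P)}_2^2 = \lambda_{\max}\!\bigl(A(I-P)A^{\top}\bigr)$, and the desired inequality now follows from monotonicity of the largest eigenvalue on PSD matrices.

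The second step is to appeal to~\cite{cohen2015dimensionality}: once $\bar{V}$ satisfies the above spectral-norm bound, $A\bar{V}$ is a rank-$k$ projection-cost preserving sketch of $A$ with distortion $O(\eta)$. In particular, for every $k$-clustering $\C$, the $k$-means cost of $\C$ evaluated on $A\bar{V}$ equals the $k$-means cost of $\C$ evaluated on $A$ up to a multiplicative $(1 \pm O(\eta))$ factor together with an additive constant independent of $\C$. Consequently, an $\alpha$-approximate clustering for $k$-means on $A\bar{V}$ pulls back to an $\alpha(1 + O(\eta))$-approximate clustering on $A$, which is exactly the conclusion of the theorem. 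The main obstacle is the first step: spectral norm is not monotone with respect to the Loewner order in general, and the argument goes through only because both operators involved are orthogonal projections, which lets us reduce to monotonicity of the top eigenvalue of a PSD matrix.
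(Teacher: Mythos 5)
Your proposal is correct, but it takes a genuinely different route from the paper. The paper never establishes a projection-cost-preserving property for $\bar{S}$ itself; instead it keeps the spectral hypothesis on $S$ only and transfers the guarantee through a chain of inequalities: two Pythagorean decompositions with respect to $\row(\bar{S}A)$, the $\alpha$-approximation property of the clustering algorithm run on the $\bar{S}$-sketched data, optimality of the sketched optimum, the containment $\row(SA)\subseteq\row(\bar{S}A)$ to compare against the optimal clustering of the points projected onto $SA$, and finally Corollary~\ref{cor:diff_centers} together with Lemma~\ref{lem:assumption_implies_approx_lrd} and Theorem~9 of \cite{cohen2015dimensionality} applied to $S$. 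You instead show that the hypothesis itself is monotone under extending the sketch: since $\col(V)\subseteq\col(\bar{V})$, the residual projectors are Loewner-ordered, and because $\norm{A(I-P)}_2^2=\lambda_{\max}\bigl(A(I-P)A^{\top}\bigr)$ for an orthogonal projector $P$, monotonicity of the top eigenvalue on PSD matrices gives $\norm{A-A\bar{V}\bar{V}^{\top}}_2^2\le\norm{A-AVV^{\top}}_2^2$; then a single application of the standard PCP-to-approximation reduction for $\bar{V}$ finishes. Your observation that spectral-norm monotonicity would fail for general Loewner-ordered operators but holds here because both residuals are idempotent is exactly the right point, and the PCP reduction as used in the paper's own Theorem~\ref{thm:approx-dist} indeed requires only the spectral condition (no dimension restriction on $\bar{V}$ enters the implication), so the transfer is legitimate. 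Your argument is shorter and more modular—it would immediately extend to any conclusion that follows from the spectral condition alone—while the paper's argument avoids re-verifying the PCP property for $\bar{S}$ and makes explicit how the nested row spaces interact with the clustering costs, which is the route one would need if the hypothesis on $S$ were not of a form that transfers to extensions.
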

\begin{proof}
We define $\C^{*} = \{C^*_1,\cdots, C^*_k\}, \C = \{C_1,\cdots, C_k\}$ as the optimal and $\alpha$-approximate clusterings on the points projected to $SA$. Also, let $\mu_{C^*_1}, \cdots, \mu_{C^*_k},\mu_{C_1}, \cdots, \mu_{C_k} \in \row(SA)$ be their corresponding cluster centers. We define clusters $\bar{\C}^{*} = \{\bar{C}^*_1,\cdots, \bar{C}^*_k\}, \bar{\C} = \{\bar{C}_1,\cdots, \bar{C}_k\}$ and centers $\mu_{\bar{C}^*_1}, \cdots, \mu_{\bar{C}^*_k}, \mu_{\bar{C}_1}, \cdots, \mu_{\bar{C}_k}$ analogously for points projected onto $\bar{S}A$. Let $\pi(\cdot)$ be the projection operator, with its subscript specifying the subspace to project to. Finally, let $X^{*}$ be the optimal clustering of $A$. 
\begin{align}
& \sum_{i\in [k]} \min_{\mu_i} \sum_{j \in \bar{C}_i} \norm{A_j - \mu_i}^2 \nonumber\\
&\leq \sum_{i\in [k]} \sum_{j \in \bar{C}_i} \norm{A_j - \mu_{\bar{C}_i}}^2 \nonumber\\
&= \sum_{i\in [k]} \sum_{j \in \bar{C}_i} \left[ \norm{A_j - \pi_{\row(\bar{S}A)}(A_j)}^2 + \norm{\pi_{\row(\bar{S}A)}(A_j) - \mu_{\bar{C}_i}}^2 \right] \label{thm:eq1}\\
&\leq \sum_{j\in [n]} \norm{A_j - \pi_{\row(\bar{S}A)}(A_j)}^2  +
\alpha \sum_{i\in [k]} \sum_{j \in \bar{C}^{*}_i} \norm{\pi_{\row(\bar{S}A)}(A_j) - \mu_{\bar{C}^{*}_i}}^2  \label{thm:eq2} \\
&\leq \sum_{j\in [n]} \norm{A_j - \pi_{\row(\bar{S}A)}(A_j)}^2 +
\alpha \sum_{i\in [k]}\sum_{j \in C^{*}_i} \norm{\pi_{\row(\bar{S}A)}(A_j) - \mu_{C^{*}_i}}^2 \label{thm:eq3} \\
&\leq \alpha \sum_{i\in [k]}\sum_{j \in C^{*}_i} \left[ \norm{A_j - \pi_{\row(\bar{S}A)}(A_j)}^2 + \norm{\pi_{\row(\bar{S}A)}(A_j) - \mu_{C^{*}_i}}^2 \right] \nonumber\\
&= \alpha \sum_{i\in [k]} \sum_{j \in C^{*}_i} \norm{A_j - \mu_{C^{*}_i}}^2 \label{thm:eq4} \\ 
&\leq \alpha (1 + \eta) \sum_{i\in [k]} \min_{\mu_i} \sum_{j \in C^{*}_i} \norm{A_j - \mu_i}^2 \label{thm:eq5}\\
&\leq \alpha (1 + \eta)^2 f_{k\text{-means}}(A, X^{*}), \label{thm:eq6}
\end{align}
where~\eqref{thm:eq1} and~\eqref{thm:eq4} hold by Pythagorean Theorem since $\mu_{\bar{C}_i} \in \row(\bar{S}A)$ and $\mu_{C^{*}_i} \in \row(SA) \subset \row(\bar{S}A)$.
\eqref{thm:eq2} holds since $\bar{\C}$ is an $\alpha$-approximate $k$-means of $SA$ and $\bar{\C}^{*}$ is an optimal $k$-means of $SA$.
\eqref{thm:eq3} holds since $\bar{\C}^{*}$ is an optimal clustering for points projected to $\bar{S}A$.
\eqref{thm:eq5} holds by Corollary~\ref{cor:diff_centers}. Finally,~\eqref{thm:eq6} holds by our assumption on $S$, which implies $\norm{A - \hat{A}_k}_F^2 \leq (1 + \eta) \norm{A - A_k}_F^2$ for some $\hat{A}_k \in \row(SA)$ by Lemma~\ref{lem:assumption_implies_approx_lrd}. This fact allows us to apply Theorem 9 of~\cite{cohen2015dimensionality}, which implies that the $\C^{*}$ is a $\left( 1 + \eta \right)$-approximation of $X^{*}$.  
\end{proof}

\begin{remark}
Our result shows that the ``sketch monotonicity'' property holds for  sketching matrices that provide strong coresets for $k$-means clustering. 
Besides strong coresets, an alternate approach to showing that the clustering objective is approximately preserved on sketched inputs is to show a weaker property: the clustering cost is preserved for {\em all possible partitions} of the points into $k$ groups~\citep{MakarychevMR19}. While the dimension reduction mappings satisfying strong coresets require $\poly(k/\eps)$ dimensions,~\cite{MakarychevMR19} shows that $O(\log k /\eps^2)$ dimensions suffice to satisfy this ``partition'' guarantee. An interesting question for further research is if the sketch monotonicity guarantee also applies to the construction of~\cite{MakarychevMR19}.   
\end{remark}
\section{Empirical Evaluation}\label{sec:evaluation}
A main advantage of our approach is that it excels on natural data distributions. In this section, we empirically verify this by testing our learning-based sketching approach on five datasets. For each of LRA and $k$-means, we fix runtime (sketch size) and compare approximation error ($\Delta = f(A, \SKALG(\sS, A)) - f(A, X^{*})$) averaged over 10 trials.  
We implemented everything in PyTorch, which allowed us to harness GPUs for fast, parallel computation on our large data sets. In order to make position optimization more efficient, we also leveraged a trick from numerical linear algebra. We noted for LRA that placing each entry (line 3 in Algorithm~\ref{alg:greedy_alg}) involved computing many rank-$1$ SVD updates. Instead of doing this na\"{\i}vely, we used some formulas for \textit{fast} rank-$1$ SVD updates~\citep{fastsvd}, which improved the runtime greatly. We used several Nvidia GeForce GTX 1080 Ti machines for all our experiments. 

\paragraph{Sketch Matrices.} We consider different constructions of CountSketch with different degrees of optimization: 
\textbf{Ours} (learned positions and values), \textbf{IVY19} (random positions, learned values), and \textbf{Classical CS} (random positions and values). We also included two na\"ively ``learned'' sketches to see if optimizing on just one entry of $\Tr$ is sufficient: \textbf{Exact SVD} (compresses $A$ to $AV_m$, where $V_m$ contains the top $m$ right singular vectors of a single sample from $\Tr$) and \textbf{Column sampling} (compresses $A$ to $AP^{\top}$, where $P$ is a CS and built from a single sample from $\Tr$. The position of each non-zero entry is sampled proportional to the row norms; the value is the norm's inverse).
\begin{table}[t!]
	\centering
	\caption{Data set descriptions}		
	\label{tab:dataset_descriptions}
	\resizebox{\textwidth}{!}{	
		\renewcommand{\arraystretch}{1}
\begin{tabular}{|l|l|p{0.35\textwidth} |l|l|l|}
\hline
\textbf{Name} & \textbf{Domain} & \textbf{Description}  & \textbf{$A$ shape} & \textbf{$N_{\mathrm{train}}$} & \textbf{$N_{\mathrm{test}}$} \\ \hline
Friends & Image & Frames from a scene in the TV show \textit{Friends}\tablefootnote{\url{http://youtu.be/xmLZsEfXEgE}} & $5760\times 1080$ &  $400$ & $100$ \\ \hline
Logo & Image & Frames from video of logo being painted\tablefootnote{\url{http://youtu.be/L5HQoFIaT4I}} & $5760\times 1080$ & $400$ & $100$ \\ \hline

Hyper & Image & Hyperspectral images depicting outdoor scenes ~\citep{imamoglu2018} & $1024\times 768$ & $400$ & $100$ \\ \hline

Yelp & Natural language & tf-idf ~\citep{ramos2003} of restaurant reviews, grouped by restaurant\tablefootnote{\url{https://www.yelp.com/dataset}}& $7000 \times 800$ & $260$ & $65$  \\ \hline

Social & Graph & Graph adjacency matrices of social circles on Google+ ~\citep{leskovec2012} & $1052 \times 1052$ & $147$ & $37$ \\ \hline
\end{tabular}
}
\end{table}
\paragraph{Data Sets.} 
We used five high-dimensional data sets from three different domains (image, natural language, and graph). We offer a brief description here, but additional details, (data dimension, $N_{\mathrm{train}}$, etc.) can be found in~Table~\ref{tab:dataset_descriptions}. Our data sets are: (1, 2) \textbf{Friends, Logo} (image): frames from videos of a scene from the TV show \textit{Friends} and of a logo being painted; (3) \textbf{Hyper} (image): hyperspectral images of outdoor environments; (4) \textbf{Yelp} (natural language): term-document matrices\footnote{These contain word counts for a group of text documents. Rows represent unique words; columns represent documents.} of restaurant reviews, with documents grouped by restaurant; (5) \textbf{Social} (graph): graph adjacency matrices of different social circles on Google+. Another argument for the practicality of our approach is that some of these experiments have real-world counterparts. For example, LRA is frequently applied to term-document matrices in ``latent semantic analysis'', a natural language processing technique for relating words and categorizing documents, among many other things~\citep{latentsemantic}. 

\paragraph{Results Summary.} 
For LRA, \textbf{Ours} is always the best of the five sketches. It is significantly better than \textbf{Classical CS}, obtaining improvements of around 70\% for LRA. For $k$-means, ``Column Sampling'' and ``Exact SVD'' dominated on several data sets. However, we note that our method was always a close second and also, more importantly, the values for $k$-means were only trivially different ($<1\%$) between methods. Finally, we note that including the position optimization is strictly better in all cases (compare to \textbf{IVY19}). For LRA, , it offers around 20\%improvement.
\begin{table*}[h!]
	\centering
	\caption{Average errors for LRA}		
	\label{tab:lra-4}
	\resizebox{\textwidth}{!}{	
		\renewcommand{\arraystretch}{1.15}
\begin{tabular}{|l|l|l|l|l|l|l|}
\hline
\textbf{Parameters}       & \textbf{Algorithm}  & \multicolumn{5}{c|}{\textbf{Datasets}}            \\ \hline
                          \textbf{rank $k$, m} & & \textbf{Friends} & \textbf{Hyper} & \textbf{Logo} & \textbf{Yelp} & \textbf{Social} \\ \hline
(20, 40) & Ours &  \textbf{0.8998} \textcolor{mygray}{$\pm 2.7\mathrm{e}{-02}$} &  \textbf{2.4977} \textcolor{mygray}{$\pm 1.8\mathrm{e}{-01}$} &  \textbf{0.5009} \textcolor{mygray}{$\pm 2.2\mathrm{e}{-02}$} &  \textbf{0.1302} \textcolor{mygray}{$\pm 3.5\mathrm{e}{-04}$} &  \textbf{30.0969} \textcolor{mygray}{$\pm 1.3\mathrm{e}{-01}$} \\ \cline{2-7} 
 & IVY19 &  1.0484 \textcolor{mygray}{$\pm 1.3\mathrm{e}{-02}$} &  3.7648 \textcolor{mygray}{$\pm 4.2\mathrm{e}{-02}$} &  0.6879 \textcolor{mygray}{$\pm 8.8\mathrm{e}{-03}$} &  0.1316 \textcolor{mygray}{$\pm 8.8\mathrm{e}{-04}$} &  30.5237 \textcolor{mygray}{$\pm 1.5\mathrm{e}{-01}$} \\ \cline{2-7} 
 & Classical CS &  4.0730 \textcolor{mygray}{$\pm 1.7\mathrm{e}{-01}$} &  6.3445 \textcolor{mygray}{$\pm 1.8\mathrm{e}{-01}$} &  2.3721 \textcolor{mygray}{$\pm 8.3\mathrm{e}{-02}$} &  0.1661 \textcolor{mygray}{$\pm 1.4\mathrm{e}{-03}$} &  33.0651 \textcolor{mygray}{$\pm 3.4\mathrm{e}{-01}$} \\ \cline{2-7} 
 & Exact SVD &  1.5774 \textcolor{mygray}{$\pm 1.1\mathrm{e}{-01}$} &  3.4406 \textcolor{mygray}{$\pm 8.7\mathrm{e}{-01}$} &  0.7470 \textcolor{mygray}{$\pm 1.0\mathrm{e}{-01}$} &  0.1594 \textcolor{mygray}{$\pm 3.7\mathrm{e}{-03}$} &  31.0617 \textcolor{mygray}{$\pm 3.4\mathrm{e}{-01}$} \\ \cline{2-7} 
 & Col. samp. &  5.9837 \textcolor{mygray}{$\pm 6.6\mathrm{e}{-01}$} &  9.7126 \textcolor{mygray}{$\pm 8.2\mathrm{e}{-01}$} &  4.2008 \textcolor{mygray}{$\pm 6.0\mathrm{e}{-01}$} &  0.1881 \textcolor{mygray}{$\pm 3.2\mathrm{e}{-03}$} &  43.9920 \textcolor{mygray}{$\pm 5.6\mathrm{e}{-01}$} \\ \hline 
(30, 60) & Ours &  \textbf{0.7945} \textcolor{mygray}{$\pm 1.5\mathrm{e}{-02}$} &  \textbf{2.4920} \textcolor{mygray}{$\pm 2.6\mathrm{e}{-01}$} &  \textbf{0.4929} \textcolor{mygray}{$\pm 2.3\mathrm{e}{-02}$} &  \textbf{0.1128} \textcolor{mygray}{$\pm 3.2\mathrm{e}{-04}$} &  \textbf{30.6163} \textcolor{mygray}{$\pm 1.8\mathrm{e}{-01}$} \\ \cline{2-7} 
 & IVY19 &  1.0772 \textcolor{mygray}{$\pm 1.2\mathrm{e}{-02}$} &  3.7488 \textcolor{mygray}{$\pm 1.8\mathrm{e}{-02}$} &  0.7348 \textcolor{mygray}{$\pm 7.1\mathrm{e}{-03}$} &  0.1137 \textcolor{mygray}{$\pm 5.9\mathrm{e}{-04}$} &  30.9279 \textcolor{mygray}{$\pm 2.0\mathrm{e}{-01}$} \\ \cline{2-7} 
 & Classical CS &  2.6836 \textcolor{mygray}{$\pm 7.4\mathrm{e}{-02}$} &  5.3904 \textcolor{mygray}{$\pm 7.7\mathrm{e}{-02}$} &  1.6428 \textcolor{mygray}{$\pm 4.3\mathrm{e}{-02}$} &  0.1463 \textcolor{mygray}{$\pm 2.3\mathrm{e}{-03}$} &  32.7905 \textcolor{mygray}{$\pm 2.1\mathrm{e}{-01}$} \\ \cline{2-7} 
 & Exact SVD &  1.1678 \textcolor{mygray}{$\pm 5.2\mathrm{e}{-02}$} &  3.0648 \textcolor{mygray}{$\pm 8.5\mathrm{e}{-01}$} &  0.5958 \textcolor{mygray}{$\pm 7.2\mathrm{e}{-02}$} &  0.1326 \textcolor{mygray}{$\pm 2.3\mathrm{e}{-03}$} &  32.0861 \textcolor{mygray}{$\pm 5.9\mathrm{e}{-01}$} \\ \cline{2-7} 
 & Col. samp. &  4.1899 \textcolor{mygray}{$\pm 2.9\mathrm{e}{-01}$} &  8.2314 \textcolor{mygray}{$\pm 4.7\mathrm{e}{-01}$} &  3.2296 \textcolor{mygray}{$\pm 3.1\mathrm{e}{-01}$} &  0.1581 \textcolor{mygray}{$\pm 3.9\mathrm{e}{-03}$} &  44.2672 \textcolor{mygray}{$\pm 5.2\mathrm{e}{-01}$} \\ \hline 
\end{tabular}
}
\end{table*}
\begin{table*}[h!]
	\centering
	\caption{Average errors for $k$-means}		
	\label{tab:k-means}
	\resizebox{\textwidth}{!}{	
		\renewcommand{\arraystretch}{1.15}
\begin{tabular}{|l|l|l|l|l|l|l|}
\hline
\textbf{Algorithm}       & \textbf{Parameters}   & \multicolumn{5}{c|}{\textbf{Data sets}}            \\ \hline
                         \textbf{\# clusters, $m$, rank $k$} & & \textbf{Friends} & \textbf{Hyper} & \textbf{Logo} & \textbf{Yelp} & \textbf{Social} \\ \hline
(20, 40, 20) & Ours &  0.2934 \textcolor{mygray}{$\pm 5.8\mathrm{e}{-05}$} &  \textbf{0.6110} \textcolor{mygray}{$\pm 1.7\mathrm{e}{-04}$} &  0.1251 \textcolor{mygray}{$\pm 1.2\mathrm{e}{-04}$} &  \textbf{2.4755} \textcolor{mygray}{$\pm 7.2\mathrm{e}{-04}$} &  2.9797 \textcolor{mygray}{$\pm 1.9\mathrm{e}{-03}$} \\ \cline{2-7} 
 & IVY19 &  0.2935 \textcolor{mygray}{$\pm 1.4\mathrm{e}{-04}$} &  0.6117 \textcolor{mygray}{$\pm 2.0\mathrm{e}{-04}$} &  0.1253 \textcolor{mygray}{$\pm 5.8\mathrm{e}{-05}$} &  2.4756 \textcolor{mygray}{$\pm 7.6\mathrm{e}{-04}$} &  2.9810 \textcolor{mygray}{$\pm 9.9\mathrm{e}{-04}$} \\ \cline{2-7} 
 & Classical CS &  0.2933 \textcolor{mygray}{$\pm 1.5\mathrm{e}{-04}$} &  0.6122 \textcolor{mygray}{$\pm 2.6\mathrm{e}{-04}$} &  0.1254 \textcolor{mygray}{$\pm 1.3\mathrm{e}{-04}$} &  2.4759 \textcolor{mygray}{$\pm 7.7\mathrm{e}{-04}$} &  2.9834 \textcolor{mygray}{$\pm 1.8\mathrm{e}{-03}$} \\ \cline{2-7} 
 & Exact SVD &  0.2934 \textcolor{mygray}{$\pm 1.2\mathrm{e}{-04}$} &  0.6113 \textcolor{mygray}{$\pm 2.5\mathrm{e}{-04}$} &  0.1253 \textcolor{mygray}{$\pm 2.5\mathrm{e}{-04}$} &  2.4797 \textcolor{mygray}{$\pm 9.0\mathrm{e}{-04}$} &  \textbf{2.9779} \textcolor{mygray}{$\pm 1.4\mathrm{e}{-03}$} \\ \cline{2-7} 
 & Col. samp. &  \textbf{0.2932} \textcolor{mygray}{$\pm 1.7\mathrm{e}{-04}$} &  0.6129 \textcolor{mygray}{$\pm 3.7\mathrm{e}{-04}$} &  \textbf{0.1250} \textcolor{mygray}{$\pm 2.7\mathrm{e}{-04}$} &  2.4922 \textcolor{mygray}{$\pm 3.8\mathrm{e}{-03}$} &  3.0137 \textcolor{mygray}{$\pm 2.9\mathrm{e}{-03}$} \\ \hline 
(30, 60, 30) & Ours &  0.2604 \textcolor{mygray}{$\pm 7.6\mathrm{e}{-05}$} &  0.5448 \textcolor{mygray}{$\pm 1.4\mathrm{e}{-04}$} &  0.1062 \textcolor{mygray}{$\pm 7.7\mathrm{e}{-05}$} &  \textbf{2.4475} \textcolor{mygray}{$\pm 6.8\mathrm{e}{-04}$} &  2.8765 \textcolor{mygray}{$\pm 1.1\mathrm{e}{-03}$} \\ \cline{2-7} 
 & IVY19 &  0.2605 \textcolor{mygray}{$\pm 4.9\mathrm{e}{-05}$} &  0.5452 \textcolor{mygray}{$\pm 1.6\mathrm{e}{-04}$} &  0.1063 \textcolor{mygray}{$\pm 1.2\mathrm{e}{-04}$} &  2.4476 \textcolor{mygray}{$\pm 7.1\mathrm{e}{-04}$} &  2.8777 \textcolor{mygray}{$\pm 1.2\mathrm{e}{-03}$} \\ \cline{2-7} 
 & Classical CS &  0.2604 \textcolor{mygray}{$\pm 9.3\mathrm{e}{-05}$} &  0.5453 \textcolor{mygray}{$\pm 1.6\mathrm{e}{-04}$} &  0.1062 \textcolor{mygray}{$\pm 4.6\mathrm{e}{-05}$} &  2.4477 \textcolor{mygray}{$\pm 7.4\mathrm{e}{-04}$} &  2.8791 \textcolor{mygray}{$\pm 1.4\mathrm{e}{-03}$} \\ \cline{2-7} 
 & Exact SVD &  0.2605 \textcolor{mygray}{$\pm 1.8\mathrm{e}{-04}$} &  \textbf{0.5446} \textcolor{mygray}{$\pm 2.6\mathrm{e}{-04}$} &  0.1063 \textcolor{mygray}{$\pm 4.4\mathrm{e}{-05}$} &  2.4532 \textcolor{mygray}{$\pm 4.6\mathrm{e}{-04}$} &  \textbf{2.8739} \textcolor{mygray}{$\pm 1.7\mathrm{e}{-03}$} \\ \cline{2-7} 
 & Col. samp. &  \textbf{0.2602} \textcolor{mygray}{$\pm 1.2\mathrm{e}{-04}$} &  0.5459 \textcolor{mygray}{$\pm 3.6\mathrm{e}{-04}$} &  \textbf{0.1059} \textcolor{mygray}{$\pm 1.1\mathrm{e}{-04}$} &  2.4665 \textcolor{mygray}{$\pm 1.1\mathrm{e}{-03}$} &  2.9002 \textcolor{mygray}{$\pm 2.2\mathrm{e}{-03}$} \\ \hline 
\end{tabular}
}
\end{table*}
\paragraph{Wall-Clock Timing.} We remark that online solve time is averaged over the total runtime for solving 10 different problem inputs. Observe that CountSketch-based approaches have the same low online solve time, whereas the solve time for the dense sketch is an order of magnitude larger. 

The offline learning runtime is the time to train a sketch on $\sA_{\train}$ and only applies to learned sketches. Generally, the long \textit{training} times are not problematic because training is only done once and can be completed offline. On the other hand, the online solve time should be as fast as possible.
\begin{table}[H]
	\centering
	\caption{Timing comparison for LRA on Logo with $k = 30, m = 60$}		
	\label{tab:LRA_timing}
	\resizebox{0.75\textwidth}{!}{	
		\renewcommand{\arraystretch}{1}
		\begin{tabular}{|l|c|c|}
					\hline
			\; & \small{\bf Offline learning time (s)} & \small{\bf Online solve time (s)} \\
        \hline
        \small{\bf Ours} & $6300$ (1 h 45 min) & $0.0114$ \\
        \hline 
        \small{\bf IVY9} & $193$ ($3$ min) & $0.0114$ \\
        \hline
        \small{\bf Classical CS} & \xmark & $0.0114 $\\
        \hline
        \small{\bf Exact SVD}& 10 & $0.1750$ \\
        \hline
        \small{\bf Column Sampling}& 10 & $0.0114 $ \\
        \hline
		\end{tabular}}
\end{table}

\paragraph{Conclusion.} We propose the first learning algorithm that also optimizes the locations of the non-zero entries of a CountSketch. We show this algorithm gives better accuracy for low rank approximation than previous work, and apply it other problems such as $k$-means clustering. We show that our algorithm is provably better in the spiked covariance model and for Zipfian matrices. We also show that a sketch monotonicity property holds for these problems, allowing one to combine sketches trained with different models, doing no worse than either model, but potentially doing much better. Our empirical results are strong, highlighting the importance of optimizing not only the values of the non-zero entries but also their positions.  

\section*{Acknowledgments}
The authors would like to thank Piotr Indyk and Honghao Lin for insightful discussions about the project. We also gratefully acknowledge the loan of computational resources from John Dolan, Artur Dubrawski, and Barnabas Poczos. Finally, we extend gratitude to everyone who reviewed our paper, including Changliu Liu, Richard Zhang, and our anonymous reviewers. D. P. Woodruff was supported in part by Office of Naval Research (ONR) grant N00014-18-1-256.

\bibliography{learned_sketch}
\bibliographystyle{abbrvnat}
\appendix
\section{Preliminaries: Theorems and Additional Algorithms}
In this section, we provide several definition and lemmas that are used in proofs of our results for LRA and $k$-means.

\begin{definition}[Affine Embedding]\label{def:affine-embedding}
Given a pair of matrices $A$ and $B$, a matrix $S$ is an {\em affine $\epsilon$-embedding} if for all $X$ of appropriate shape, $\left\| S(AX- B) \right\|^2_F = (1\pm\eps) \left\| AX - B \right\|^2_F$.
\end{definition}

\begin{lemma}[\citep{clarksonwoodruff}; Lemma 40]\label{lem:fast-estimate}
Let $A$ be an $n\times d$ matrix and let $S\in \mathbb{R}^{\O({1/\eps^2}) \times n}$ be a CountSketch. Then with constant probability, $\left\| SA \right\|^2_F = (1\pm \eps) \left\| A \right\|^2_F$.
\end{lemma}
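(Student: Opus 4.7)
\textbf{Proof plan for Lemma \ref{lem:fast-estimate}.} The plan is the standard second-moment (Chebyshev) argument: compute the expectation of $\|SA\|_F^2$ exactly and bound its variance by $O(\|A\|_F^4/m)$, then choose $m=\Theta(1/\eps^2)$ so that Chebyshev gives constant success probability.

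First I would expand the Frobenius norm coordinatewise. Writing $S$ via its hash function $h:[n]\to[m]$ and sign function $\sigma:[n]\to\{\pm1\}$, we have $(SA)_{j,\ell}=\sum_{i:h(i)=j}\sigma(i)A_{i,\ell}$, so expanding the square yields
\[
\|SA\|_F^2=\sum_{j,\ell}\Big(\sum_{i:h(i)=j}\sigma(i)A_{i,\ell}\Big)^2
=\|A\|_F^2+\underbrace{\sum_{\ell}\sum_{i\neq i':h(i)=h(i')}\sigma(i)\sigma(i')A_{i,\ell}A_{i',\ell}}_{=:X}.
\]
Taking expectation over the independent signs $\sigma$ kills every off-diagonal term, giving $\E[\|SA\|_F^2]=\|A\|_F^2$, so it only remains to control $X$.

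Next I would bound $\E[X^2]$. Squaring $X$ produces a quadruple sum indexed by $(i,i',i'',i''')$ with $i\neq i'$, $i''\neq i'''$, and $h(i)=h(i')$, $h(i'')=h(i''')$. For the sign expectation to be nonzero the four indices must pair up, and since $i\neq i'$ and $i''\neq i'''$ the only surviving configurations are $\{i,i'\}=\{i'',i'''\}$ (two such pairings), contributing a factor of $2$. Taking expectation over $h$ and using the pairwise independence of the hash positions gives $\Pr[h(i)=h(i')]=1/m$, so
\[
\E[X^2]=\tfrac{2}{m}\sum_{i\neq i'}\Big(\sum_{\ell}A_{i,\ell}A_{i',\ell}\Big)^2
=\tfrac{2}{m}\sum_{i\neq i'}\langle A_i,A_{i'}\rangle^2
\le \tfrac{2}{m}\Big(\sum_i\|A_i\|_2^2\Big)^2=\tfrac{2}{m}\|A\|_F^4,
\]
where the inequality follows from Cauchy--Schwarz on each inner product and then expanding the square of the sum of squared row norms.

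Finally I would conclude with Chebyshev's inequality: since $\|SA\|_F^2-\|A\|_F^2=X$ has mean $0$ and second moment at most $2\|A\|_F^4/m$,
\[
\Pr\!\left[\big|\|SA\|_F^2-\|A\|_F^2\big|\ge\eps\|A\|_F^2\right]\le \frac{2}{m\eps^2},
\]
which is a constant less than $1$ once $m=C/\eps^2$ for a large enough absolute constant $C$. The only mildly delicate step is the sign-pairing bookkeeping in the fourth-moment calculation; everything else is a direct application of Cauchy--Schwarz and Chebyshev, and no appeal to the learned-sketch machinery of the paper is needed.
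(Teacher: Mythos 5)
Your proof is correct, and it is essentially the argument behind the result the paper cites without proof (Clarkson--Woodruff, Lemma 40): the expectation computation, the fourth-moment pairing count giving $\E[X^2]\le \tfrac{2}{m}\sum_{i\neq i'}\langle A_i,A_{i'}\rangle^2\le \tfrac{2}{m}\|A\|_F^4$, and Chebyshev with $m=\Theta(1/\eps^2)$ all match the standard treatment. The only point worth making explicit is that the sign-pairing step uses $4$-wise independence of the $\sigma(i)$ and the collision step uses pairwise independence of $h$, both of which hold under the paper's fully random CountSketch definition, so there is no gap.
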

The following result is shown in~\citep{clarksonwoodruff} and sharpened with~\citep{nelson2013osnap,meng2013low}.
\begin{lemma}\label{lem:CS-affine}
Given matrices $A, B$ with $n$ rows, a CountSketch with $\O(\rank(A)^2/\eps^2)$ rows is an affine $\eps$-embedding matrix with constant probability. Moreover, the matrix product $S A$ can be computed in $\O(\nnz(A))$ time, where $\nnz(A)$ denotes the number of non-zero entries of matrix $A$.  
\end{lemma}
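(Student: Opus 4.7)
The plan is to decompose the affine embedding guarantee into two standard sub-properties --- a subspace embedding on $\col(A)$ and an approximate matrix product bound --- that together imply it, and then verify each one for CountSketch with the claimed number of rows.

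First I would carry out the reduction. Write $r = \rank(A)$ and let $U \in \mathbb{R}^{n \times r}$ be an orthonormal basis for $\col(A)$, so that $A = UT$ for some matrix $T$. The unconstrained minimizer over $X$ of $\|AX - B\|_F^2$ is $X^{\star} = A^{\dagger} B$, and the residual $B^{\star} = B - A X^{\star}$ lies in $\col(A)^{\perp}$. Using the decomposition $AX - B = A(X - X^{\star}) - B^{\star}$ and expanding both $\|AX-B\|_F^2$ and $\|S(AX-B)\|_F^2$, a standard argument shows the affine embedding guarantee reduces to: (a) $S$ is an $\eps$-subspace embedding for $\col(A)$, i.e.\ $\|U^\top S^\top SU - I_r\|_2 \leq \eps$; and (b) $S$ satisfies an approximate matrix product bound $\|(SU)^\top (SB^{\star}) - U^\top B^{\star}\|_F \leq \eps \, \|B^{\star}\|_F$ (noting $U^\top B^{\star} = 0$). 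This is the ``affine embedding from subspace embedding plus approximate matrix product'' reduction of~\citep{clarksonwoodruff}.

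Next I would verify (a) and (b) for CountSketch with $m = \O(r^2/\eps^2)$ rows. For (a), the core step is a second-moment bound on $\|U^\top S^\top SU - I_r\|_F^2$: using the independence of the random column signs and hash destinations of CountSketch, one expands the entries of $U^\top S^\top SU - I_r$ as sums over pairs of rows of $U$ hashed into the same bucket with matching $\pm 1$ signs, and computes the variance in closed form. The result is $\mathbb{E}\|U^\top S^\top SU - I_r\|_F^2 = \O(\|U\|_F^4 / m) = \O(r^2/m)$, so Markov's inequality on the squared Frobenius quantity (followed by $\|\cdot\|_2 \leq \|\cdot\|_F$) delivers the operator-norm bound with constant probability once $m = \Omega(r^2/\eps^2)$. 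This is the Clarkson--Woodruff analysis, with the sharper moment bound appearing in~\citep{nelson2013osnap,meng2013low}. For (b), the analogous CountSketch second-moment calculation yields $\mathbb{E}\|(SX)^\top (SY) - X^\top Y\|_F^2 = \O(\|X\|_F^2 \|Y\|_F^2 / m)$ for any fixed pair $X, Y$, so $m = \Omega(1/\eps^2)$ already suffices for (b); taking $m = \O(r^2/\eps^2)$ meets both requirements and a union bound over two constant-probability events preserves constant overall success. The runtime claim is immediate from sparsity: each column of $S$ has exactly one nonzero entry in $\{\pm 1\}$, so forming $SA$ amounts to streaming through the $\nnz(A)$ nonzeros of $A$ and, for each nonzero $A_{ij}$, adding $\pm A_{ij}$ to the unique row of $SA$ indexed by the hash destination of column $i$, giving $\O(\nnz(A))$ time with only sign flips and additions.

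The main obstacle is the subspace embedding step (a). The quadratic $r^2$ dependence in $m$ is forced by CountSketch (Gaussian or SRHT sketches would achieve linear $r$), and obtaining the tight $\O(r^2/\eps^2)$ bound requires a careful second-moment computation combined with a Markov-style concentration argument, since CountSketch's heavy-tailed per-coordinate behavior precludes a direct matrix-Chernoff shortcut. Carrying out that moment calculation --- in particular checking that the cross terms arising from matched-bucket pairs of rows of $U$ aggregate to only $\O(\|U\|_F^4/m)$ rather than something larger --- is precisely the technical content of~\citep{clarksonwoodruff,nelson2013osnap,meng2013low}, which I would invoke rather than re-derive.
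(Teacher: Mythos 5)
The paper offers no proof of this lemma: it is imported verbatim from \citep{clarksonwoodruff}, with the sharpened row count coming from \citep{nelson2013osnap,meng2013low}. So the only meaningful comparison is against the argument in those references, and that is exactly the route you outline --- reduce the affine-embedding property to per-structure guarantees for CountSketch established by second-moment calculations, and get the $\O(\nnz(A))$ runtime from the one-nonzero-per-column structure. Your runtime paragraph is complete as stated.

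There is, however, one genuine omission in your reduction. Writing $Y = X - X^*$ and $B^* = B - AX^*$ with $A^\top B^* = 0$, one has
\begin{align*}
\norm{S(AX-B)}_F^2 = \norm{SAY}_F^2 - 2\,\mathrm{tr}\pth{Y^\top A^\top S^\top S B^*} + \norm{SB^*}_F^2,
\qquad
\norm{AX-B}_F^2 = \norm{AY}_F^2 + \norm{B^*}_F^2 .
\end{align*}
Your condition (a) controls the first term and (b) controls the cross term (via Cauchy--Schwarz and AM--GM), but neither controls $\norm{SB^*}_F^2$: taking $X = X^*$ shows the affine-embedding definition forces $\norm{SB^*}_F^2 = (1\pm\eps)\norm{B^*}_F^2$, and (b) with $U^\top B^* = 0$ only says $SB^*$ is nearly orthogonal to $\col(SU)$, not that its norm is preserved. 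The reduction therefore needs a third ingredient, Frobenius-norm preservation of the fixed matrix $B^*$ --- precisely Lemma~\ref{lem:fast-estimate} of this paper (Lemma 40 of \citep{clarksonwoodruff}) --- which holds for CountSketch with $\O(1/\eps^2)$ rows by the same second-moment machinery you already invoke. A smaller quibble: since $\norm{U}_F^2 = \rank(A)$, the approximate-matrix-product bound $\E\norm{(SU)^\top SB^* - U^\top B^*}_F^2 = \O(\norm{U}_F^2\norm{B^*}_F^2/m)$ requires $m = \Omega(\rank(A)/\eps^2)$ rather than $\Omega(1/\eps^2)$ to reach accuracy $\eps\norm{B^*}_F$; this is harmless because $m = \O(\rank(A)^2/\eps^2)$ dominates it, but the claim as written is off by a factor of $\rank(A)$. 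With the missing condition added, your argument matches the cited proof.
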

Following lemma is from~\citep{sarlos2006improved,clarksonwoodruff}.
\begin{lemma}\label{lem:mult-reg}
Suppose that $A\in \mathbb{R}^{n\times d}$ and $B\in \mathbb{R}^{n\times d'}$. Moreover, let $S\in \mathbb{R}^{m\times n}$ be a CountSketch with $m = \frac{\rank(A)^2}{\eps^2}$. Let $\tilde{X} = \argmin_{\rankk X} \left\| SAX - SB \right\|^2_F$. Then,
\begin{enumerate}[leftmargin=*]
    \item\label{itm:reduction} With constant probability, $\left\| A\tilde{X} - B \right\|^2_F \leq (1+\eps) \min_{\rankk X} \left\| AX - B \right\|^2_F$. In other words, in $\O(\nnz(A) + \nnz(B) + m(d+d'))$ time, we can reduce the problem to a smaller (multi-response regression) problem with $m$ rows whose optimal solution is a $(1+\eps)$-approximate solution to the original instance.
    \item\label{itm:runtim} The $(1+\eps)$-approximate solution $\tilde{X}$ can be computed in time $\O(\nnz(A) + \nnz(B) + mdd' + \min(m^2 d, md^2))$. 
\end{enumerate}
\end{lemma}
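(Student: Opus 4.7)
The plan is to derive both parts from the affine $\eps$-embedding property of CountSketch stated in Lemma~\ref{lem:CS-affine}: with $m = O(\rank(A)^2/\eps^2)$ rows, a single CountSketch $S$ is (with constant probability) a simultaneous affine $\eps$-embedding for the pair $(A,B)$, so that $\|S(AX - B)\|_F^2 = (1 \pm \eps)\|AX - B\|_F^2$ holds for \emph{every} $X$ of compatible shape (Definition~\ref{def:affine-embedding}). This single embedding guarantee is the workhorse that drives both the approximation and the runtime parts of the lemma.

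For Part~\ref{itm:reduction}, let $X^{\ast} \in \argmin_{\rankk X}\|AX - B\|_F^2$. Since the rank-$k$ constraint set is a subset of matrices of compatible shape, the embedding inequality applies to both $X^{\ast}$ and $\tilde{X}$, and $\tilde{X}$ is optimal in the sketched problem over this same rank-$k$ set. Chaining the three estimates yields
\begin{align*}
(1 - \eps)\|A\tilde{X} - B\|_F^2 \;\leq\; \|S(A\tilde{X} - B)\|_F^2 \;\leq\; \|S(AX^{\ast} - B)\|_F^2 \;\leq\; (1 + \eps)\|AX^{\ast} - B\|_F^2.
\end{align*}
Rearranging and absorbing $\frac{1+\eps}{1-\eps} = 1 + O(\eps)$ into a rescaled $\eps \leftarrow \eps/3$ delivers the claimed $(1+\eps)$-approximation, and the reduction is to a regression instance with only $m$ rows as claimed.

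For Part~\ref{itm:runtim}, I would account for the cost of each stage of the reduction-plus-solve pipeline. Forming $SA$ and $SB$ takes $O(\nnz(A) + \nnz(B))$ because each column of a CountSketch has exactly one signed nonzero, so applying $S$ amounts to a signed sum over the nonzeros of the input. The sketched problem has an $m \times d$ design matrix $SA$ and an $m \times d'$ target $SB$; a thin QR factorization $SA = QR$ costs $O(\min(m^2 d, m d^2))$ by applying Householder reflections along whichever dimension is smaller. Multiplying $Q^{\top}(SB)$ and back-substituting against $R$ costs $O(m d d')$, and if the rank-$k$ constraint must be enforced we project onto the best rank-$k$ subspace via a truncated SVD on an $O(\min(m,d)) \times d'$ matrix whose cost is subsumed by the terms already paid. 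Summing gives $O(\nnz(A) + \nnz(B) + m d d' + \min(m^2 d, m d^2))$, matching the stated bound.

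The main obstacle I anticipate is confirming that Definition~\ref{def:affine-embedding}'s ``for all $X$ of appropriate shape'' quantifier covers the rank-$k$ comparison we need; this is immediate since both $\tilde{X}$ and $X^{\ast}$ are of compatible shape and the two-sided embedding bound is applied only at those two specific matrices, not uniformly over the rank-$k$ variety. A secondary subtlety is verifying that no hidden $k$ appears in the runtime: because the rank-$k$ truncation occurs on a small matrix whose dimensions are dominated by the QR and multi-response terms already tallied, the $k$-dependence collapses into the stated bound without further work.
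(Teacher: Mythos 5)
The paper does not prove Lemma~\ref{lem:mult-reg}; it cites it from \citet{sarlos2006improved} and \citet{clarksonwoodruff}, so there is no in-paper argument to compare against. Your proposal is the standard proof and is correct: Part~\ref{itm:reduction} follows from the sandwich $(1-\eps)\|A\tilde{X}-B\|_F^2 \leq \|S(A\tilde{X}-B)\|_F^2 \leq \|S(AX^{\ast}-B)\|_F^2 \leq (1+\eps)\|AX^{\ast}-B\|_F^2$ together with a rescaling of $\eps$, and Part~\ref{itm:runtim} from the $\O(\nnz(A)+\nnz(B))$ cost of applying a CountSketch plus the cost of solving the $m$-row rank-constrained regression. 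One point deserves sharper wording: your closing remark that the embedding bound is ``applied only at those two specific matrices, not uniformly'' undersells what you actually need. Since $\tilde{X}$ is defined in terms of $S$, a pointwise guarantee at $\tilde{X}$ would be circular; the argument genuinely requires the uniform ``for all $X$'' guarantee of Definition~\ref{def:affine-embedding} (which Lemma~\ref{lem:CS-affine} does supply with constant probability), and only then may you instantiate it at the $S$-dependent matrix $\tilde{X}$. Your runtime accounting is also right that the truncated SVD enforcing the rank-$k$ constraint acts on a matrix of shape at most $\min(m,d)\times d'$ in the coordinates of $\colsp(SA)$, whose $\O(\min(m,d)\cdot d'\cdot \min(m,d,d')) \leq \O(mdd')$ cost is absorbed by the stated bound.
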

Now, we turn our attention to the time-optimal sketching algorithm of LRA. The next lemma is known,
though we include it for completeness \citep{avron2016sharper}: 
\begin{lemma}\label{lem:two-sketch}
Suppose that $S\in \mathbb{R}^{m_S \times n}$ and $R \in \mathbb{R}^{m_R \times d}$ are sparse affine $\eps$-embedding matrices for $(A_k, A)$ and $((SA)^\top, A^\top)$, respectively. Then, 
\begin{align*}
\min_{\rankk X} \left\| AR^\top X SA - A\right\|_F^2 \leq (1+ \eps) \left\| A_k - A\right\|_F^2
\end{align*}
\end{lemma}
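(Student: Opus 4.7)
The proof plan is a two-stage reduction, each stage consuming one of the two affine embedding hypotheses at a $(1+O(\eps))$ cost. In Stage~1, I would use the affine embedding of $S$ for $(A_k,A)$ to show that the best rank-$k$ matrix inside $\row(SA)$ already approximates $A$ as well as $A_k$ does up to a factor of $(1+\eps)$. In Stage~2, I would use the affine embedding of $R$ for $((SA)^{\top},A^{\top})$ to show that such a rank-$k$ approximant may in addition be forced into $\col(AR^{\top})$, i.e., expressed as $AR^{\top}X\cdot SA$, at a second $(1+\eps)$ cost. Composing the two losses and rescaling $\eps$ then recovers the displayed $(1+\eps)$ bound.

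\textbf{Stage~1.} I would take the explicit witness $Y^{\star}=A_k(SA_k)^{\dagger}\in\mathbb{R}^{n\times m_S}$, which has $\rank(Y^{\star})\leq k$. Note that $Y^{\star}SA = A_k\cdot[(SA_k)^{\dagger}SA]$, and $(SA_k)^{\dagger}SA$ is precisely the minimizer of the sketched regression $\min_Z\|SA_kZ-SA\|_F^2$. The affine embedding hypothesis on $S$ for $(A_k,A)$ implies, via the standard ``solve in the sketch and transport the error back out of the sketch'' chain (exactly as in Lemma~\ref{lem:mult-reg}), that $\|A_k(SA_k)^{\dagger}SA-A\|_F^2\leq\frac{1+\eps}{1-\eps}\min_Z\|A_kZ-A\|_F^2$. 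The unsketched minimum equals $\|A-A_k\|_F^2$, attained at $Z=I$, since $\col(A_k)=\col(U_k)$ and projecting $A$ onto $\col(U_k)$ yields exactly $A_k$. Hence $\min_{\rankk Y}\|YSA-A\|_F^2\leq(1+O(\eps))\|A-A_k\|_F^2$.

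\textbf{Stage~2.} Transposing inside the affine embedding hypothesis on $R$ gives, for every $Y\in\mathbb{R}^{n\times m_S}$, the identity $\|(YSA-A)R^{\top}\|_F^2=(1\pm\eps)\|YSA-A\|_F^2$. Let $\hat{Y}$ denote the minimizer of the $R$-sketched rank-$k$ problem $\min_{\rankk Y}\|Y(SAR^{\top})-AR^{\top}\|_F^2$. Re-running the ``solve, transport back'' chain with this $R$-embedding yields $\|\hat{Y}SA-A\|_F^2\leq(1+O(\eps))\min_{\rankk Y}\|YSA-A\|_F^2$, which by Stage~1 is at most $(1+O(\eps))\|A-A_k\|_F^2$. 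To cast $\hat{Y}$ in the required form $AR^{\top}X^{\star}$, I would invoke the standard projection trick: the Pythagorean split of $\hat{Y}SAR^{\top}-AR^{\top}$ into its component in $\col(AR^{\top})$ and its orthogonal complement (noting that the RHS $AR^{\top}$ already lies in $\col(AR^{\top})$) shows that replacing $\hat{Y}$ by the projection of $\hat{Y}$ onto $\col(AR^{\top})$ cannot increase the $R$-sketched objective and cannot increase the rank. So WLOG $\hat{Y}\in\col(AR^{\top})$; taking a rank decomposition $\hat{Y}=MN$ with $M\in\mathbb{R}^{n\times k}$ and writing $M=AR^{\top}M'$ (possible since $\col(M)\subseteq\col(AR^{\top})$) produces $X^{\star}=M'N$ of rank at most $k$ with $AR^{\top}X^{\star}=\hat{Y}$, giving the lemma.

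\textbf{Main obstacle.} The delicate part is the projection step at the end of Stage~2: I need that projecting $\hat{Y}$ onto $\col(AR^{\top})$ genuinely harms neither the rank nor the $R$-sketched objective, which relies crucially on $R$ appearing on the same side as the RHS $AR^{\top}$ so that the Pythagorean decomposition lines up, and that the resulting $X^{\star}$ can be chosen with $\rank(X^{\star})\leq k$ despite $AR^{\top}$ possibly being rank-deficient (handled by taking a rank decomposition of $\hat{Y}$ rather than a pseudoinverse). Everything else is routine bookkeeping of the two $(1\pm\eps)$ losses, which may be absorbed into a single $(1+\eps)$ after rescaling $\eps$ by an absolute constant.
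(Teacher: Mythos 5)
Your proof is correct and follows essentially the same two-stage route as the paper's: Stage~1 is the paper's argument that $A_k(SA_k)^{\dagger}SA$ witnesses a $(1+\eps)$-approximate rank-$k$ matrix in $\row(SA)$, and Stage~2 is the same second application of the affine-embedding ``solve in the sketch, transport back'' argument with $R$. The only cosmetic difference is that you force the second-stage minimizer into $\col(AR^{\top})$ via an explicit orthogonal projection plus a rank decomposition, whereas the paper reads the containment $\col(Y)\subseteq\col(AR^{\top})$ directly off the normal equations for the sketched problem; both are valid.
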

\begin{proof}
Consider the following multiple-response regression problem:
\begin{align}\label{eq:simple-regression}
\min_{\rankk X} \left\|A_k X - A\right\|_F^2.
\end{align}
Note that since $X = I_k$ is a feasible solution to  Eq.~\eqref{eq:simple-regression}, $\min_{\rankk X} \left\|A_k X - A\right\|_F^2 = \left\|A_k - A\right\|_F^2$. 
Let $S \in \mathbb{R}^{m_S\times n}$ be a sketching matrix that satisfies the condition of Lemma~\ref{lem:mult-reg} (Item~\ref{itm:reduction}) for $A:= A_k$ and $B := A$. 
By the normal equations, the rank-$k$ minimizer of $\left\|SA_k X - SA\right\|_F^2$ is $(SA_k)^+ SA$.
Hence,
\begin{align}\label{eq:rowsp-SA}
\left\|A_k (SA_k)^+SA - A\right\|_F^2
&\leq  (1+ \eps)\left\|A_k - A\right\|_F^2, 
\end{align}
which in particular shows that a $(1+\eps)$-approximate rank-$k$ approximation of $A$ exists in the row space of $SA$. In other words,
\begin{align}\label{eq:sketch-s}
\min_{\rankk X} \left\|XSA - A\right\|_F^2 \leq (1 + \eps) \left\|A_k - A\right\|_F^2.
\end{align}
Next, let $R\in \mathbb{R}^{m_R\times d}$ be a sketching matrix that satisfies the condition of Lemma~\ref{lem:mult-reg} (Item~\ref{itm:reduction}) for $A := (SA)^\top$ and $B := A^\top$. 
Let $Y$ denote the $\rankk$ minimizer of $\left\|R (SA)^\top X^\top - R A^\top\right\|_F^2$. Hence,
\begin{align}
\left\|(SA)^\top Y^\top - A^\top\right\|_F^2 &\leq (1+\eps) \min_{\rankk X} \left\|XSA - A\right\|_F^2 &&\rhd\text{Lemma~\ref{lem:mult-reg} (Item~\ref{itm:reduction})} \nonumber \\
&\leq (1 + \O(\eps)) \left\|A_k - A\right\|_F^2 &&\rhd\text{Eq.~\eqref{eq:sketch-s}}\label{eq:second-step}
\end{align}
Note that by the normal equations, again $\rowsp(Y^\top) \subseteq \rowsp(R A^\top)$ and we can write $Y = AR^\top Z$ where $\rank(Z)=k$. Thus,
\begin{align*}
\min_{\rankk X} \left\| A R^\top X S A - A\right\|_F^2
\leq \left\| A R^\top Z SA - A\right\|_F^2 &=\left\|(S A)^\top Y^\top - A^\top\right\|_F^2 &&\rhd Y = AR^\top Z \\
&\leq (1 + \O(\eps))\left\|A_k - A\right\|_F^2 &&\rhd \text{Eq.~\eqref{eq:second-step}} 
\end{align*}
\end{proof}

\begin{lemma}[\citep{avron2016sharper}; Lemma~27]\label{lem:main}
For $C \in \mathbb{R}^{p\times m'}, D\in \mathbb{R}^{m\times p'}, G\in \mathbb{R}^{p\times p'}$, the following problem 
\begin{align}
\min_{\rankk Z} \left\| C Z D - G \right\|_F^2 \label{eq:min-Z}
\end{align}
can be solved in $\O(pm'r_C + p'mr_D + pp'(r_D+r_C))$ time, where $r_C = \rank(C) \leq \min\{m', p\}$ and $r_D = \rank(D) \leq \min\{m,p'\}$.
\end{lemma}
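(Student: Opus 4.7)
The plan is to reduce the constrained regression $\min_{\rankk Z} \|CZD - G\|_F^2$ to an unconstrained rank-$k$ approximation of a small $r_C \times r_D$ matrix, and then invoke the Eckart--Young theorem on that reduced problem. The reduction goes through compact rank-revealing factorizations of $C$ and $D$ together with the unitary invariance of the Frobenius norm; the runtime budget will then be a simple accounting step.

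First, I would compute factorizations $C = U_C T_C$ with $U_C \in \mathbb{R}^{p \times r_C}$ having orthonormal columns and $T_C \in \mathbb{R}^{r_C \times m'}$ of full row rank, and $D = T_D U_D^\top$ with $U_D \in \mathbb{R}^{p' \times r_D}$ having orthonormal columns and $T_D \in \mathbb{R}^{m \times r_D}$ of full column rank. Rank-revealing QR produces these in $O(pm' r_C)$ and $O(p'm r_D)$ time respectively, which accounts for the first two terms of the claimed runtime. Extending $U_C, U_D$ to orthogonal bases of $\mathbb{R}^p, \mathbb{R}^{p'}$ and using unitary invariance gives
\begin{align*}
\|CZD - G\|_F^2
&= \|U_C T_C Z T_D U_D^\top - G\|_F^2 \\
&= \|T_C Z T_D - U_C^\top G U_D\|_F^2 + \gamma,
\end{align*}
where $\gamma$ depends only on $G$ and the projections of $G$ onto the complements of $\col(U_C)$ and $\col(U_D)$, hence is independent of $Z$.

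Next, I substitute $Y = T_C Z T_D$. Because $T_C$ admits a right inverse $T_C^{\dagger}$ and $T_D$ a left inverse $T_D^{\dagger}$, the linear map $Z \mapsto T_C Z T_D$ is surjective onto $\mathbb{R}^{r_C \times r_D}$ and preserves rank, so the problem becomes $\min_{\rankk Y \in \mathbb{R}^{r_C \times r_D}} \|Y - U_C^\top G U_D\|_F^2$, whose minimizer is $Y^\ast = [U_C^\top G U_D]_k$ by Eckart--Young. I would then recover $Z^\ast = T_C^{\dagger} Y^\ast T_D^{\dagger}$ in factored form $(T_C^{\dagger} Z_L')(Z_R' T_D^{\dagger})$ using the low-rank factors $Z_L' Z_R'$ of the truncated SVD and back-substitution against the triangular factors from the QR step, mirroring lines 3--4 of Algorithm~\ref{alg:lowrank-sketch}.

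For the runtime, forming $U_C^\top G U_D$ by computing $U_C^\top G$ first in $O(pp' r_C)$ time and then right-multiplying by $U_D$ in $O(p' r_C r_D)$ time (or in the reverse order when $r_D < r_C$) fits within $O(pp'(r_C + r_D))$. The truncated SVD of the $r_C \times r_D$ matrix costs $O(r_C r_D \min(r_C, r_D))$, which is absorbed into the same bound since $r_C \leq p$ and $r_D \leq p'$. The main obstacle I expect is justifying the rank-preservation step cleanly: simply relaxing $Z$ to $Y$ without a compact factorization would only give a lower bound on the objective, since a rank-$k$ $Y$ need not pull back to a rank-$k$ $Z$. One must use surjectivity of the compact map — guaranteed by the full row rank of $T_C$ and full column rank of $T_D$ — to transfer the rank constraint faithfully. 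A secondary bookkeeping concern is choosing the order of multiplication in the formation of $U_C^\top G U_D$ so that the $O(pp'(r_C+r_D))$ bound is tight in both regimes $r_C \leq r_D$ and $r_C \geq r_D$.
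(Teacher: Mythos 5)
Your proof is correct and follows essentially the same route as the cited source (and as embodied in Algorithm~\ref{alg:lowrank-sketch} of this paper): rank-revealing QR factorizations $C = U_C T_C$ and $D = T_D U_D^\top$, unitary invariance to reduce to $\min_{\rankk Y}\|Y - U_C^\top G U_D\|_F^2$, Eckart--Young on the small $r_C \times r_D$ matrix, and back-substitution through the triangular factors. You also correctly flag the one subtle point---that surjectivity of $Z \mapsto T_C Z T_D$ (from full row rank of $T_C$ and full column rank of $T_D$) is what makes the rank-$k$ constraint transfer in both directions rather than yielding only a lower bound---and the runtime accounting matches the stated $\O(pm'r_C + p'mr_D + pp'(r_C+r_D))$ bound.
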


\begin{lemma}\label{lem:learned-sketch-runtime}
Let $S\in \mathbb{R}^{m_S\times d}$, $R\in \mathbb{R}^{m_R\times d}$ be CountSketch (CS) matrices such that 
\begin{align}\label{eq:assumption}
    \min_{\rankk X} \left\| A R^\top X SA - A\right\|^2_F \leq (1+\gamma)\left\| A_k - A\right\|_F^2.
\end{align}
Moreover, let $V \in \mathbb{R}^{{m_R^2 \over \beta^2} \times n}$, and $W \in \mathbb{R}^{{m_S^2 \over \beta^2} \times d}$ be CS matrices. Then, Algorithm~\ref{alg:lowrank-sketch} gives a $(1 + \O(\beta + \gamma))$-approximation in time $\nnz(A) + \O(\frac{m_S^4}{\beta^2} + \frac{m_R^4}{\beta^2} + \frac{m_S^2 m_R^2 (m_S + m_R)}{\beta^4} + k(n m_S + d m_R))$ with constant probability. 
\end{lemma}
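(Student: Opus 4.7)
The plan is to reduce the problem to a small rank-constrained regression using $V, W$ as affine $\beta$-embeddings, solve the reduction exactly via Algorithm~\ref{alg:lowrank-sketch} (by Lemma~\ref{lem:main}), then combine with the hypothesis~\eqref{eq:assumption} and sum the per-stage runtimes.

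First I would invoke Lemma~\ref{lem:CS-affine}. Since $\rank(AR^\top) \le m_R$ and $\rank((SA)^\top) \le m_S$, the CountSketches $V$ (with $m_R^2/\beta^2$ rows) and $W$ (with $m_S^2/\beta^2$ rows) are each affine $\beta$-embeddings for the corresponding multi-response regression with constant probability; both events hold simultaneously by a union bound. Applying $V$ on the left of $\|AR^\top Z SA - A\|_F^2$ and (by transposition) $W^\top$ on the right yields, uniformly over rank-$k$ $Z$,
\begin{align*}
\|VAR^\top ZSAW^\top - VAW^\top\|_F^2 \;=\; (1\pm O(\beta))\,\|AR^\top Z SA - A\|_F^2.
\end{align*}
Because the affine-embedding guarantee is pointwise over \emph{all} $X$ of matching shape, it immediately covers the constrained feasible set $\{ZSA : \rank(Z)\le k\}$, so any minimizer $\tilde Z$ of the doubly-sketched problem satisfies $\|AR^\top\tilde ZSA - A\|_F^2 \le (1+O(\beta))\min_{\rankk Z}\|AR^\top ZSA - A\|_F^2$, which by~\eqref{eq:assumption} is at most $(1+O(\beta+\gamma))\|A_k - A\|_F^2$.

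Next I would verify that Algorithm~\ref{alg:lowrank-sketch} returns exactly the minimizer of the doubly-sketched problem. Setting $C := VAR^\top$, $D := SAW^\top$, $G := VAW^\top$, lines 1--4 extract orthogonal factors $U_C, U_D$ and invertible blocks $T_C, T_D$, compute the rank-$k$ truncated SVD $[U_C^\top G U_D]_k = Z_L' Z_R'$, and reassemble $Z = Z_L Z_R$. By orthogonal invariance of $\|\cdot\|_F$ together with Eckart--Young, this is precisely the minimizer of $\min_{\rankk Z}\|CZD - G\|_F^2$, which is the content of Lemma~\ref{lem:main}. Combined with the previous paragraph, the algorithm's output $AR^\top \tilde Z SA$ attains $(1+O(\beta+\gamma))\|A_k - A\|_F^2$.

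For the runtime I would account for each step. The four sketch--matrix products $SA, AR^\top, VA, AW^\top$ contribute $O(\nnz(A))$ total, since $S, R, V, W$ are CountSketch matrices. Forming the small dense products $VAR^\top$, $SAW^\top$, and $VAW^\top$ from these thin intermediaries, followed by the orthogonal decompositions of the first two, contribute $O(m_R^4/\beta^2 + m_S^4/\beta^2)$. Computing $U_C^\top G U_D$, whose outer dimensions are $m_R^2/\beta^2$ and $m_S^2/\beta^2$, accounts for the $O(m_S^2 m_R^2(m_S+m_R)/\beta^4)$ term, with the truncated SVD of the resulting small matrix being lower order. Finally, producing $P = AR^\top Z_L \in \mathbb{R}^{n\times k}$ and $Q = Z_R SA \in \mathbb{R}^{k\times d}$ costs $O(k(nm_S + dm_R))$. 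Summing gives the claim. The main obstacle will be the book-keeping---ensuring every multiplication involving $V$ or $W$ uses an already-sketched thin factor (never $A$ itself after the initial $\nnz(A)$ pass) and tracking the effective ranks carefully so the QR/SVD costs stay at the claimed polynomial orders; the approximation analysis itself is a clean two-step application of affine embedding.
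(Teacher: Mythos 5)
Your proposal is correct and follows essentially the same route as the paper: establish that $V$ and $W$ are affine $\beta$-embeddings via Lemma~\ref{lem:CS-affine} (using $\rank(AR^\top)\le m_R$, $\rank((SA)^\top)\le m_S$), combine with the hypothesis~\eqref{eq:assumption} to get the $(1+\O(\beta+\gamma))$ guarantee, invoke Lemma~\ref{lem:main} for the exact solution of the small doubly-sketched problem, and sum the per-stage runtimes. Your write-up is actually more explicit than the paper's (which compresses the approximation argument into a single sentence), and your accounting of each runtime term matches the paper's.
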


\begin{proof}
The approximation guarantee follows from Eq.~\eqref{eq:assumption} and the fact that $V$ and $W$ are affine $\beta$-embedding matrices of $AR^\top$ and $SA$, respectively (see Lemma~\ref{lem:CS-affine}). 

The algorithm first computes $C = V AR^\top, D = S A W^\top, G = VAW^\top$ which can be done in time $\O(\nnz(A))$. 
As an example, we bound the time to compute $C = V A R$. Note that since $V$ is a CS, $V A$ can be computed in $\O(\nnz(A))$ time and the number of non-zero entries in the resulting matrix is at most $\nnz(A)$. 
Hence, since $R$ is a CS as well, $C$ can be computed in time $\O(\nnz(A) + \nnz(V A)) = \O(\nnz(A))$.
Then, it takes an extra $\O((m_S^3 + m_R^3 + m_S^2 m_R^2)/\beta^2)$ time to store $C, D$ and $G$ in matrix form. Next, as we showed in Lemma~\ref{lem:main}, the time to compute $Z$ in Algorithm~\ref{alg:lowrank-sketch} is $\O(\frac{m_S^4}{\beta^2} + \frac{m_R^4}{\beta^2} + \frac{m_S^2 m_R^2 (m_S + m_R)}{\beta^4})$. Finally, it takes $\O(\nnz(A) + k(n m_S + d m_R))$ time to compute $Q = AR^\top Z_L$  and $P = Z_R SA$ and to return the solution in the form of $P_{n\times k}Q_{k\times d}$. Hence, the total runtime is 
\begin{align*}
\O(\nnz(A) + \frac{m_S^4}{\beta^2} + \frac{m_R^4}{\beta^2} + \frac{m_S^2 m_R^2 (m_S + m_R)}{\beta^4} + k(n m_S + d m_R))
\end{align*}
\end{proof}

\section{Missing Proofs of Sketch Monotonicity for $k$-Means}\label{sec:mixedsktech-kmeans}
In this section, we prove a number of ``helper'' theorems that are used to prove the main theorem on the sketch monotonicity of $k$-means: Theorem~\ref{thm:mixedsketch-kmeans}.
For the following ``helper'' theorems, we define some notation and apply the useful notion of {\em projection-cost preserving sketch} introduced by~\cite{cohen2015dimensionality}. Again, let $\pi(\cdot)$ be the projection operator, with its subscript specifying the subspace to project onto. Define $\dist^2(A, \mu)$ as the $k$-means loss given cluster centers $\mu = \{\mu_1, \cdots, \mu_k\}$ and, implicitly, a corresponding clustering $\C$: $\dist^2(A, \mu) = \sum_{j \in [n]} \norm{A_j - \mu_i}_2^2$. The clustering $\C$ should be clear from the context. 
Before describing the full proof of sketch monotonicity for $k$-means, we remark that this property automatically provides a worst-case guarantee for the MixedSkecth method.
\paragraph{Remark.} From the sketch monotonicity property, the fact that mixed sketches have worst-case guarantees follows immediately. This is because sketch monotonicity says that adding rows to a given sketch can only improve its performance. A mixed sketch contains a random sketch, so we can see it as a random sketch with rows added. Hence, its performance is at least as good as the performance of the random sketch alone, which is at least as good as its worst-case guarantees.

\begin{definition}[Projection-cost preserving sketch]
$\tilde{A}$ is a \textit{projection-cost preserving sketch} for $A$ if for any low rank projection matrix $P$ and $c$ not dependent on $P$:
$$(1-\eps)\norm{A - PA}_F^2 \leq \norm{\tilde{A} - P \tilde{A}}_F^2 + c \leq (1+\eps)\norm{A - PA}_F^2$$
\end{definition}

\begin{lemma}\label{lem:assumption_implies_approx_lrd}
The fact that $\norm{A - AVV^{\top}}_2^2 \leq \frac{\eta}{k} \norm{A - A_k}_F^2$ for $SA = U\Sigma V^{\top}$ implies $\norm{A - \hat{A}_k}_F^2 \leq (1 + \eta) \norm{A - A_k}_F^2$ for some rank-$k$ matrix $\hat{A}_k \in \row(SA)$.
\begin{proof}
\begin{align}
\norm{A - A_k VV^{\top}}_F^2 &= \norm{A - A_k}_F^2 + \norm{A_k - A_k VV^{\top}}_F^2 + 2\Tr((A - A_k)(A_k - A_k VV^{\top})^{\top}) \label{e1} \\
&= \norm{A - A_k}_F^2 + \norm{A_k ( I- VV^{\top})}_F^2 \label{e2} \\
&\leq  \norm{A - A_k}_F^2 + k \norm{A_k ( I- VV^{\top})}_2^2 \label{e3} \\
&\leq  \norm{A - A_k}_F^2 + k \norm{A ( I- VV^{\top})}_2^2 \label{e4} \\
&\leq  \norm{A - A_k}_F^2 + k\cdot \frac{\eta}{k} \norm{A - A_k}_F^2 \label{e5} \\
&= (1 + \eta) \norm{A - A_k}_F^2 \nonumber
\end{align}
Finally, this implies there is a $(1+\eta)$-approximate $\rankk$ approximation within $\row(SA)$. \\
\eqref{e1} Taking $\norm{A - A_k VV^{\top}}_F^2 = \Tr\left((A - A_k + A_k - A_k VV^{\top})(A - A_k + A_k - A_k VV^{\top})^{\top}\right)$. \\
\eqref{e2} We note that $\col(A - A_k)  \subseteq \col(A_k)^{\bot}$ and $A_k - A_k VV^{\top} = A_k (I - VV^{\top})$ so $\col(A_k - A_k VV^{\top}) \subseteq \col(A_k)$. This implies that $\Tr((A - A_k)(A_k - A_k VV^{\top})^{\top}) = 0$. \\
\eqref{e3} If the rank of $B$ is at most $k$, then $\norm{B}_F^2 \leq k \norm{B}_2^2$. \\
\eqref{e4} 
Using the facts: $\norm{A_k (I - VV^{\top})}_2^2 = \min_{y = (I - VV^{\top})x, \norm{x}_2^2 = 1} \norm{A_k y}_2^2$ and $\norm{A (I - VV^{\top})}_2^2 = \min_{y = (I - VV^{\top})x, \norm{x}_2^2 = 1} \norm{A y}_2^2$. Then, $\forall y, \norm{Ay}_2^2 = \norm{(A_k + A-A_k) y}_2^2 \geq \norm{A_k y}_2^2$ where the last inequality follows from the Pythagorean theorem. Hence, $\norm{A_k (I- VV^\top)}_2^2 \leq \norm{A(I- VV^\top)}_2^2$. \\
\eqref{e5} Using our assumption. 
\end{proof}
\end{lemma}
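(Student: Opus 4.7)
The plan is to exhibit an explicit rank-$k$ matrix $\hat{A}_k \in \row(SA)$ and bound its error directly. The natural candidate is $\hat{A}_k = A_k V V^{\top}$: it has rank at most $k$ since $A_k$ does, and since $\row(SA) = \row(V^{\top})$, right-multiplication by $V V^{\top}$ projects into $\row(SA)$. With this choice, the goal becomes showing $\norm{A - A_k V V^{\top}}_F^2 \leq (1+\eta)\norm{A - A_k}_F^2$.

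The first step is to split this error via the Pythagorean theorem by writing $A - A_k V V^{\top} = (A - A_k) + (A_k - A_k V V^{\top})$. The first summand has column space contained in $\col(A_k)^{\perp}$ (the tail of the SVD), while the second lies in $\col(A_k)$, so the cross-term vanishes and the error cleanly decomposes as $\norm{A - A_k}_F^2 + \norm{A_k(I - V V^{\top})}_F^2$. It then suffices to bound the second piece by $\eta \norm{A - A_k}_F^2$.

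To reach the hypothesis, which is stated as a \emph{spectral}-norm bound on $A(I - V V^{\top})$, I will first pass from Frobenius to spectral norm using $\norm{B}_F^2 \leq k \norm{B}_2^2$ for any matrix $B$ of rank at most $k$; this applies to $A_k(I - V V^{\top})$. Next I will compare $\norm{A_k(I - V V^{\top})}_2^2$ with $\norm{A(I - V V^{\top})}_2^2$ through the variational definition of the spectral norm: for any unit vector $y$ in $\range(I - V V^{\top})$, the Pythagorean identity $\norm{A y}_2^2 = \norm{A_k y}_2^2 + \norm{(A - A_k) y}_2^2$ (using the orthogonality of $A_k$ and $A - A_k$) implies $\norm{A_k y}_2^2 \leq \norm{A y}_2^2$, and taking the supremum gives $\norm{A_k(I - V V^{\top})}_2^2 \leq \norm{A(I - V V^{\top})}_2^2$. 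Plugging in the hypothesis $\norm{A - A V V^{\top}}_2^2 \leq (\eta/k)\norm{A - A_k}_F^2$ cancels the $k$ picked up in the Frobenius-to-spectral conversion and yields exactly the $(1+\eta)$ bound.

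The step I expect to require the most care is the spectral-norm comparison $\norm{A_k(I - V V^{\top})}_2^2 \leq \norm{A(I - V V^{\top})}_2^2$: this is not a restriction-to-a-subspace identity but an inequality that relies on the orthogonal splitting of the SVD, and one must be careful that the argument sets $y$ to range over a single subspace on both sides. Once this lemma is verified, the remaining steps are a direct chain of inequalities assembling the Pythagorean split, the rank-$k$ $\norm{\cdot}_F$-to-$\norm{\cdot}_2$ estimate, and the given hypothesis.
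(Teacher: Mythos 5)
Your proposal is correct and follows essentially the same route as the paper's proof: the same candidate $\hat{A}_k = A_k V V^{\top}$, the same Pythagorean decomposition with vanishing cross-term, the rank-$k$ Frobenius-to-spectral bound, the comparison $\norm{A_k(I-VV^{\top})}_2^2 \leq \norm{A(I-VV^{\top})}_2^2$ via the variational characterization, and finally the hypothesis. (Your use of the supremum in the variational step is the correct reading; the paper writes $\min$ there, which is a typo.)
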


\begin{corollary}
\label{cor20_equivalent}
Assume we have $A \in \mathbb{R}^{n \times d}$, $j \in \mathbb{Z}^{+}$, $\eta > 0$, and $m = \min(\mathcal{O}(\poly(j/\eta)), d)$. We also have sketch $S \in \mathbb{R}^{m \times n}$ which satisfies $\norm{A - AVV^{\top}}_2^2 \leq \frac{\eta}{j}\norm{A - A_j}_F^2$ for $SA = U \Sigma V^{\top}$. Define $\tilde{A}_m = AVV^{\top}$. Finally, let $X \in \mathbb{R}^{d \times j}$ be a matrix whose columns are orthonormal, and let $Y\in \mathbb{R}^{d\times (d-j)}$ be a matrix with orthonormal columns that spans 
the orthogonal complement of $\col(X)$. Then
\begin{align*}
\norm{A XX^\top - \tilde{A}_m XX^\top}_F^2 \le \eta \cdot \|AY\|_F^2.
\end{align*}
\end{corollary}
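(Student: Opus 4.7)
The plan is to reduce the claim to a spectral-norm bound by exploiting the fact that $X$ has orthonormal columns, and then invoke the hypothesis on $S$ together with the optimality of the truncated SVD. First, since $\tilde{A}_m = AVV^\top$, I would rewrite
\begin{align*}
\|AXX^\top - \tilde{A}_m XX^\top\|_F^2 = \|A(I - VV^\top)XX^\top\|_F^2.
\end{align*}
Using the identity $\|MXX^\top\|_F^2 = \operatorname{tr}(XX^\top M^\top M XX^\top) = \operatorname{tr}(X^\top M^\top M X) = \|MX\|_F^2$, which holds because $X^\top X = I_j$, this equals $\|A(I-VV^\top)X\|_F^2$.

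Second, I would apply the standard inequality $\|BC\|_F^2 \le \|B\|_2^2\,\|C\|_F^2$ with $B = A(I-VV^\top)$ and $C = X$. Since $X$ has $j$ orthonormal columns, $\|X\|_F^2 = j$, giving
\begin{align*}
\|A(I-VV^\top)X\|_F^2 \le j \cdot \|A - AVV^\top\|_2^2 \le j \cdot \tfrac{\eta}{j}\,\|A-A_j\|_F^2 = \eta\,\|A - A_j\|_F^2,
\end{align*}
where the second inequality is the hypothesis on $S$.

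Third, I would close the gap between $\|A-A_j\|_F^2$ and $\|AY\|_F^2$. Since $Y$ has orthonormal columns spanning $\col(X)^\perp$, we have $YY^\top = I - XX^\top$ and hence $\|AY\|_F^2 = \|AYY^\top\|_F^2 = \|A - AXX^\top\|_F^2$. But $XX^\top$ is a rank-$j$ orthogonal projector, so $AXX^\top$ is a rank-$j$ matrix, and by the optimality of the truncated SVD, $\|A - A_j\|_F^2 \le \|A - AXX^\top\|_F^2 = \|AY\|_F^2$. Chaining this with the previous display yields the claimed bound.

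The argument is essentially routine once the two reductions (first collapsing $XX^\top$ to $X$ via orthonormality, then bounding $\|A - A_j\|_F^2$ by $\|AY\|_F^2$ via the Eckart–Young property) are in place. The only subtle point to verify carefully is the identity $\|MXX^\top\|_F^2 = \|MX\|_F^2$ for $X$ with orthonormal columns, and the fact that the hypothesis on $S$ is phrased in spectral norm, which is precisely what is needed to pass from $\|A(I-VV^\top)X\|_F^2$ to the target via the $\|\cdot\|_2 \cdot \|\cdot\|_F$ inequality. I do not anticipate any significant obstacle.
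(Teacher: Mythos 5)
Your proof is correct and follows essentially the same route as the paper's: both extract a factor of $j$ to pass from the Frobenius norm of $A(I-VV^\top)XX^\top$ to $j\|A-AVV^\top\|_2^2$ (the paper via the rank-$j$ bound $\|\cdot\|_F^2 \le j\|\cdot\|_2^2$ and the projection property of $XX^\top$, you via $\|MXX^\top\|_F = \|MX\|_F$ and $\|BC\|_F^2 \le \|B\|_2^2\|X\|_F^2$ with $\|X\|_F^2 = j$), then apply the hypothesis and finish with the Eckart--Young inequality $\|A-A_j\|_F^2 \le \|A - AXX^\top\|_F^2 = \|AY\|_F^2$. These are cosmetically different ways of organizing the same estimates, and every step you give is valid.
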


\begin{proof}
\begin{align}
\norm{AXX^{\top} - \tilde{A}_m XX^{\top}}_F^2 
&= \norm{A(I - VV^{\top}) XX^{\top}}_F^2 \nonumber \\
&\leq j \norm{A(I - VV^{\top}) XX^{\top}}_2^2 \label{eq1} \\
&\leq j \norm{A(I - VV^{\top})}_2^2 \label{eq2}\\
&\leq j \cdot \mathcal{O}(\frac{\eta}{j}) \norm{A - A_j}_F^2 \label{eq3} \\
&= \mathcal{O}(\eta) \sum_{i = j}^{\min(n, d)} \sigma_i^2 \label{eq3_5} \\
&\leq \mathcal{O}(\eta) \norm{AY}_F^2 \label{eq5} 
\end{align}
\eqref{eq1} Note that $\rank(X) = j$, so $\rank(A(I-VV^{\top})XX^{\top}) \leq j$. We use this fact to bound the Frobenius norm by the operator norm. \\
\eqref{eq2} Using the fact that $XX^{\top}$ is a projection matrix. \\
\eqref{eq3} By the assumption that $\norm{A - AVV^{\top}}_2^2 \leq \frac{\eta}{j}\norm{A - A_j}_F^2$. \\
\eqref{eq3_5} Letting $\sigma_i$ be the singular values of $A$. \\
\eqref{eq5} $\sum_{i = j}^{\min(n, d)} \sigma_i^2  = \min_{Y} \norm{AY}_F^2$ for $Y \in \mathbb{R}^{d \times (d-j)}$ with orthonormal columns.  
\end{proof}

\begin{theorem}\label{thm:approx-dist}
Assume we have $A \in \mathbb{R}^{n \times d}$, $j \in \mathbb{Z}^{+}$, $\eta \in (0, 1]$, and $m = \min(\mathcal{O}(\poly(j/\eta)), d)$. We also have sketch $S \in \mathbb{R}^{m \times n}$ which satisfies $\norm{A - AVV^{\top}}_2^2 \leq \frac{\eta}{k}\norm{A - A_k}_F^2$ for $SA = U \Sigma V^{\top}$. 
Let $\tilde{A}_m = AVV^{\top}$. Then, for any non-empty set $\mu$ contained in a $j$-dimensional subspace, we have:
\begin{align*}
\abs{\dist^2(\tilde{A}_m, \mu) + \norm{\tilde{A}_m - A}_F^2 - \dist^2(A, \mu)} \leq \eta \dist^2(A, \mu)
\end{align*}
\end{theorem}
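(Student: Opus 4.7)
I plan to reduce the left-hand side to a single inner-product quantity and then bound it via Corollary~\ref{cor20_equivalent}. First, I would introduce the center-assignment matrix $C \in \mathbb{R}^{n \times d}$ whose $i$-th row equals $\mu_{c(i)}$, so that $\dist^2(M, \mu) = \|M - C\|_F^2$ for any $M$ of the same shape. Expanding the squared Frobenius norms,
\[
\dist^2(\tilde{A}_m, \mu) - \dist^2(A, \mu) = \|\tilde{A}_m\|_F^2 - \|A\|_F^2 + 2\langle A - \tilde{A}_m, C\rangle.
\]
The key observation is that $\tilde{A}_m = AVV^\top$ has rows in $\col(V)$ while $A - \tilde{A}_m = A(I - VV^\top)$ has rows in $\col(V)^\perp$, so the Pythagorean theorem yields $\|A\|_F^2 = \|\tilde{A}_m\|_F^2 + \|A - \tilde{A}_m\|_F^2$. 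Substituting, the additive term $\|\tilde{A}_m - A\|_F^2$ exactly cancels the norm-squared difference, and the entire target quantity collapses to
\[
\dist^2(\tilde{A}_m, \mu) + \|\tilde{A}_m - A\|_F^2 - \dist^2(A, \mu) = 2\langle A - \tilde{A}_m, C\rangle.
\]

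The remaining task is to bound $|2\langle A - \tilde{A}_m, C\rangle|$ by $\eta \dist^2(A, \mu)$. Since every $\mu_i \in \col(X)$, the rows of $C$ lie in $\col(X)$, hence $C = C P_X$ for $P_X = XX^\top$, and cyclicity of the trace gives $\langle A - \tilde{A}_m, C\rangle = \langle (A - \tilde{A}_m) P_X, C\rangle$. Corollary~\ref{cor20_equivalent} directly provides $\|(A - \tilde{A}_m) P_X\|_F^2 \leq \eta \|AY\|_F^2$, and the Pythagorean decomposition of the clustering cost in the $\col(X)$ direction gives $\dist^2(A, \mu) = \|AY\|_F^2 + \|AP_X - C\|_F^2$. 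I will then split $C = AP_X + (C - AP_X)$ and bound each of the resulting inner products separately. For $\langle (A - \tilde{A}_m) P_X, C - AP_X\rangle$, Cauchy-Schwarz together with the two preceding bounds and an AM-GM step produces a contribution of order $\eta \dist^2(A, \mu)$. For $\langle (A - \tilde{A}_m) P_X, AP_X\rangle$, I will write $AP_X = \tilde{A}_m P_X + (A - \tilde{A}_m) P_X$; the row-orthogonality $(A - \tilde{A}_m)\tilde{A}_m^\top = 0$ (which follows from $(I - VV^\top) VV^\top = 0$) and the orthogonality $P_X \cdot YY^\top = 0$ produce cancellations reducing this piece to a multiple of $\|(A - \tilde{A}_m) P_X\|_F^2 \leq \eta \|AY\|_F^2$.

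The main obstacle will be obtaining the tight $\eta$ factor (rather than a loose $\sqrt{\eta}$) in the cross-term bound. A naive Cauchy-Schwarz applied directly to $\langle (A - \tilde{A}_m) P_X, C\rangle$ introduces a factor of $\|C\|_F$, which cannot in general be controlled by $\dist^2(A, \mu)$, so the splitting $C = AP_X + (C - AP_X)$ is essential: it replaces the uncontrolled $\|C\|_F$ by $\|AP_X - C\|_F$, which is at most $\sqrt{\dist^2(A, \mu)}$. Careful exploitation of the row-space orthogonality between $\col(V)$ and $\col(V)^\perp$ must then be used to absorb the residual $\langle (A - \tilde{A}_m)P_X, AP_X\rangle$ term purely in terms of $\|AY\|_F^2$, so that AM-GM can reassemble the two quadratic pieces as a clean $\eta \dist^2(A, \mu)$ bound.
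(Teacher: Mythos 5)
Your opening reduction is correct and clean: with a fixed assignment matrix $C$, the Pythagorean identity $\norm{A}_F^2 = \norm{\tilde{A}_m}_F^2 + \norm{A - \tilde{A}_m}_F^2$ does collapse the target quantity to $2\langle A - \tilde{A}_m, C\rangle = 2\langle (A-\tilde{A}_m)XX^{\top}, C\rangle$, and splitting $C = AXX^{\top} + (C - AXX^{\top})$ is the right move for avoiding the uncontrolled $\norm{C}_F$. The plan breaks, however, at the residual term. Writing $\langle (A-\tilde{A}_m)XX^{\top}, AXX^{\top}\rangle = \norm{(A-\tilde{A}_m)XX^{\top}}_F^2 + \langle (A-\tilde{A}_m)XX^{\top}, \tilde{A}_m XX^{\top}\rangle$, the cancellation you claim for the second summand does not occur: using $XX^{\top} = I - YY^{\top}$ together with $(A-\tilde{A}_m)\tilde{A}_m^{\top} = 0$ gives $\langle (A-\tilde{A}_m)XX^{\top}, \tilde{A}_m XX^{\top}\rangle = -\langle (A-\tilde{A}_m)Y, \tilde{A}_m Y\rangle$, and Cauchy--Schwarz bounds this only by $\norm{(A-\tilde{A}_m)Y}_F\,\norm{\tilde{A}_m Y}_F = O(\norm{AY}_F^2)$ (or, from the $X$ side, by $\sqrt{\eta}\,\norm{AY}_F\cdot\norm{AX}_F$ with $\norm{AX}_F$ uncontrolled) --- a constant, not an $O(\eta)$, multiple of $\dist^2(A,\mu)$. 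This term is not an artifact of your grouping: up to an additive $\norm{(A-\tilde{A}_m)XX^{\top}}_F^2$, your second inner product equals the paper's quantity $\norm{\tilde{A}_m Y}_F^2 + \norm{A-\tilde{A}_m}_F^2 - \norm{AY}_F^2$, i.e., the assertion that $\tilde{A}_m$ is a projection-cost preserving sketch. The paper controls it by invoking Theorem 16 of \citep{cohen2015dimensionality}, whose proof requires splitting $A$ into head and tail $A_k + (A-A_k)$ and using the operator-norm hypothesis on the head cross-terms in an essential way; the orthogonality bookkeeping you describe is not a substitute for that argument, and without it the proof does not close.

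A secondary issue is the one you flag yourself: for $\langle (A-\tilde{A}_m)XX^{\top}, C - AXX^{\top}\rangle$, Cauchy--Schwarz plus unweighted AM--GM with $\norm{(A-\tilde{A}_m)XX^{\top}}_F^2 \leq \eta\norm{AY}_F^2$ yields only $\sqrt{\eta}\,\dist^2(A,\mu)$. The paper's resolution is the weighted inequality (Corollary 21 of \citep{feldman2013}) combined with instantiating Corollary~\ref{cor20_equivalent} at $\eta^2/8$ rather than $\eta$, so that the $(1+4/\eta)$ weight still leaves an $O(\eta)$ total; you would need the same parameter rescaling, since no amount of additional orthogonality will upgrade $\sqrt{\eta}$ to $\eta$ on its own.
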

\begin{proof}
We follow the proof of Theorem 22 in~\citep{feldman2013}, but substitute different analyses in place of Corollaries 16 and 20. The result of~\citeauthor{feldman2013}~involves the {\em best} rank-$m$ approximation of $A$, $A_m$; we will show it for an {\em approximate} rank-$m$ approximation, $\tilde{A}_m$. 

Define $X \in \mathbb{R}^{d \times j}$ with orthonormal columns such that $\col(X) = \Span(\mu)$. Likewise, define $Y \in \mathbb{R}^{d \times (d-j)}$ with orthonormal columns such that $\col(Y) = \Span(\mu)^{\perp}$.
By the Pythagorean theorem:
$$\dist^2(\tilde{A}_m,\mu) = \|\tilde{A}_mY\|_F^2 + \dist^2(\tilde{A}_mXX^T,\mu)$$ and
\begin{equation}\label{eq:acaca}
\dist^2(A,\mu) = \|AY\|_F^2 + \dist^2(AXX^T,\mu).
\end{equation}
Hence,
\begin{align}
    \nonumber& \left|  \left(\dist^2(\tilde{A}_m,\mu) + \norm{A-\tilde{A}_m}_F^2 \right) - \dist^2(A,\mu) \right| \\
 \nonumber=   & \left| \|\tilde{A}_mY\|_F^2  + \dist^2(\tilde{A}_mXX^T,\mu)
+ \left\|A-\tilde{A}_m\right\|_F^2
- \left(\|AY\|_F^2 + \dist^2(AXX^T,\mu) \right) \right|\\
\label{do1}\le & \Big| \|\tilde{A}_mY\|_F^2 + \|A-\tilde{A}_m\|_F^2 - \|AY\|_F^2 \Big| + \left|\dist^2(\tilde{A}_mXX^T,\mu) -\dist^2(AXX^T,\mu)\right| \\
\label{do2}\le & \frac{\varepsilon^2}{8}  \cdot \|AY\|_F^2 + \left| \dist^2(\tilde{A}_mXX^T,\mu) - \dist^2(AXX^T,\mu) \right|\\
\label{do3} \leq& \frac{\varepsilon^2}{8}  \cdot \dist^2(A,\mu) + \left| \dist^2(\tilde{A}_mXX^T,\mu) - \dist^2(AXX^T,\mu) \right| \quad \rhd \text{by Eq.~\eqref{eq:acaca}}\nonumber
\end{align}
\eqref{do1} Triangle inequality. \\
\eqref{do2} Take $\eta$ in Theorem 16 from~\citep{cohen2015dimensionality} as $\eta^2/8$. This theorem implies that $\tilde{A}_m$ is a projection-cost preserving sketch with the $c$ term as $\norm{A - \tilde{A}_m}_F^2$. Specifically, it says $AV$ is a projection-cost preserving sketch, which means $\tilde{A}_m = AVV^{\top}$ is too: $V$ has orthonormal columns so $\norm{(I-P)AV}_F^2 = \norm{(I-P)AVV^{\top}}_F^2$. By Corollary~\ref{cor20_equivalent},
\[
\norm{\tilde{A}_mXX^T-A XX^T}_F^2 \le \frac{\eta^2}{8} \cdot \|AY\|_F^2.
\]
Since $\mu \in \row(X)$, we have $\norm{AY}^2_F \leq \dist^2(A,\mu)$.
Using Corollary 21 from~\citep{feldman2013} while taking $\eta$ as $\eta/4$, $A$ as $\tilde{A}_m XX^T$, and $B$ as $AXX^T$ yields
\begin{equation}
\nonumber | \dist^2(\tilde{A}_mXX^T,\mu)-\dist^2(A XX^T,\mu)| \le \frac{\eta}{4} \cdot \dist^2(A XX^T,\mu) + (1 + \frac{4}{\eta}) \cdot \norm{\tilde{A}_mXX^T-A XX^T}_F^2
\end{equation}
By~\ref{eq:acaca}, $\dist^2(A XX^T,\mu)\leq \dist^2(A,\mu)$.
Finally, we combine the last two inequalities with~\eqref{do3}: 

\begin{align}
\nonumber  &\left|  \left(\dist^2(\tilde{A}_m,\mu) + \norm{A-\tilde{A}_m}_F^2 \right) - \dist^2(A,\mu) \right| \\
	\nonumber \le &  \; \frac{\varepsilon^2}{8}  \cdot \dist^2(A,\mu) + \frac{\eta}{4}\cdot \dist^2(A,\mu) + \frac{\eta^2}{8} \cdot (1+\frac{4}{\eta}) \cdot \dist^2(A,\mu)\\
\nonumber \leq & \; \eta \cdot \dist^2(A,\mu),
\end{align}
where in the last inequality we used the assumption $\eta\leq1$.
\end{proof}

\begin{corollary}\label{cor:diff_centers}
Assume we have $A\in\mathbb{R}^{n\times d}$ and sketch $S \in \mathbb{R}^{m \times n}$ which satisfy $\norm{A - AVV^{\top}}_2^2 \leq \frac{\eta}{k}\norm{A - A_k}_F^2$ for $SA = U \Sigma V^{\top}$. Define $\C=\{C_1,\cdots, C_k\}$ as the optimal $k$-means clustering on $SA$, $\mu_{\C} := \{\mu_{C_1},\cdots, \mu_{C_k}\} \subset \row(SA)$ as the corresponding cluster centers, and $\mu^*_{\C} := \{\mu^*_{C_1}, \cdots, \mu^*_{C_k}\} \subset \mathbbm{R}^{d}$ the optimal cluster centers. For $\eta\in(0,1/3]$, the $k$-means cost of $\C$ with $\mu_{\C}$ as centers is at most $(1+\eta)$ times the $k$-means cost of $\C$ with $\mu_{\C}^*$: 
$$\dist^2(A, \mu_{\C}) \leq (1 + \eta) \dist^2(A, \mu_{\C}^*)$$ 
\end{corollary}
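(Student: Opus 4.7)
The plan is to reduce the corollary to a Pythagorean identity relating the $k$-means cost under the ``sketched'' centers $\mu_{\mathcal{C}}$ to the cost under the true centroid centers $\mu^*_{\mathcal{C}}$, and then bound the resulting gap directly from the spectral-norm hypothesis $\norm{A-AVV^\top}_2^2 \le \tfrac{\eta}{k}\norm{A-A_k}_F^2$. No recourse to Theorem~\ref{thm:approx-dist} seems necessary.

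First I would observe that because $\mathcal{C}$ is obtained by clustering the points projected onto $\row(SA)$, the associated center $\mu_{C_i}\in\row(SA)$ is the centroid of the projected points in cluster $i$. That is, $\mu_{C_i}=\mu^*_{C_i}VV^\top$, where $\mu^*_{C_i}=\tfrac{1}{|C_i|}\sum_{j\in C_i}A_j$ is the unconstrained optimal center. Applying the standard centroid identity $\sum_{j\in C_i}\norm{A_j-\mu_{C_i}}^2=\sum_{j\in C_i}\norm{A_j-\mu^*_{C_i}}^2 + |C_i|\norm{\mu^*_{C_i}-\mu_{C_i}}^2$ and summing over $i$ yields
\[
\dist^2(A,\mu_{\mathcal{C}}) - \dist^2(A,\mu^*_{\mathcal{C}}) \;=\; \sum_{i=1}^{k}|C_i|\,\norm{\mu^*_{C_i}(I-VV^\top)}^2.
\]

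Next I would collect the per-cluster terms into a single Frobenius norm. Let $M\in\mathbbm{R}^{n\times d}$ have $j$-th row equal to $\mu^*_{C_i}$ for $j\in C_i$; then $M=PA$, where $P=\sum_{i=1}^k \tfrac{1}{|C_i|}\mathbbm{1}_{C_i}\mathbbm{1}_{C_i}^\top$ is an $n\times n$ orthogonal projection of rank $k$. Since $M$ has at most $k$ distinct rows, $\rank(M)\le k$, and the display above equals $\norm{M(I-VV^\top)}_F^2=\norm{PA(I-VV^\top)}_F^2$. Using $\norm{B}_F^2\le \rank(B)\norm{B}_2^2$, sub-multiplicativity with $\norm{P}_2\le 1$, and the hypothesis,
\[
\norm{PA(I-VV^\top)}_F^2 \;\le\; k\,\norm{PA(I-VV^\top)}_2^2 \;\le\; k\,\norm{A(I-VV^\top)}_2^2 \;\le\; \eta\,\norm{A-A_k}_F^2.
\]

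Finally, since $M$ itself has rank at most $k$, the $k$-means cost of $\mathcal{C}$ with centroid centers satisfies $\dist^2(A,\mu^*_{\mathcal{C}})=\norm{A-M}_F^2\ge \norm{A-A_k}_F^2$, so chaining gives
\[
\dist^2(A,\mu_{\mathcal{C}})\le \dist^2(A,\mu^*_{\mathcal{C}}) + \eta\,\dist^2(A,\mu^*_{\mathcal{C}}) = (1+\eta)\,\dist^2(A,\mu^*_{\mathcal{C}}).
\]

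The main conceptual step is recognizing that the ``centroid matrix'' $M$ factors as $PA$ for a rank-$k$ orthogonal projection $P$; once this is identified, the rank-to-spectral-norm conversion transparently transfers the spectral-norm bound on $A-AVV^\top$ into Frobenius-norm control on the centroid displacement, and the lower bound $\norm{A-A_k}_F^2\le \dist^2(A,\mu^*_{\mathcal{C}})$ follows because any $k$-means clustering induces a rank-$k$ approximation of $A$. The restriction $\eta\le 1/3$ appears superfluous for this argument; it only ensures that the resulting $(1+\eta)$-bound is a meaningfully small approximation ratio.
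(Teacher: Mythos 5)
Your proof is correct, and it takes a genuinely different and more elementary route than the paper's. The paper derives the corollary by invoking Theorem~\ref{thm:approx-dist} twice with parameter $\eta/3$ (once for $\mu_{\C}$ and once for $\mu^*_{\C}$), sandwiching the quantity $\dist^2(\pi_{\row(SA)}(A),\cdot)+\norm{A-\pi_{\row(SA)}(A)}_F^2$ between the two costs and then bounding the ratio $(1+\eta/3)/(1-\eta/3)\le 1+\eta$; this inherits the projection-cost-preserving-sketch machinery of \cite{cohen2015dimensionality,feldman2013} and is where the restriction $\eta\le 1/3$ (really $\eta\le 1$) enters. You instead compute the gap exactly via the bias--variance (centroid) identity, getting $\dist^2(A,\mu_{\C})-\dist^2(A,\mu^*_{\C})=\norm{PA(I-VV^\top)}_F^2$ for the rank-$k$ clustering projection $P$, and then convert to the spectral norm via $\norm{B}_F^2\le\rank(B)\norm{B}_2^2$ and $\norm{P}_2\le 1$; the final step $\norm{A-A_k}_F^2\le\norm{A-PA}_F^2=\dist^2(A,\mu^*_{\C})$ is exactly the observation that a $k$-clustering induces a rank-$k$ approximation. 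All steps check out: $\mu_{C_i}$ is indeed the centroid of the projected points (projection commutes with averaging, so $\mu_{C_i}=\mu^*_{C_i}VV^\top$), and your argument never uses optimality of $\C$, only that the centers are centroids, which matches how the corollary is applied in Theorem~\ref{thm:mixedsketch-kmeans}. What your approach buys: it is self-contained, it removes the hypothesis $\eta\le 1/3$ (the bound holds for all $\eta>0$), and it makes transparent exactly which quantity the spectral-norm assumption controls, namely the displacement of the centroids under $I-VV^\top$. What the paper's approach buys is uniformity: Theorem~\ref{thm:approx-dist} handles arbitrary center sets $\mu$ lying in a $j$-dimensional subspace (not just centroids of the given clustering), so it is reusable elsewhere in the $k$-means analysis, whereas your argument is specific to centroid centers.
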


\begin{proof}
By using ${\eta \over 3}$ in Theorem~\ref{thm:approx-dist} with $j$ as $k$, 
\begin{align*}
\abs{\dist^2(\pi_{\row(SA)}(A), \mu_{\C}) + \norm{A - \pi_{\row(SA)}(A)}^2_F - \dist^2(A, \mu_{\C})} \leq \frac{\eta}{3} \dist^2(A, \mu_{\C})
\end{align*}
which implies that 
\begin{align}\label{eq:dist-1}  
(1 - {\eta\over 3}) \dist^2(A, \mu_{\C}) \leq \dist^2(\pi_{\row(SA)}(A), \mu_{\C}) + \norm{A - \pi_{\row(SA)}(A)}^2_F
\end{align}
Likewise, by Theorem~\ref{thm:approx-dist} on $\pi_{\row(SA)}(A)$ and $\mu^*_{\C}$ (and taking $j$ as $k$),
\begin{align*}
\abs{\dist^2(\pi_{\row(SA)}(A), \mu_{\C}^*) + \left\|{A - \pi_{\row(SA)}(A)}\right\|_F^2 - \dist^2(A, \mu_{\C}^*)} \leq \frac{\eta}{3} \dist^2(A, \mu_{\C}^*)
\end{align*}
which implies that
\begin{align}\label{eq:dist-2}
\dist^2(\pi_{\row(SA)}(A), \mu_{\C}^*) + \left\|{A - \pi_{\row(SA)}(A)}\right\|_F^2 \leq (1 + {\eta\over 3}) \dist^2(A, \mu_{\C}^*)
\end{align}
By \eqref{eq:dist-1} and~\eqref{eq:dist-2} together, we have:
\begin{align*}
(1 - {\eta\over 3}) \dist^2(A, \mu_{\C}) 
&\leq \dist^2(\pi_{\row(SA)}(A), \mu_{\C}) + \left\|{A - \pi_{\row(SA)}(A)}\right\|_F^2 \\
&\leq \dist^2(\pi_{\row(SA)}(A), \mu_{\C}^*) + \left\|{A - \pi_{\row(SA)}(A)}\right\|_F^2 \\
&\leq (1 + {\eta\over 3}) \dist^2(A, \mu_{\C}^*)
\end{align*}
Now, $\frac{1 + \eta/3}{1 - \eta/3} \leq 1 + \eta$, so we have $\dist^2(A, \mu_{\C}) \leq (1 + \eta) \dist^2(A, \mu_{\C}^*)$.
\end{proof}

\section{Simple Approximate Comparison Approach}\label{sec:approxcheck}
While the empirical performance of sketching algorithms with our proposed learned CountSketch significantly improves upon the empirical performance of the ones with classical CountSketch (see Section~\ref{sec:evaluation}), the worst-case guarantees that are known for the classical CountSketch \textit{may not} hold for the learned CountSketch. 
To provide a ``best of both worlds'' type of guarantee, we give a very simple ApproxCheck approach which can be applied to LRA and $k$-means, and seems to have been missed in prior work. We remark that ApproxCheck has the same asymptotic runtime as the classical sketching algorithms for our problems of interest.

Here, we run the sketching algorithm twice (once with the learned sketch and once with the classical sketch) and obtain two solutions. Next is we compute the value of both solutions, but only approximately. As the focus of this work is on the design of time-optimal learning-based algorithms, if we computed the exact values, the overall time complexity would increase from the computations on full-sized data. Now, we can perform this approximate evaluation by sketching with the classical CountSketch and leveraging its ``affine embedding'' property (see Definition~\ref{def:affine-embedding}). 

\subsection{ApproxCheck for LRA}\label{sec:approxcheck-lrd}
In this section, we give the pseudocode for the ApproxCheck method and prove that the runtime of this method for LRA is of the same order as the classical time-optimal sketching algorithm of LRA.
\begin{algorithm}[h!]
	\begin{algorithmic}[1]
		\REQUIRE learned sketches $S_L, R_L, V_L, W_L$; classical sketches $S_C, R_C, V_C, W_C$; $\beta$; $A \in \mathbbm{R}^{n \times d}$
        \STATE $P_L, Q_L \gets \SKALG_{\LRA}(S_L, R_L, V_L, W_L, A)$, $P_C Q_C \gets \SKALG_{\LRA}(S_C, R_C, V_C, W_C, A)$
		\STATE Let $S \in \mathbb{R}^{ \O({1/\beta^2}) \times n}, R \in \mathbb{R}^{\O({1/\beta^2}) \times d}$ be classical CountSketch matrices
		\STATE $\Delta_L \leftarrow \left\| S \left( P_L Q_L - A \right) R^{\top} \right\|^2_F$,  $\Delta_C \leftarrow \norm{ S \left( P_C Q_C - A \right)  R^{\top}}^2_F$
		%\STATE {\bf if} $\Delta_L \leq \Delta_C$ {\bf then return} $P_L Q_L$; {\bf else return} $P_C Q_C$ 
		\IF {$\Delta_L \leq \Delta_C$} 
			\STATE {\bf return} $P_L Q_L$
		\ENDIF 
		\STATE {\bf return} $P_C Q_C$
	\end{algorithmic}
	\caption{\textsc{LRA ApproxCheck}}
	\label{alg:learned_sketch_lrd}
\end{algorithm}

\begin{theorem}\label{thm:main-low-rank}
Assume we have data $A \in \mathbb{R}^{n\times d}$, learned sketches  $S_{L} \in \mathbb{R}^{\poly({k \over \eps}) \times n}, R_{L}\in \mathbb{R}^{\poly({k \over \eps}) \times d}, V_{L} \in \mathbb{R}^{\poly({k \over \eps}) \times n}, W_{L}\in \mathbb{R}^{\poly({k \over \eps}) \times d}$ which attain a $(1+ \O(\gamma))$-approximation, classical sketches of the same sizes, $S_C, R_C, V_C, W_C$, which attain a $(1 + \O(\eps))$-approximation, and a trade-off parameter $\beta$. Then, Algorithm~\ref{alg:learned_sketch_lrd} attains a $(1 + \beta + \min(\gamma,\eps))$-approximation in $\mathcal{O}(\nnz(A) + (n+d)\poly({k \over \eps}) + {k^4\over \beta^4}\cdot \poly({k\over \eps}))$ time.
\end{theorem}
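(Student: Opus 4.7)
The plan is to split the argument into an approximation bound and a runtime bound.

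For the approximation bound, I would first show that $\Delta_L$ and $\Delta_C$ faithfully track the true costs $\|P_L Q_L - A\|_F^2$ and $\|P_C Q_C - A\|_F^2$. This follows from a two-sided application of the CountSketch Frobenius-norm preservation (Lemma~\ref{lem:fast-estimate}): sketching any fixed $n$-row matrix $M$ on the left by $S \in \mathbb{R}^{O(1/\beta^2) \times n}$ preserves $\|M\|_F^2$ up to $(1\pm\beta)$, and then sketching $(SM)^\top$ on the left by an independent $R \in \mathbb{R}^{O(1/\beta^2) \times d}$ preserves $\|SM\|_F^2$ up to another $(1\pm\beta)$ factor, so $\|SMR^\top\|_F^2 = (1\pm O(\beta))\|M\|_F^2$. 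Instantiating this with $M = P_L Q_L - A$ and $M = P_C Q_C - A$ and union bounding (with a constant-factor amplification of the sketch dimension if needed) shows that with constant probability, both $\Delta_L$ and $\Delta_C$ are $(1 \pm O(\beta))$-approximations of the corresponding true costs.

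Given this, the case analysis is short. Let $\mathrm{OPT} = \|A - A_k\|_F^2$. If $\Delta_L \leq \Delta_C$, the algorithm returns $P_L Q_L$, whose true cost satisfies $\|P_L Q_L - A\|_F^2 \leq (1+O(\beta))\Delta_L \leq (1+O(\beta))\Delta_C \leq (1+O(\beta))^2 \|P_C Q_C - A\|_F^2 \leq (1+O(\beta))^2(1+O(\eps))\mathrm{OPT}$, and independently $\|P_L Q_L - A\|_F^2 \leq (1+O(\gamma))\mathrm{OPT}$ from the learned-sketch guarantee. Taking the smaller of these two upper bounds yields $(1 + O(\beta + \min(\gamma, \eps)))\mathrm{OPT}$, as required. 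The symmetric case where $\Delta_C < \Delta_L$ is handled identically by swapping the roles of the two candidates.

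For the runtime, each call to $\SKALG_{\LRA}$ contributes $O(\nnz(A) + (n+d)\poly(k/\eps) + k^4\poly(k/\eps)/\beta^4)$ by Lemma~\ref{lem:learned-sketch-runtime}, applied with sketch dimensions $\poly(k/\eps)$. Computing the comparison statistics then exploits the sparsity of $S$ and $R$: one forms $SP_L$ in $O(nk)$ time (each row of $P_L$ is hashed into a single row of $SP_L$), multiplies by $Q_L$ in $O(kd/\beta^2)$ time, subtracts $SA$ (computable in $O(\nnz(A))$ time), right-multiplies by $R^\top$ in $O(d/\beta^2)$ time, and finally takes the Frobenius norm squared in $O(1/\beta^4)$ time; the cost of $\Delta_C$ is identical, and all contributions are absorbed into the claimed bound.

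The main obstacle I foresee is ensuring that the case analysis delivers the $\min(\gamma,\eps)$ factor (rather than merely $\max(\gamma,\eps)$ or an additive combination): this crucially requires that in \emph{each} branch we use both the direct per-sketch approximation guarantee of the \emph{returned} candidate, and the cross-bound via $\Delta$ against the \emph{other} candidate, and then take the better of the two upper bounds. A secondary technical point is the union bound guaranteeing that the two-sided Frobenius-norm preservation holds simultaneously for both $P_L Q_L - A$ and $P_C Q_C - A$ under the same draw of $(S,R)$, which is routine.
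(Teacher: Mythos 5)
Your proposal is correct and follows essentially the same route as the paper: a two-sided application of Lemma~\ref{lem:fast-estimate} shows $\Delta_L,\Delta_C$ track the true residual costs up to $(1\pm \mathcal{O}(\beta))$, so the returned candidate's cost is at most $(1+\mathcal{O}(\beta))$ times $\min(\norm{P_LQ_L-A}_F^2,\norm{P_CQ_C-A}_F^2) \le (1+\mathcal{O}(\min(\gamma,\eps)))\norm{A-A_k}_F^2$ (the paper phrases this via $\Gamma_M=\argmin(\norm{S\Gamma_L R},\norm{S\Gamma_C R})$ rather than your two-branch case analysis, but the content is identical), and the runtime comes from Lemma~\ref{lem:learned-sketch-runtime} plus a sparse evaluation of the check. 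One minor accounting slip: your parenthesization forms $(SP_L)Q_L$ in $\mathcal{O}(kd/\beta^2)$ time, which can exceed the stated bound when $\beta$ is very small; computing $Q_LR^{\top}$ first (in $\mathcal{O}(dk)$ time, since $R$ is a CountSketch) and then the $\mathcal{O}(1/\beta^2)\times k$ by $k\times \mathcal{O}(1/\beta^2)$ product gives the check in $\mathcal{O}(\nnz(A)+nk+dk+k/\beta^4)$ time, matching the paper.
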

\begin{proof}\label{proof:main-low-rank}

Let $(P_L, Q_L)$, $(P_C, Q_C)$ be the approximate rank-$k$ approximations of $A$ in factored form using $(S_L, R_L)$ and $(S_O, R_O)$. Then, clearly, 
\begin{align}\label{eq:low-rank-best-sol}
\min(\left\| P_L Q_L - A \right\|_F^2, \left\| P_C Q_C - A \right\|_F^2) 
= (1 + \mathcal{O}(\min(\eps, \gamma))) \left\| A_k - A\right\|_F^2
\end{align}

Let $\Gamma_L = P_L Q_L - A$, $\Gamma_C = P_C Q_C - A$ and $\Gamma_M = \argmin(\left\| S \Gamma_L R\right\|_F, \left\| S \Gamma_C R\right\|_F)$. Then, 
\begin{align*}
\left\| \Gamma_M \right\|_F^2
&\leq (1+\mathcal{O}(\beta)) \left\| S \Gamma_M R\right\|_F^2 &&\rhd\text{by Lemma~\ref{lem:fast-estimate}}\\
&\leq (1 + \mathcal{O}(\beta)) \cdot \min(\left\| \Gamma_L \right\|^2_F , \left\| \Gamma_C \right\|^2_F) \\
&\leq (1 + \mathcal{O}(\beta+ \min(\eps, \gamma))) \left\| A_k - A\right\|_F^2 &&\rhd\text{by Eq.~\eqref{eq:low-rank-best-sol}}
\end{align*}

\textit{Runtime analysis.} By Lemma~\ref{lem:learned-sketch-runtime}, Algorithm~\ref{alg:lowrank-sketch} computes $P_L, Q_L$ and $P_C, Q_C$ in time
$\mathcal{O}(\nnz(A) +\frac{k^{16}(\beta^2 +\eps^2)}{\eps^{24}\beta^4} + \frac{k^3}{\eps^2} (n + \frac{d k^2}{\eps^4}))$. Next, once we have $P_L, Q_L$ and $P_C, Q_C$, it takes $\mathcal{O}(\nnz(A) + {k\over \beta^4})$ time to compute $\Delta_L$ and $\Delta_C$. 
\begin{align*}
     \mathcal{O}(\nnz(A) +\frac{k^{16}(\beta^2 +\eps^2)}{\eps^{24}\beta^4} + \frac{k^3}{\eps^2} (n + \frac{d k^2}{\eps^4}) + \frac{k}{\beta^4}) 
     =\mathcal{O}(\nnz(A) + (n+d + {k^4 \over \beta^4}) \poly({k\over \eps})).
\end{align*}
\end{proof}

To interpret the above theorem, note that when $\eps \gg k (n+d)^{-4}$, we can set $\beta^{-4} = \mathcal{O}(k (n+d)^{-4})$ so that Algorithm~\ref{alg:learned_sketch_lrd} has the same asymptotic runtime as the best $(1+\eps)$-approximation algorithm of LRA with classical CountSketch. Moreover, Algorithm~\ref{alg:learned_sketch_lrd} is a $(1+o(\eps))$-approximation when the learned sketch outperforms classical sketches, $\gamma = o(\eps)$. On the other hand, when the learned sketches perform poorly, $\gamma = \Omega(\eps)$, the worst-case guarantee of Algorithm~\ref{alg:learned_sketch_lrd} remains $(1+\mathcal{O}(\eps))$.

\subsection{ApproxCheck for $k$-Means}
In this section, we give the pseudocode for the ApproxCheck method and prove that the runtime of this method for $k$-means is of the same order as the classical sketching algorithm of $k$-means.

To understand the following ApproxCheck algorithm for $k$-means (Algorithm~\ref{alg:learned_sketch_kmeans}), note that we can write the $k$-means objective succinctly as $\norm{X^{\top}X A - A}_F^2$, where $X \in \mathbbm{R}^{k \times n}$ is a sparse matrix that represents a clustering $\C$. Let $X_{i, j} = \frac{1}{\sqrt{\abs{\C_i}}}$ if $A_j \in \C_i$ and $0$ everywhere else. Now, $X^{\top}X A$ computes the cluster center corresponding to each sample.  

\begin{algorithm}[h!]
	\begin{algorithmic}[1]
		\REQUIRE learned sketch $S_L$; classical sketch $S_C$; trade-off parameter $\beta$; data $A \in \mathbbm{R}^{n \times d}$
        \STATE $\hat{X}_L \gets \SKALG_{k\text{-means}}(S_L, A), \hat{X}_C \gets \SKALG_{k\text{-means}}(S_C, A)$ 
		\STATE $\Delta_L \leftarrow$ clustering cost of $A$ using the set of centers $\hat{X}_L$ ($\norm{\hat{X}_L^{\top}X A - A}_F^2$)
		\STATE $\Delta_C \leftarrow$ clustering cost of $A$ using the set of centers $\hat{X}_C$ ($\norm{\hat{X}_C^{\top}X A - A}_F^2$)
		\IF {$\Delta_L \leq \Delta_C$} 
			\STATE {\bf return} $\hat{X}_L$
		\ENDIF 
		\STATE {\bf return} $\hat{X}_C$
	\end{algorithmic}
	\caption{\textsc{$k$-means ApproxCheck}}
	\label{alg:learned_sketch_kmeans}
\end{algorithm}

\begin{theorem}\label{thm:main-kmeans}
Assume we have data $A \in \mathbb{R}^{n\times d}$, learned CountSketch $S_{L} \in \mathbb{R}^{\poly({k \over \eps}) \times d}$ which yields a $(1+\gamma)$-approximation, and a classical CountSketch $S_{C} \in \mathbb{R}^{\poly({k \over \eps}) \times d}$ which yields a $(1+ \eps)$-approximation. Further, we assume that $\norm{A - AV_LV_L^{\top}}_2^2 \leq \frac{\eps}{k}\norm{A - A_k}_F^2$ where $AS_L = U_L\Sigma_L V_L^\top$.
We also have an $\alpha$-approximate $k$-means algorithm with time complexity $\mathcal{T}(n, d, k)$. Then, Algorithm~\ref{alg:learned_sketch_kmeans} gives an $\left( \alpha (1 + \min(\gamma,\eps)) \right)$-approximation in time $\mathcal{O}(\mathcal{T}(n, \poly(\frac{k}{\eps}), k) + ndk)$.
\end{theorem}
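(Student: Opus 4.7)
The plan is to exploit the fact that Algorithm~\ref{alg:learned_sketch_kmeans} produces two candidate clusterings---$\hat X_L = \SKALG_{k\text{-means}}(S_L, A)$ and $\hat X_C = \SKALG_{k\text{-means}}(S_C, A)$---computes the \emph{exact} $k$-means objective of each on the original $A$, and returns the cheaper one. The proof therefore decomposes cleanly into an approximation-ratio argument and a runtime accounting.

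For the approximation ratio, the two hypotheses on the sketches directly give that $\hat X_L$ has $k$-means cost at most $\alpha(1+\gamma)\cdot f_{k\text{-means}}(A, X^*)$ and $\hat X_C$ has cost at most $\alpha(1+\eps)\cdot f_{k\text{-means}}(A, X^*)$. Concretely, the spectral condition $\|A - AV_L V_L^{\top}\|_2^2 \leq (\eps/k)\|A - A_k\|_F^2$ together with the $\alpha$-approximate inner solver yields the $\alpha(1+O(\eps))$ worst-case bound for $S_L$ via Theorem~\ref{thm:mixedsketch-kmeans} specialized to the trivial extension $\bar S = S_L$, while the learned-sketch hypothesis improves this branch to $\alpha(1+\gamma)$; the analogous bound for $S_C$ follows from its standard CountSketch guarantee. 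Because lines 2--3 compute the \emph{exact} cost of each candidate on $A$ and line 4 returns the minimizer, the final cost is at most $\min\bigl(\alpha(1+\gamma),\,\alpha(1+\eps)\bigr)\cdot f_{k\text{-means}}(A, X^*) = \alpha\bigl(1+\min(\gamma,\eps)\bigr)\cdot f_{k\text{-means}}(A, X^*)$.

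For the runtime, I would account for each stage separately. Running the sketching pipeline for each of $S_L$ and $S_C$---computing $S\cdot A$ in $\mathcal{O}(\nnz(A))$ time via CountSketch, extracting the right singular subspace of the $\poly(k/\eps)$-row sketch, projecting $A$ onto it, and invoking the $\alpha$-approximate $k$-means solver on the resulting $n\times \poly(k/\eps)$ matrix---costs $\mathcal{O}(\mathcal{T}(n, \poly(k/\eps), k))$ per sketch. The exact cost evaluation in lines 2--3 requires, for each candidate, lifting its $k$ centers back to $\mathbb{R}^d$, assigning each of the $n$ points of $A$ to its nearest center, and summing the squared distances; this is $\mathcal{O}(ndk)$ per candidate. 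Summing yields the advertised total $\mathcal{O}(\mathcal{T}(n, \poly(k/\eps), k) + ndk)$.

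The only mild subtlety is confirming that the end-to-end approximation factor on each branch composes as $\alpha(1+\gamma)$ (resp.\ $\alpha(1+\eps)$) rather than introducing cross terms between the sketching error and the $\alpha$-solver error, which is precisely the statement of Theorem~\ref{thm:mixedsketch-kmeans}. Unlike the LRA ApproxCheck analysis of Algorithm~\ref{alg:learned_sketch_lrd}, no $\beta$-trade-off or affine-embedding argument for the comparison step is needed here: since the $k$-means cost of a fixed set of centers is a separable sum over the $n$ points, it can be evaluated exactly in $\mathcal{O}(ndk)$, so the ``return the better of two'' selection incurs no further approximation loss.
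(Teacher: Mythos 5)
Your proposal is correct and follows essentially the same route as the paper: invoke Theorem~\ref{thm:mixedsketch-kmeans} (with the trivial extension) to get the per-branch approximation factors $\alpha(1+O(\gamma))$ and $\alpha(1+O(\eps))$, observe that the exact cost evaluation and ``take the minimum'' step preserve the better of the two, and account for the runtime as $\mathcal{O}(\mathcal{T}(n,\poly(k/\eps),k))$ for the sketched solves plus $\mathcal{O}(ndk)$ for the exact cost comparison. The only cosmetic difference is that the paper explicitly cites Lemma~18 of \cite{cohen2015dimensionality} to justify that $S_C$ satisfies the spectral condition with constant probability, which you fold into ``its standard CountSketch guarantee.''
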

\begin{proof}
Note that for $S_C$ with $\Omega(\frac{k^2}{\eps^2})$ many rows, with constant probability, $\norm{A - AV_CV_C^{\top}}_2^2 \leq \frac{\eps}{k}\norm{A - A_k}_F^2$ where $AS_C = U_C\Sigma_C V_C^\top$ (see Lemma~18 in~\citep{cohen2015dimensionality}).
By Theorem~\ref{thm:mixedsketch-kmeans}, the solution returned by Algorithm~\ref{alg:kmeans-sketch} using $S_C$ and $S_L$ are respectively an $(\alpha (1+O(\eps)))$-approximate and $(\alpha (1+O(\gamma)))$-approximate $k$-means clustering of $A$. Hence, the solution returned by Algorithm~\ref{alg:learned_sketch_kmeans}, which is the best of $\hat{X}_L$ and $\hat{X}_C$ is an $\alpha(1+\O(\min(\eps, \gamma)))$-approximation.

\textit{Runtime analysis.} Since both $S_C$ and $S_L$ project the input data into $\poly(\frac{k}{\eps})$-dimensional subspaces, it takes $\O(\mathcal{T}(n, \poly(\frac{k}{\eps}),k))$ time to compute solutions $\hat{X}_L$ and $\hat{X}_C$. Next, it takes $\O(ndk)$ time to compute the clustering cost of $A$ using the set of centers $\hat{X}_L$ and $\hat{X}_C$. Hence, the total runtime of Algorithm~\ref{alg:learned_sketch_kmeans} is $\mathcal{O}(\mathcal{T}(n, \poly(\frac{k}{\eps}), k) + ndk)$.
\end{proof}
\section{Location Optimization in CountSketch}\label{sec:greedy-init}
In this section, we analyze the performance of the greedy algorithm for finding the positions of non-zero entries of CountSketch on the two distributions mentioned in Theorem~\ref{thm:greedy_general}: {\em spiked covariance model} and {\em Zipfian}. In particular, we prove that on these two distributions, the CountSkecth whose locations are determined via the greedy approach outperforms the classical CountSkecth. 

\paragraph{Preliminaries and notation.} Left-multiplying $A$ by CountSketch $S \in \mathbb{R}^{m \times n}$ is equivalent to hashing the rows of $A$ to $m$ bins with coefficients in $\{\pm 1\}$. The greedy algorithm proceeds through the rows of $A$ (in some order) and decides which bin to hash to, which we can think of as adding an entry to $S$. We will denote the bins by $b_i$ and their summed contents by $w_i$. 

\subsection{Spiked covariance model with sparse left singular vectors.} \label{sec:greedy_spiked_cov}
To recap, every matrix $A \in \mathbb{R}^{n \times d}$ from the distribution $\sA_{sp}(s,\ell)$ has $s < k$ ``heavy'' rows ($A_{r_1}, \cdots, A_{r_s}$) of norm $\ell >1$. The indices of the heavy rows can be arbitrary, but must be the same for all members of the distribution and are unknown to the algorithm.  The remaining rows (called ``light'' rows) have unit norm. 

In other words: let $\mathcal{R} = \{r_1, \ldots, r_s\}$. For all rows $A_i, i \in [n]$:
\begin{align*}
A_i = 
\left\{
	\begin{array}{ll}
		\ell \cdot v_i & \mbox{if } i \in \mathcal{R}\\
		v_i & \mbox{o.w.}
	\end{array}
\right.
\end{align*}
where $v_i$ is a uniformly random unit vector.

We also assume that $S_r, S_g \in \mathbb{R}^{k \times n}$ and the greedy algorithm proceed in a non-increasing row norm order. 

\paragraph{Proof sketch.} \label{sec:spiked-cov-pf-sketch}
First, we show that the greedy algorithm using a non-increasing row norm ordering will isolate heavy rows (i.e., each is alone in a bin). Then, we conclude by showing that this yields a better $k$-rank approximation error when $d$ is sufficiently large compared to $n$. 

We begin with some preliminary observations that will be of use later. 

It is well-known that a set of uniformly random vectors is {\em $\eps$-almost orthogonal} (i.e., the magnitudes of their pairwise inner products are at most $\eps$).
\begin{observation}\label{obsr:dot-product}
Let $v_1, \cdots, v_n \in \mathbb{R}^d$ be a set of random unit vectors. Then with probability $1-1/\poly(n)$, we have $|\langle v_i, v_j \rangle| \leq 2\sqrt{\log n \over d}, \forall \; i < j\leq n$. 
\end{observation}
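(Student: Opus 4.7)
The plan is to prove the stated bound by first obtaining a tail bound on a single inner product $\langle v_i, v_j\rangle$ and then taking a union bound over the $\binom{n}{2}$ pairs. By rotational invariance of the uniform distribution on $\mathbf{S}^{d-1}$, I may condition on $v_i$ and, after an orthogonal change of coordinates, assume $v_i = e_1$. Then $\langle v_i, v_j\rangle$ has the same distribution as the first coordinate of a uniformly random unit vector in $\mathbb{R}^d$.

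To handle that first coordinate, I would use the standard Gaussian representation: write $v_j = g/\|g\|_2$ where $g \sim \mathcal{N}(0, I_d)$. Then $\langle v_i, v_j\rangle = g_1/\|g\|_2$. By standard chi-square concentration, $\|g\|_2^2 \geq d/2$ with probability $1 - e^{-\Omega(d)}$, and a standard Gaussian tail gives $|g_1| \leq \sqrt{2\log(n^{c})}$ with high probability. Combining these yields
\[
\Pr\!\left[\,|\langle v_i, v_j\rangle| \geq t\,\right] \leq 2\exp\!\left(-\tfrac{d t^2}{2}\right) + e^{-\Omega(d)},
\]
which is the standard sub-Gaussian tail bound for coordinates of a uniform vector on the sphere (equivalently, this can be derived directly from Lévy's concentration-of-measure inequality on $\mathbf{S}^{d-1}$).

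Plugging in $t = 2\sqrt{(\log n)/d}$ gives a per-pair failure probability of at most $2\exp(-2\log n) = 2/n^{2}$ (ignoring the lower-order $e^{-\Omega(d)}$ term, which is negligible provided $d = \Omega(\log n)$). Taking a union bound over the at most $\binom{n}{2} \leq n^2/2$ pairs $i < j$ therefore leaves a total failure probability of $O(1/n^{c})$ for a suitable constant $c > 0$; by choosing the implicit constant inside the $2\sqrt{\log n / d}$ slightly larger if desired, one can make the exponent $c$ as large as one wishes, giving the claimed $1-1/\poly(n)$ guarantee.

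The argument is almost entirely routine — no serious obstacle arises. The only mild subtlety is the case where $d$ is very small compared to $\log n$; in that regime the threshold $2\sqrt{\log n / d}$ can exceed $1$ and the statement becomes vacuous (since unit vectors always have inner product in $[-1,1]$), so the bound holds trivially. Everywhere else, the Gaussian tail plus union bound argument above suffices.
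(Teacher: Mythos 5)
The paper offers no proof of this observation at all---it is asserted as ``well-known''---so your argument can only be judged on its own terms. Your overall strategy (rotational invariance, the Gaussian representation $v_j = g/\|g\|_2$, a sub-Gaussian tail for one coordinate of a uniform point on the sphere, then a union bound over pairs) is the standard and correct route, and you are right that the regime $d = O(\log n)$ makes the statement vacuously true.

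However, the quantitative bookkeeping does not close with the constant $2$ that appears in the statement, on two counts. First, conditioning on $\|g\|_2^2 \ge d/2$ only yields $\Pr\left[\,|\langle v_i,v_j\rangle| \ge t\,\right] \le 2\exp(-dt^2/4) + e^{-\Omega(d)}$, not $2\exp(-dt^2/2)$; the stronger tail you display is true, but it comes from the sharper cap-area (L\'evy) estimate rather than from the crude $d/2$ lower bound on $\|g\|_2^2$, so your stated tail does not follow from the steps that precede it. Second, and more importantly, even granting $2\exp(-dt^2/2)$, substituting $t = 2\sqrt{\log n/d}$ gives a per-pair failure probability of $2n^{-2}$, and the union bound over $\binom{n}{2}\le n^2/2$ pairs then yields a total failure probability of at most $1$---vacuous, not the $O(n^{-c})$ you claim. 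To actually obtain $1 - 1/\poly(n)$ one must either enlarge the constant (e.g.\ $t = 2\sqrt{2\log n/d}$ gives per-pair probability $2n^{-4}$ and total $n^{-2}$) or use the refined tail $\Pr[X_1 \ge t] \lesssim e^{-dt^2/2}/(t\sqrt d)$, which at $t=2\sqrt{\log n/d}$ only improves the union bound to $O(1/\sqrt{\log n})$, still not $1/\poly(n)$. So as a proof of the literal statement with the constant $2$ there is a genuine gap; the honest fix is to raise the constant, which is harmless downstream since the paper's later lemmas only require $\bar{\eps}$ to be inverse-polynomial under the hypothesis $d = \Omega(n^4k^4\log n)$.
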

We define $\bar{\eps}=2\sqrt{\log n \over d}$.
\begin{observation}\label{obser:norm}
Let $u_{1}, \cdots, u_{t}$ be a set of vectors such that for each pair of $i < j \leq t$, $|\langle \frac{u_{i}}{\norm{u_i}}, \frac{u_{j}}{\norm{u_j}} \rangle| \leq \eps$, and $g_i, \cdots, g_j \in \{-1, 1\}$. Then,  
\begin{align}\label{eq:norm}
\sum_{i=1}^t\left\| u_i \right\|^2_2 - 2\eps\sum_{i<j \leq t} \left\| u_i \right\|_2 \left\| u_j \right\|_2 \leq \left\| \sum_{i=1}^t g_i u_i \right\|_2^2 \leq \sum_{i=1}^t\left\| u_i \right\|^2_2 + 2\eps\sum_{i<j \leq t} \left\| u_i \right\|_2 \left\| u_j \right\|_2
\end{align}
\end{observation}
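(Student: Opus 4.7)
The plan is to expand the squared norm $\left\| \sum_{i=1}^t g_i u_i \right\|_2^2$ via bilinearity of the inner product and then estimate the cross terms using the almost-orthogonality hypothesis. Concretely, I would write
\begin{align*}
\left\| \sum_{i=1}^t g_i u_i \right\|_2^2
= \sum_{i=1}^t g_i^2 \langle u_i, u_i \rangle + 2\sum_{i<j\le t} g_i g_j \langle u_i, u_j \rangle,
\end{align*}
and then use $g_i^2 = 1$ (since $g_i \in \{-1,1\}$) to simplify the first sum to $\sum_i \|u_i\|_2^2$.

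Next, I would bound the cross terms. The almost-orthogonality hypothesis says $\bigl|\langle u_i/\|u_i\|, u_j/\|u_j\|\rangle\bigr| \leq \eps$, so multiplying through by $\|u_i\|\|u_j\|$ gives $|\langle u_i, u_j\rangle| \leq \eps \|u_i\|_2 \|u_j\|_2$. Since $g_i g_j \in \{-1,1\}$, we also have $|g_i g_j \langle u_i, u_j\rangle| \leq \eps \|u_i\|_2 \|u_j\|_2$, hence $-\eps \|u_i\|_2\|u_j\|_2 \le g_i g_j \langle u_i, u_j\rangle \le \eps \|u_i\|_2\|u_j\|_2$. Summing over pairs $i<j$ and combining with the earlier identity yields both the upper and lower bound in Eq.~\eqref{eq:norm} simultaneously.

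There is essentially no obstacle here: the statement is a routine consequence of the polarization identity together with the definition of almost-orthogonality, and the role of the signs $g_i$ is purely cosmetic (they appear only through $g_i^2=1$ on the diagonal and through an absolute-value bound off the diagonal). The only mild care needed is to make sure one applies the almost-orthogonality hypothesis to the \emph{normalized} vectors and then rescales by $\|u_i\|_2 \|u_j\|_2$ to obtain the bound on $\langle u_i,u_j\rangle$ itself, which is what produces the $\|u_i\|_2 \|u_j\|_2$ weights in the error term of the stated inequality.
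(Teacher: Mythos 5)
Your proof is correct, and it is exactly the routine argument the paper has in mind: the paper states Observation~\ref{obser:norm} without any proof, treating it as immediate from expanding $\left\| \sum_i g_i u_i \right\|_2^2$ bilinearly, using $g_i^2 = 1$ on the diagonal, and bounding each cross term by $\eps \left\| u_i \right\|_2 \left\| u_j \right\|_2$ via the normalized inner-product hypothesis. (Minor terminological quibble: this is just bilinear expansion of the squared norm rather than the polarization identity, but the substance is right.)
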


Next, a straightforward consequence of $\eps$-almost orthogonality is that we can find a QR-factorization of the matrix of such vectors where $R$ (an upper diagonal matrix) has diagonal entries close to $1$ and entries above the diagonal are close to $0$.  

\begin{lemma}\label{lem:almost-orth}
Let $u_1, \cdots, u_t \in \mathbb{R}^d$ be a set of unit vectors such that for any pair of $i<j\leq t$, $|\langle u_i, u_j\rangle| \leq \eps$ where $\eps = O(t^{-2})$. There exists an orthonormal basis $e_1, \cdots, e_t$ for the subspace spanned by $u_1, \cdots, u_t$ such that for each $i\leq t$, $u_i = \sum_{j=1}^i a_{i,j} e_{j}$ where $a_{i,i}^2 \geq 1- \sum_{j=1}^{i-1}j^2 \cdot \eps^2$ and for each $j < i$, $a_{i,j}^2 \leq j^2 \eps^2$.
\end{lemma}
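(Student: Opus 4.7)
}

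The natural approach is to construct $e_1, \ldots, e_t$ by the Gram--Schmidt process applied to $u_1, \ldots, u_t$ in order, and then prove the coefficient bounds by induction on $i$. By construction $e_i \in \Span(u_1, \ldots, u_i)$ and conversely $u_i \in \Span(e_1, \ldots, e_i)$, so we can write $u_i = \sum_{j\le i} a_{i,j}\, e_j$, where $a_{i,j} = \langle u_i, e_j\rangle$ for $j < i$ and $a_{i,i} = \| u_i - \sum_{j<i} a_{i,j}\, e_j \|_2$. The orthonormality of the $e_j$'s gives the Pythagorean identity $1 = \|u_i\|_2^2 = \sum_{j\le i} a_{i,j}^2$, so once we bound $a_{i,j}^2 \le j^2 \eps^2$ for every $j < i$, the bound on $a_{i,i}^2$ follows immediately.

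First I would establish, for each fixed $i$, the bound $|a_{i,j}| = |\langle u_i, e_j\rangle| \le j\eps$ by a nested induction on $j$, using as outer induction hypothesis that the coefficient bounds already hold for all index pairs $(i',j')$ with $i' < i$. The base case $j=1$ uses $e_1 = u_1$ and the hypothesis $|\langle u_i, u_1\rangle| \le \eps$. For the inductive step, I would invert the Gram--Schmidt relation as
\begin{equation*}
e_j \;=\; \frac{1}{a_{j,j}}\Bigl( u_j - \sum_{k<j} a_{j,k}\, e_k \Bigr),
\end{equation*}
take the inner product with $u_i$, and apply the triangle inequality to obtain
\begin{equation*}
|a_{i,j}| \;\le\; \frac{1}{|a_{j,j}|}\Bigl( |\langle u_i, u_j\rangle| + \sum_{k<j} |a_{j,k}|\cdot |\langle u_i, e_k\rangle| \Bigr)
\;\le\; \frac{1}{|a_{j,j}|}\Bigl(\eps + \sum_{k<j} k^2 \eps^2\Bigr),
\end{equation*}
where the second step uses the almost-orthogonality hypothesis on $u_i, u_j$, the outer IH giving $|a_{j,k}|\le k\eps$, and the inner IH giving $|\langle u_i, e_k\rangle|\le k\eps$.

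The main obstacle is controlling the denominator $|a_{j,j}|$ so the induction actually closes with constant $j$ in front of $\eps$. Here the assumption $\eps = O(t^{-2})$ is crucial: from the outer IH, $a_{j,j}^2 \ge 1 - \sum_{k<j} k^2\eps^2 \ge 1 - O(t^3\eps^2) = 1 - O(t^{-1})$, which I would absorb (by choosing the hidden constant in $\eps = O(t^{-2})$ small enough) into $|a_{j,j}| \ge 1/\sqrt{2}$, say. Plugging this into the display above gives $|a_{i,j}| \le \sqrt{2}(\eps + (j-1)\cdot (j-1)^2 \eps^2) \le j\eps$ for all $j\le t$, again using $\eps = O(t^{-2})$ to swallow the higher-order term. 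This closes the inner induction, hence the outer induction, and then $a_{i,i}^2 = 1 - \sum_{j<i} a_{i,j}^2 \ge 1 - \sum_{j<i} j^2 \eps^2$ follows from Pythagoras, completing the proof.
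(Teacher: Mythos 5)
Your proposal is correct and follows essentially the same route as the paper's proof: Gram--Schmidt in the given order, a nested induction (outer on the vector index, inner on the coefficient index), the identical recursion $|a_{i,j}| \le \bigl(\eps + \sum_{k<j} k^2\eps^2\bigr)/|a_{j,j}|$, and the same use of $\eps = O(t^{-2})$ to lower-bound the diagonal entries and absorb the higher-order terms. The only cosmetic difference is that you make the Pythagorean identity $\sum_{j\le i} a_{i,j}^2 = 1$ explicit when deriving the bound on $a_{i,i}^2$, which the paper states more tersely.
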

\begin{proof}
We follow the Gram-Schmidt process to construct the orthonormal basis $e_1, \cdots, e_t$ of the space spanned by $u_1, \cdots, u_t$, by first setting $e_1 = u_1$ and then processing $u_2, \cdots, u_t$, one by one. 

The proof is by induction. We show that once the first $j$ vectors $u_1, \cdots, u_j$ are processed the statement of the lemma holds for these vectors. Note that the base case of the induction trivially holds as $u_1 = e_1$. Next, suppose that the induction hypothesis holds for the first $\ell$ vectors $u_1, \cdots, u_\ell$. 
\begin{claim}
For each $j \leq \ell$, $a_{\ell+1, j}^2 \leq j^2 \eps^2$.
\end{claim}
\begin{proof}
The proof of the claim is itself by induction. Note that, for $j=1$ and using the fact that $|\langle u_1, u_{\ell+1}\rangle| \leq \eps$, the statement holds and $a_{\ell+1, 1}^2 \leq \eps^2$. Next, suppose that the statement holds for all $j\leq i<\ell$. Then using that $|\langle u_{i+1}, u_{\ell+1} \rangle| \leq \eps$,
\begin{align*}
    |a_{\ell+1, i+1}| 
    &\leq (|\langle u_{\ell+1}, u_{i+1}| + \sum_{j=1}^{i} |a_{\ell+1, j}| \cdot |a_{i+1, j}|)/|a_{i+1, i+1}| \\
    &\leq (\eps + \sum_{j=1}^{i} j^2\eps^2)/|a_{i+1, i+1}| 
    \quad\rhd\text{by the induction hypothesis on $a_{\ell+1,j}$ for $j\leq i$} \\
    &\leq (\eps + \sum_{j=1}^{i} j^2\eps^2) / ({1- \sum_{j=1}^{i}j^2 \cdot \eps^2})^{1/2} 
    \quad\rhd\text{by induction hypothesis on $a_{i+1,i+1}$}\\
    &\leq (\eps + \sum_{j=1}^{i} j^2\eps^2) \cdot ({1- \sum_{j=1}^{i}j^2 \cdot \eps^2})^{1/2} \cdot (1+ 2 \cdot \sum_{j=1}^{i} j^2\eps^2) \\
    &\leq (\eps + \sum_{j=1}^{i} j^2\eps^2) \cdot (1+ 2 \cdot \sum_{j=1}^{i} j^2\eps^2) \\
    &\leq \eps ((\sum_{j=1}^{i} j^2\eps)\cdot (1+ 4\eps \cdot \sum_{j=1}^{i} j^2\eps)+1) \\
    &\leq \eps (i+1) \quad\rhd\text{by $\eps = O(t^{-2})$}
\end{align*}
\end{proof}

Finally, since $\left\|u_{\ell+1}\right\|^2_2 =1$, $a_{\ell+1, \ell+1}^2 \geq 1- \sum_{j=1}^{\ell} j^2 \eps^2$.
\end{proof}

\begin{corollary}\label{cor:orth-dec}
Suppose that $\bar{\eps} = O(t^{-2})$. There exists an orthonormal basis $e_1,\cdots, e_t$ for the space spanned by the randomly picked vectors $v_1, \cdots, v_t$, of unit norm, so that for each $i$,
$v_i = \sum_{j=1}^i a_{i,j} e_{j}$ where $a_{i,i}^2 \geq 1- \sum_{j=1}^{i-1}j^2 \cdot \bar{\eps}^2$ and for each $j<i$, $a_{i,j}^2 \leq j^2 \cdot \bar{\eps}^2$. 
\end{corollary}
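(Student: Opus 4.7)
The plan is to derive Corollary~\ref{cor:orth-dec} as an immediate application of Lemma~\ref{lem:almost-orth}, using Observation~\ref{obsr:dot-product} to supply the hypothesis on pairwise inner products. Since $v_1,\ldots,v_t$ are independent, uniformly random unit vectors in $\mathbb{R}^d$, Observation~\ref{obsr:dot-product} tells us that with probability $1-1/\poly(n)$, for every pair $i<j\leq t$ we have $|\langle v_i, v_j\rangle|\leq \bar{\eps} = 2\sqrt{\log n / d}$. This is precisely the $\eps$-almost orthogonality hypothesis of Lemma~\ref{lem:almost-orth} with $\eps := \bar{\eps}$.

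Next, I would verify that the assumption $\bar{\eps} = O(t^{-2})$ of the corollary matches the assumption $\eps = O(t^{-2})$ required by the lemma; since these are the same quantitative condition, no further work is needed. Applying Lemma~\ref{lem:almost-orth} to the vectors $v_1,\ldots,v_t$ then yields an orthonormal basis $e_1,\ldots,e_t$ for $\mathrm{span}(v_1,\ldots,v_t)$ together with coefficients $a_{i,j}$ such that $v_i=\sum_{j=1}^{i} a_{i,j} e_j$ with $a_{i,i}^2\geq 1-\sum_{j=1}^{i-1} j^2\bar{\eps}^2$ and $a_{i,j}^2\leq j^2\bar{\eps}^2$ for every $j<i$, which is exactly the statement of the corollary.

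There is essentially no obstacle here beyond bookkeeping: the corollary is a specialization of the lemma to the random unit-vector setting, obtained by substituting the high-probability bound from Observation~\ref{obsr:dot-product} for the deterministic bound $\eps$ in the lemma's hypothesis. If one wished to be careful, one should note that the conclusion holds on the same high-probability event on which Observation~\ref{obsr:dot-product} holds; but since all downstream uses of the corollary are conditioned on this event, this introduces no additional complication. Hence the proof reduces to invoking Observation~\ref{obsr:dot-product} and then Lemma~\ref{lem:almost-orth}.
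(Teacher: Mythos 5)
Your proposal is correct and matches the paper's proof exactly: the paper likewise derives Corollary~\ref{cor:orth-dec} by combining Observation~\ref{obsr:dot-product} (which gives $\bar{\eps}$-almost orthogonality of the random unit vectors) with Lemma~\ref{lem:almost-orth}. Your additional remark about the conclusion holding on the high-probability event of Observation~\ref{obsr:dot-product} is a correct and slightly more careful bookkeeping point than the paper makes explicit.
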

\begin{proof}
The proof follows from Lemma~\ref{lem:almost-orth} and the fact that the set of vectors $v_1, \cdots, v_t$ is $\bar{\eps}$-almost orthogonal (by Observation~\ref{obsr:dot-product}).
\end{proof}

The first main step is to show that the greedy algorithm (with non-increasing row norm ordering) will isolate rows into their own bins until all bins are filled. In particular, this means that the heavy rows (the first to be processed) will all be isolated. 

We note that because we set $\rank(SA) = k$, the $k$-rank approximation cost is the simplified expression $\norm{AVV^{\top} - A}_F^2$, where $U\Sigma V^{\top} = SA$, rather than $\norm{[AV]_{k} V^{\top} - A}_F^2$. This is just the projection cost onto $\row(SA)$. Also, we observe that minimizing this projection cost is the same as maximizing the sum of squared projection coefficients: 
\begin{align*}
\argmin_{S} \norm{A - AVV^{\top}}_F^2 
&= \argmin_{S} \sum_{i \in [n]} \norm{A_i - (\langle A_i, v_1 \rangle v_1 + \ldots + \langle A_i, v_k \rangle v_k)}_2^2
\\ 
&= \argmin_{S} \sum_{i \in [n]} (\norm{A_i}_2^2 -  \sum_{j \in [k]} \langle A_i, v_j \rangle^2) \\
&= \argmax_{S}\sum_{i \in [n]} \sum_{j \in [k]} \langle A_i, v_j \rangle^2
\end{align*}

In the following sections, we will prove that our greedy algorithm makes certain choices by showing that these choices maximize the sum of squared projection coefficients. 

\begin{lemma}\label{lem:heavy-rows}
For any matrix $A$ or batch of matrices $\sA$, at the end of iteration $k$, the learned CountSketch matrix $S$ maps each row to an isolated bin. In particular, heavy rows are mapped to isolated bins.
\end{lemma}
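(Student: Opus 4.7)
The plan is to prove the lemma by induction on the iteration index $i = 1, 2, \ldots, k$, with induction hypothesis that at the start of iteration $i$, the $i-1$ rows of $A$ processed so far occupy $i-1$ distinct bins (so at least $k - i + 1 \geq 1$ bins remain empty). The base case $i = 1$ is vacuous. Once the induction is complete, the ``in particular'' conclusion about heavy rows follows immediately: under $\sA_{sp}$, the $s < k$ heavy rows have strictly larger norm $\ell > 1$ than the light rows, so the non-increasing norm ordering processes all heavy rows first, forcing each into its own isolated bin.

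For the inductive step, first I would reformulate the greedy criterion. Since $\rank(S) = k$, minimizing $\|A - AVV^\top\|_F^2$ with $SA = U\Sigma V^\top$ is equivalent to maximizing $\sum_{i'} \|\proj_{\rowsp(SA)}(A_{i'})\|_2^2$, as computed at the end of Section~\ref{sec:greedy_spiked_cov}. Writing $w_1, \ldots, w_k$ for the signed bin sums, $\rowsp(SA) = \Span(w_1, \ldots, w_k)$. By Corollary~\ref{cor:orth-dec} applied to the (random) unit vectors underlying the processed rows, and using Observation~\ref{obser:norm} to control the norms of the bin sums, the normalized vectors $w_j/\|w_j\|_2$ form an $O(\bar{\eps}\cdot n)$-almost orthogonal system with high probability. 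Hence the squared projected norm of any row decomposes as $\sum_j \langle A_{i'}, w_j\rangle^2/\|w_j\|_2^2$ up to additive error controlled by powers of $\bar{\eps} = 2\sqrt{(\log n)/d}$.

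Next I would compare the two options available to the greedy algorithm when placing $A_{\sigma(i)}$ (with $\sigma$ the sorting permutation). Placing it in an empty bin yields $w_{\text{new}} = \pm A_{\sigma(i)}$, contributing $\|A_{\sigma(i)}\|_2^2$ to its own projected norm while preserving the full $\|A_{\sigma(j)}\|_2^2$ contribution from every previously isolated bin. Placing $A_{\sigma(i)}$ into the bin already holding $A_{\sigma(j)}$ replaces the bin vector $\pm A_{\sigma(j)}$ by $A_{\sigma(j)} \pm A_{\sigma(i)}$ (the sign optimized by the greedy rule). A direct computation using $\|A_{\sigma(j)} \pm A_{\sigma(i)}\|_2^2 = \|A_{\sigma(j)}\|_2^2 + \|A_{\sigma(i)}\|_2^2 \pm 2\langle A_{\sigma(j)}, A_{\sigma(i)}\rangle$ and almost-orthogonality shows that the combined contribution of this bin to the projected norms of $A_{\sigma(i)}$ and $A_{\sigma(j)}$ is at most
\[
\frac{\|A_{\sigma(i)}\|_2^4 + \|A_{\sigma(j)}\|_2^4}{\|A_{\sigma(i)}\|_2^2 + \|A_{\sigma(j)}\|_2^2} + O\!\pth{\bar{\eps}\,\|A_{\sigma(i)}\|_2^2\,\|A_{\sigma(j)}\|_2^2},
\]
which is smaller than the isolation value $\|A_{\sigma(i)}\|_2^2 + \|A_{\sigma(j)}\|_2^2$ by a deterministic gap of $2\|A_{\sigma(i)}\|_2^2 \|A_{\sigma(j)}\|_2^2/(\|A_{\sigma(i)}\|_2^2 + \|A_{\sigma(j)}\|_2^2) \geq 1$, since every row under $\sA_{sp}$ has norm at least $1$.

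The main obstacle is showing this deterministic gap dominates the accumulated error from almost-orthogonality once we sum over \emph{all} $n$ rows of $A$ (not just $A_{\sigma(i)}$ and $A_{\sigma(j)}$): the other $n - 2$ rows each contribute $O(\bar{\eps}^2)$-scale differences in their projected norms between the collision and isolation scenarios. Carefully bookkeeping these cross terms---using $\bar{\eps} = 2\sqrt{(\log n)/d}$ together with the regime in which $d$ is large enough to keep all $n$ rows almost orthogonal---shows the total error is $O(\bar{\eps}^2 n^2 \cdot \max_i \|A_i\|_2^2)$, strictly less than $1$ in the parameter regime implicit in the distribution. Hence isolation strictly beats every possible collision, the greedy algorithm prefers an empty bin at iteration $i$, and the induction closes.
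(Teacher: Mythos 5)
Your overall strategy matches the paper's: induct on the iteration, recast the greedy criterion as maximizing the sum of squared projection coefficients, and show that for the two rows directly involved, isolation beats collision by the gap $\tfrac{2\|A_{\sigma(i)}\|_2^2\|A_{\sigma(j)}\|_2^2}{\|A_{\sigma(i)}\|_2^2+\|A_{\sigma(j)}\|_2^2}$ --- this is exactly the quantity $\tfrac{2a_{i,i}^2 a_{i-1,i-1}^2}{(a_{i-1,i-1}+a_{i,i-1})^2+a_{i,i}^2}$ that appears in the paper's case $j\in\{i-1,i\}$. Where you diverge is in handling the other $n-2$ rows, and this is also where your write-up is weakest. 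You treat their contributions as a two-sided error of total size $O(\bar\eps^2 n^2\max_i\|A_i\|_2^2)$ that must be dominated by the gap, which (a) forces a quantitative hypothesis $d\gg n^2\ell^2\log n$ that the lemma as stated does not assume (it is claimed ``for any matrix $A$''), and (b) leaves the crux --- the ``careful bookkeeping'' --- asserted rather than proved. The paper avoids all of this with one observation you are missing: writing $\bar e$ for the normalized component of $A_{\sigma(j)}\pm A_{\sigma(i)}$ orthogonal to the span of the other bins, one has $\bar e\in\Span(e_{i-1},e_i)$, so the collision row space $\Span(e_1,\dots,e_{i-2},\bar e)$ is \emph{contained} in the isolation row space $\Span(e_1,\dots,e_i)$. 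Hence for every already-processed row the two scenarios contribute identically (both lie in $\Span(e_1,\dots,e_{i-2})$), and for every unprocessed row $A_m$ the Cauchy--Schwarz inequality $\langle A_m,\bar e\rangle^2\le \langle A_m,e_{i-1}\rangle^2+\langle A_m,e_i\rangle^2$ shows the comparison is one-sided in favor of isolation --- there is no error term to accumulate, and the only condition needed is the almost-orthogonality ($\bar\eps=O(k^{-2})$) that guarantees $a_{i,i},a_{i-1,i-1}\neq 0$ so the gap is strictly positive. Your argument is salvageable in the parameter regime the paper's corollaries ultimately use ($d=\Omega(n^4k^4\log n)$), and the signs in your error term in fact all point the right way, but as written it proves a strictly weaker statement and defers its hardest step to an unproven claim that the containment observation renders unnecessary.
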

\begin{proof}
For any iteration $i\leq k$, we consider the choice of assigning $A_{i}$ to an empty bin versus an occupied bin. Without loss of generality, let this occupied bin be $b_{i-1}$, which already contains $A_{i-1}$. 

We consider the difference in cost for empty versus occupied. We will do this cost comparison for $A_j$ with $j \leq i - 2$, $j \geq i + 1$, and finally, $j \in \{i-1, i\}$.

First, we let $\{e_1, \ldots, e_i\}$ be an orthonormal basis for $\{A_1, \ldots, A_i\}$ such that for each $r \leq i$, $A_r = \sum_{j=1}^r a_{r,j} e_j$ where $a_{r,r} > 0$. This exists by Lemma~\ref{lem:almost-orth}.
Let $\{e_1, \ldots, e_{i-2}, \bar{e}\}$ be an orthonormal basis for $\{A_1, \ldots, A_{i+2}, A_{i-1} \pm A_i\}$. Now, $\bar{e} = c_0 e_{i-1} + c_1 e_i$ for some $c_0, c_1$ because $(A_{i-1} \pm A_i) - \proj_{\{e_1, \ldots, e_{i-2}\}}(A_{i-1} \pm A_i) \in \Span(e_{i-1}, e_i)$. We note that $c_0^2 + c_1^2 = 1$ because we let $\bar{e}$ be a unit vector. We can find $c_0, c_1$ to be:
\begin{align*}
c_0 = {a_{i-1, i-1} + a_{i, i-1} \over \sqrt{(a_{i-1, i-1} + a_{i, i-1})^2 + a_{i,i}^2}},\quad
c_1 = {a_{i,i} \over \sqrt{(a_{i-1, i-1} + a_{i, i-1})^2 + a_{i,i}^2}}
\end{align*}

\begin{enumerate}[leftmargin=*]
    \item $j \leq i - 2$: The cost is zero for both cases because $A_j \in \Span(\{e_1,\ldots,e_{i-2}\})$.  
    \item $j \geq i + 1$: We compare the rewards (sum of squared projection coefficients) and find that $\{e_1, \ldots, e_{i-2}, \bar{e}\}$ is no better than $\{e_1, \ldots, e_i\}$.
    \begin{align*}
    \langle A_j, \bar{e} \rangle^2 
    &= (c_0 \langle A_j, e_{i-1} \rangle + c_1 \langle A_j, e_{i} \rangle)^2\\ 
    &\leq (c_1^2 + c_0^2) (\langle A_j, e_{i-1} \rangle^2 + \langle A_j, e_{i} \rangle^2) &&\rhd\text{Cauchy-Schwarz inequality}\\ 
    &= \langle A_j, e_{i-1} \rangle^2 + \langle A_j, e_{i} \rangle^2
    \end{align*}
    \item $j \in \{i-1, i\}$: 
    We compute the sum of squared projection coefficients of $A_{i-1}$ and $A_i$ onto $\bar{e}$: 
    \begin{align}
    &\;({1 \over (a_{i-1, i-1} + a_{i, i-1})^2 + a_{i,i}^2})\cdot (a_{i-1, i-1}^2 (a_{i-1, i-1} + a_{i, i-1})^2 + (a_{i,i-1}(a_{i-1, i-1} + a_{i, i-1}) + a_{i,i} a_{i,i})^2) \nonumber \\
    =&({1 \over (a_{i-1, i-1} + a_{i, i-1})^2 + a_{i,i}^2})\cdot ((a_{i-1, i-1} + a_{i, i-1})^2 (a_{i-1,i-1}^2 + a_{i, i-1}^2) \nonumber\\
    &\quad + a_{i,i}^4 + 2a_{i,i-1} a_{i,i}^2 (a_{i-1, i-1} + a_{i, i-1}))\label{eq:sum}
    \end{align}
    On the other hand, the sum of squared projection coefficients of $A_{i-1}$ and $A_i$ onto $e_{i-1} \cup e_{i}$ is:
    \begin{align}
    &({(a_{i-1, i-1} + a_{i, i-1})^2 + a_{i,i}^2 \over (a_{i-1, i-1} + a_{i, i-1})^2 + a_{i,i}^2})\cdot (a_{i-1, i-1}^2 + a_{i, i-1}^2 + a_{i,i}^2)\label{eq:both} 
    \end{align}
    Hence, the difference between the sum of squared projections of $A_{i-1}$ and $A_i$ onto $\bar{e}$ and $e_{i-1}\cup e_i$ is (\eqref{eq:both} - \eqref{eq:sum})
    \begin{align*}
    &\quad {a_{i,i}^2 ((a_{i-1, i-1} + a_{i, i-1})^2 + a_{i-1, i-1}^2 + a_{i, i-1}^2 - 2a_{i,i-1} (a_{i-1, i-1} + a_{i, i-1})) \over (a_{i-1, i-1} + a_{i, i-1})^2 + a_{i,i}^2} \\
    &= {2 a_{i,i}^2 a_{i-1,i-1}^2 \over (a_{i-1, i-1} + a_{i, i-1})^2 + a_{i,i}^2} > 0
    \end{align*}
\end{enumerate}
Thus, we find that $\{e_1, \ldots, e_i\}$ is a strictly better basis than $\{e_1, \ldots, e_{i-2}, \bar{e}\}$. This means the greedy algorithm will choose to place $A_i$ in an empty bin. 
\end{proof}

Next, we show that none of the rows left to be processed (all light rows) will be assigned to the same bin as a heavy row. The main proof idea is to compare the cost of ``colliding'' with a heavy row to the cost of ``avoiding'' the heavy rows. Specifically, we compare the \textit{decrease} (before and after bin assignment of a light row) in sum of squared projection coefficients, lower-bounding it in the former case and upper-bounding it in the latter.   

We introduce some results that will be used in Lemma~\ref{lem:light-rows}. 

\begin{claim}\label{clm:not-proc}
Let $A_{k+r}, r \in [1, \ldots, n-k]$ be a light row not yet processed by the greedy algorithm.
Let $\{e_1,\ldots,e_k\}$ be the Gram-Schmidt basis for the current $\{w_1,\ldots,w_k\}$. Let $\beta = \O(n^{-1}k^{-3})$ upper bound the inner products of normalized $A_{k+r}, w_1, \ldots, w_k$. Then, for any bin $i$, $\langle e_i, A_{k+r} \rangle^2 \leq \beta^2 \cdot k^2$.
\end{claim}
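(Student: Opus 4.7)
The plan is to express each Gram--Schmidt basis vector $e_i$ as a linear combination of the normalized bin contents $\tilde w_j = w_j/\|w_j\|$ ($j \le i$) and then push the inner product with $A_{k+r}$ through that combination. Concretely, I would apply Lemma~\ref{lem:almost-orth} to the family $\{\tilde w_1,\dots,\tilde w_k\}$, whose pairwise inner products are bounded by $\beta = O(n^{-1}k^{-3}) = O(k^{-2})$. This yields the \emph{same} orthonormal basis $\{e_1,\dots,e_k\}$ (Gram--Schmidt is unique given the ordering) together with scalars $a_{i,j}$ satisfying $\tilde w_i = \sum_{j=1}^i a_{i,j} e_j$ with $a_{i,i}^2 \ge 1 - \sum_{j<i} j^2\beta^2$ and $a_{i,j}^2 \le j^2 \beta^2$ for $j<i$.

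Next I would prove the stronger statement $|\langle e_i, A_{k+r}\rangle| \le \beta \cdot i$ by induction on $i$. The base case $i=1$ is immediate because $e_1 = \tilde w_1$ and $|\langle \tilde w_1, A_{k+r}\rangle| \le \beta$ by hypothesis. For the inductive step, rearrange the Gram--Schmidt identity to
\begin{align*}
e_i \;=\; \frac{1}{a_{i,i}}\Big(\tilde w_i - \sum_{j=1}^{i-1} a_{i,j}\, e_j\Big),
\end{align*}
take the inner product with $A_{k+r}$, and apply the triangle inequality:
\begin{align*}
|\langle e_i, A_{k+r}\rangle| \;\le\; \frac{1}{|a_{i,i}|}\Big(|\langle \tilde w_i, A_{k+r}\rangle| + \sum_{j=1}^{i-1} |a_{i,j}|\cdot |\langle e_j, A_{k+r}\rangle|\Big) \;\le\; \frac{\beta + \beta^2 \sum_{j=1}^{i-1} j^2}{|a_{i,i}|}.
\end{align*}

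Finally I would verify that the right-hand side is at most $\beta \cdot i$. Since $\beta = O(n^{-1}k^{-3})$ we have $\beta \sum_{j=1}^{i-1} j^2 \le \beta k^3 = O(1/n)$, and $|a_{i,i}| \ge \sqrt{1 - O(1/n)} \ge 1 - O(1/n)$; hence the numerator is $\beta(1+o(1))$ and the bound $\beta\cdot i$ follows for any $i \ge 1$ (with room to spare for $i \ge 2$). Squaring gives $\langle e_i, A_{k+r}\rangle^2 \le \beta^2 i^2 \le \beta^2 k^2$, which is the claim. The only mild subtlety is keeping the induction ``tight'' enough so that the $1/|a_{i,i}|$ blow-up does not accumulate beyond the linear factor $i$; this is handled by the fact that $\beta k^3 = O(1/n)$, so each step inflates the bound by a factor of $1 + O(1/n)$, which remains $o(1)$ over $k \le n$ steps.
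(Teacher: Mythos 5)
Your proposal is correct and follows essentially the same route as the paper: the paper's one-line proof simply invokes Lemma~\ref{lem:almost-orth} with $A_{k+r}$ appended to the almost-orthogonal family $\{w_1/\|w_1\|,\dots,w_k/\|w_k\|\}$, so that the coefficient bound $a_{i,j}^2 \le j^2\beta^2$ of that lemma directly gives $\langle A_{k+r}, e_i\rangle^2 \le i^2\beta^2 \le k^2\beta^2$. Your explicit induction on $|\langle e_i, A_{k+r}\rangle|$ is exactly the inner induction carried out in the proof of that lemma, so you have merely unpacked the citation rather than taken a different path.
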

\begin{proof}
This is a straightforward application of Lemma~\ref{lem:almost-orth}.
From that, we have $\langle A_{k + r}, e_i \rangle^2 \leq i^2 \beta^2$, for $i \in [1,\ldots,k]$, which means $\langle A_{k + r}, e_i \rangle^2 \leq k^2 \beta^2$. 
\end{proof}

\begin{claim}\label{clm:proc-not-cont}
Let $A_{k+r}$ be a light row that has been processed by the greedy algorithm. 
Let $\{e_1,\ldots,e_k\}$ be the Gram-Schmidt basis for the current $\{w_1,\ldots,w_k\}$. If $A_{k+r}$ is assigned to bin $b_{k-1}$ (w.l.o.g.), the squared projection coefficient of $A_{k + r}$ onto $e_i, i \neq k - 1$ is at most $4\beta^2 \cdot k^2$, where $\beta = \O(n^{-1}k^{-3})$ upper bounds the inner products of normalized $A_{k+r}, w_1, \cdots, w_k$.
\end{claim}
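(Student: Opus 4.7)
The plan is to reduce everything to an application of Lemma~\ref{lem:almost-orth} applied to the normalized bin vectors $\hat{w}_j := w_j/\|w_j\|$, together with the hypothesis that $|\langle A_{k+r},\hat{w}_j\rangle|\leq\beta$ for $j\neq k-1$. Since Gram--Schmidt output is invariant under positive rescaling of the inputs, the basis $\{e_1,\ldots,e_k\}$ is exactly the Gram--Schmidt basis of $\{\hat{w}_1,\ldots,\hat{w}_k\}$. By Lemma~\ref{lem:almost-orth}, applied to the $\beta$-almost-orthogonal system $\hat{w}_1,\ldots,\hat{w}_k$ (with $\beta=O(n^{-1}k^{-3})$, so the hypothesis $\beta = O(k^{-2})$ is amply satisfied), I obtain a lower-triangular decomposition $\hat{w}_i=\sum_{j\le i}a_{ij}e_j$ with $a_{ii}^2\ge 1-\sum_{j<i}j^2\beta^2$ and $a_{ij}^2\le j^2\beta^2$ for $j<i$. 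Inverting this lower-triangular change of basis by routine back-substitution yields $e_i=\sum_{j\le i}b_{ij}\hat{w}_j$ with $b_{ii}=1/a_{ii}$ close to $1$ and $|b_{ij}|\le O(i\beta)$ for $j<i$.

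For the case $i<k-1$, I would extend the sequence to $\hat{w}_1,\ldots,\hat{w}_i,A_{k+r}$. This extended list of length $i+1$ is still $\beta$-almost orthogonal because $A_{k+r}$ pairs with each $\hat{w}_j$ for $j\le i<k-1$ through the hypothesized bound. Since Gram--Schmidt depends only on prefixes, the first $i$ basis vectors are unchanged and equal $e_1,\ldots,e_i$. Applying Lemma~\ref{lem:almost-orth} to this extended system gives a decomposition $A_{k+r}=\sum_{j=1}^{i+1}a'_{j}e'_j$ with $(a'_j)^2\le j^2\beta^2$ for $j\le i$. Taking $j=i$ yields
\begin{align*}
\langle A_{k+r},e_i\rangle^2=(a'_i)^2\le i^2\beta^2\le k^2\beta^2,
\end{align*}
comfortably inside the $4\beta^2k^2$ bound.

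The main obstacle is the case $i=k$, where $e_k$ necessarily has a nontrivial component along $\hat{w}_{k-1}$, the one direction on which $A_{k+r}$ is \emph{not} nearly orthogonal (because $A_{k+r}$ itself is one of the summands of $w_{k-1}$). My plan here is to use the explicit expansion $e_k=\sum_{j=1}^{k}b_{kj}\hat{w}_j$ and bound term-by-term:
\begin{align*}
|\langle A_{k+r},e_k\rangle|\le\sum_{j\ne k-1}|b_{kj}|\,|\langle A_{k+r},\hat{w}_j\rangle|+|b_{k,k-1}|\,|\langle A_{k+r},\hat{w}_{k-1}\rangle|.
\end{align*}
For $j\ne k-1$ the inner-product factor is at most $\beta$ while $|b_{kj}|\le 1+O(k\beta)$, contributing $O(k\beta)$ in total. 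The single dangerous term $j=k-1$ has $|\langle A_{k+r},\hat{w}_{k-1}\rangle|\le 1$ but the coefficient $|b_{k,k-1}|\le O(k\beta)$ from the inverted Gram--Schmidt relation, so this term also contributes $O(k\beta)$. Squaring, applying Cauchy--Schwarz to combine the two $O(k\beta)$ contributions, and absorbing constants gives $\langle A_{k+r},e_k\rangle^2\le 4k^2\beta^2$. The heart of the argument, and the place I expect the bookkeeping to be most delicate, is verifying the $|b_{k,k-1}|\le O(k\beta)$ bound quantitatively from the back-substitution, since that is what dampens the otherwise $\Theta(1)$-sized inner product with $\hat{w}_{k-1}$ down to the claimed order.
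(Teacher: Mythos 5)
Your proof is correct and follows essentially the same route as the paper's: both rest on Lemma~\ref{lem:almost-orth} applied to the normalized bin vectors, handle $i<k-1$ via the prefix property of Gram--Schmidt, and for $e_k$ isolate the single dangerous $\hat{w}_{k-1}$-direction, damping its $\Theta(1)$ inner product with $A_{k+r}$ by an $O(k\beta)$ coefficient. The only cosmetic difference is that the paper solves the triangular relation $\langle \bar{w}_k, v_{k+r}\rangle=\sum_j a_{k,j}b_j$ for the single coefficient $b_k$ (using $|b_{k-1}|\le 1$ and $|a_{k,k-1}|\le (k-1)\beta$), whereas you invert the whole triangular system to expand $e_k$ in the $\hat{w}_j$'s --- the same two quantities appear with the roles of coefficient and inner product swapped.
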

\begin{proof} 
Without loss of generality, it suffices to bound the squared projection of $A_{k+r}$ onto the direction of $w_k$ that is orthogonal to the subspace spanned by $w_1, \cdots, w_{k-1}$. Let $e_1, \cdots, e_k$ be an orthonormal basis of $w_1, \cdots, w_k$ guaranteed by Lemma~\ref{lem:almost-orth}. Next, we expand the orthonormal basis to include $e_{k+1}$ so that we can write the normalized vector of $A_{k+r}$ as $v_{k+r}=\sum_{j=1}^{k+1} b_j e_j$. By a similar approach to the proof of Lemma~\ref{lem:almost-orth}, for each $j\leq k-2$, $b_j\leq \beta^2 j^2$. Next, since $|\langle w_k , v_{k+r}\rangle| \leq \beta$,
\begin{align*}
|b_k| &\leq {1 \over |\langle w_k, e_k\rangle|} \cdot (|\langle w_k , v_{k+r}\rangle| + \sum_{j=1}^{k-1} |b_j \cdot \langle w_k, e_j\rangle|)\\
&\leq {1 \over \sqrt{1 - \sum_{j=1}^{k-1} \beta^2 \cdot j^2}} \cdot (\beta + \sum_{j=1}^{k-2} \beta^2 \cdot j^2 + (k-1)\cdot \beta) &&\rhd |b_{k-1}|\leq 1\\
&= {\beta + \sum_{j=1}^{k-2} \beta^2 \cdot j^2 \over \sqrt{1 - \sum_{j=1}^{k-1} \beta^2 \cdot j^2}} + (k-1) \beta \\
&\leq 2(k-1) \beta - {\beta^2 (k-1)^2 \over \sqrt{1 - \sum_{j=1}^{k-1} \beta^2 \cdot j^2}} &&\rhd \text{similar to the proof of Lemma~\ref{lem:almost-orth}}\\
&<2\beta \cdot k    
\end{align*}
Hence, the squared projection of $A_{k+r}$ onto $e_k$ is at most $4\beta^2 \cdot k^2 \cdot \left\| A_{k+r} \right\|^2_2$. We assumed $\norm{A_{k+r}} =1$;  hence, the squared projection of $A_{k+r}$ onto $e_k$ is at most $4\beta^2 \cdot k^2$. 
\end{proof}

\begin{claim}\label{clm:proc-cont}
We assume that the absolute values of the inner products of vectors in $v_1, \cdots, v_n$ are at most $\bar{\eps} < 1/ (n^2 \sum_{A_i\in b} \left\| A_i \right\|_2)$ and the absolute values of the inner products of the normalized vectors of $w_1, \cdots, w_k$ are at most $\beta = \O(n^{-3} k^{-{3 \over 2}})$.
Suppose that bin $b$ contains the row $A_{k+r}$. Then, the squared projection of $A_{k + r}$ onto the direction of $w$ orthogonal to $\Span(\{w_1,\cdots, w_k\} \setminus \{w\})$ is at most ${\left\| A_{k+r} \right\|^4_2 \over \left\| w\right\|^2_2} + \O(n^{-2})$ and is at least ${\left\| A_{k+r} \right\|^4_2 \over \left\| w\right\|^2_2} - \O(n^{-2})$.
\end{claim}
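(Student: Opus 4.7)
The plan is to reduce the claim to bounding the numerator and denominator of an explicit expression for the projection coefficient, and then use the two almost-orthogonality hypotheses (on the $v_i$'s through $\bar\eps$ and on the normalized $w_j$'s through $\beta$) to control each piece. Without loss of generality, relabel so that $w = w_k$; let $w^{\parallel}$ denote the projection of $w_k$ onto $\Span(w_1,\dots,w_{k-1})$ and $w^{\perp} := w_k - w^{\parallel}$. The unit vector in the direction orthogonal to $\Span(\{w_j\}\setminus\{w\})$ is $e := w^{\perp}/\|w^{\perp}\|$, so
\begin{equation*}
\langle A_{k+r}, e\rangle \;=\; \frac{\langle A_{k+r}, w_k\rangle - \langle A_{k+r}, w^{\parallel}\rangle}{\|w^{\perp}\|}.
\end{equation*}

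First I would compute $\langle A_{k+r}, w_k\rangle$. Write $w_k = g_{k+r} A_{k+r} + \sum_{A_i \in b,\, i\neq k+r} g_i A_i$ with signs $g_i \in \{\pm 1\}$. Then $\langle A_{k+r}, w_k\rangle = g_{k+r}\|A_{k+r}\|_2^2 + \text{(cross terms)}$, where each cross term is bounded in magnitude by $\bar\eps\|A_{k+r}\|_2\|A_i\|_2$ via Observation~\ref{obsr:dot-product}, and the choice $\bar\eps < 1/(n^2\sum_{A_i\in b}\|A_i\|_2)$ forces the total cross contribution to be $O(n^{-2})$. Next, I would write $w^{\parallel} = \sum_{j<k} c_j w_j$ and bound $|c_j|$ by a Gram--Schmidt argument analogous to Lemma~\ref{lem:almost-orth} applied to the normalized $w_j$'s (whose pairwise inner products are at most $\beta$), so that $|c_j| = O(k\beta\|w_k\|/\|w_j\|)$. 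Since $A_{k+r}\notin b_j$ for $j<k$, the inner product $\langle A_{k+r}, w_j\rangle$ is a pure sum of $\bar\eps$-small cross terms, hence $|\langle A_{k+r}, w_j\rangle| \le \bar\eps\|A_{k+r}\|_2\sum_{A_i\in b_j}\|A_i\|_2$, and summing over $j<k$ while applying the $\bar\eps$ budget again shows $|\langle A_{k+r}, w^{\parallel}\rangle| = O(n^{-2})$.

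For the denominator, I would use that $\|w^{\parallel}\|^2 \le O(k\beta^2)\|w_k\|^2$ (again from the near-orthogonality of the normalized $w_j$'s, via Observation~\ref{obser:norm} applied in the $w$-basis), so that $\|w^{\perp}\|^2 = \|w_k\|^2(1 - O(k\beta^2))$ and, by the choice $\beta = \mathcal{O}(n^{-3}k^{-3/2})$, $1/\|w^{\perp}\|^2 = (1+O(n^{-2}))/\|w_k\|^2$. Squaring the displayed identity and substituting these estimates yields $\langle A_{k+r}, e\rangle^2 = \|A_{k+r}\|_2^4/\|w_k\|^2 + O(n^{-2})$ from both sides, which is exactly the claim. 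The main obstacle is bookkeeping the error propagation: we need to verify that the chained estimates on $\langle A_{k+r}, w_k\rangle$, on the Gram--Schmidt coefficients $c_j$, on $\langle A_{k+r}, w^{\parallel}\rangle$, and on $1/\|w^{\perp}\|^2$ all fit inside the $O(n^{-2})$ tolerance; this is precisely what the chosen thresholds $\bar\eps < 1/(n^2\sum_{A_i\in b}\|A_i\|_2)$ and $\beta = \mathcal{O}(n^{-3}k^{-3/2})$ are calibrated to enforce, and the computation should go through by a careful (but routine) expansion.
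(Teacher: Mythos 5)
Your proposal is correct and matches the paper's argument in substance: the paper likewise identifies the relevant direction with the component of $w_k$ orthogonal to $\Span(\{w_1,\dots,w_{k-1}\})$, isolates the self-term $\pm\left\|A_{k+r}\right\|_2^2/\left\|w_k\right\|_2$ in the numerator using the $\bar{\eps}$-bound on row inner products, and absorbs both the parallel component and the normalization correction into the $\O(n^{-2})$ error using the $\beta$-bound. The only difference is organizational: the paper routes the estimate through the Gram--Schmidt coefficients supplied by Lemma~\ref{lem:almost-orth}, bounding the coefficient $b_k$ of the normalized row $v_{k+r}$ in the direction $e_k$ from above and below, whereas you expand $\langle A_{k+r}, w^{\perp}\rangle/\left\|w^{\perp}\right\|_2$ directly; both reduce to the same bookkeeping.
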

\begin{proof}
Without loss of generality, we assume that $A_{k+r}$ is mapped to $b_k$; $w = w_k$. First, we provide an upper and a lower bound for $|\langle v_{k+r}, \bar{w}_k\rangle|$ where for each $i\leq k$, we let $\bar{w}_i = {w_i \over \left\| w_i \right\|_2}$ denote the normalized vector of $w_i$. Recall that by definition $v_{k+r} = {A_{k+r} \over \left\| A_{k+r} \right\|_2}$.  
\begin{align}
|\langle \bar{w}_k, v_{k+r}\rangle| 
&\leq {\left\| A_{k+r} \right\|_2 + \sum_{A_i\in b_k} \bar{\eps} \left\| A_i \right\|_2 \over \left\| w_k\right\|_2} \nonumber \\ 
&\leq  {\left\| A_{k+r} \right\|_2 + n^{-2} \over \left\| w_k\right\|_2} &&\rhd \text{by $\bar{\eps} < {n^{-2} \over \sum_{A_i\in b_k} \left\| A_i \right\|_2}$} \nonumber \\  
&\leq {\left\| A_{k+r} \right\|_2 \over \left\| w_k\right\|_2} + n^{-2} &&\rhd \text{$\left\| w_k \right\|_2 \geq 1$} \label{eq:up} \\
\nonumber \\ 
|\langle \bar{w}_k, v_{k+r}\rangle| 
&\geq {\left\| A_{k+r} \right\|_2 - \sum_{A_i\in b_k} \left\| A_i \right\|_2\cdot \bar{\eps} \over \left\| w_k\right\|_2} \nonumber \\
&\geq {\left\| A_{k+r} \right\|_2 \over \left\| w_k\right\|_2} - n^{-2} \label{eq:low}
\end{align}
Now, let $\{e_1, \cdots, e_{k}\}$ be an orthonormal basis for the subspace spanned by $\{w_1, \cdots, w_k\}$ guaranteed by Lemma~\ref{lem:almost-orth}. 
Next, we expand the orthonormal basis to include $e_{k+1}$ so that we can write $v_{k+r}=\sum_{j=1}^{k+1} b_j e_j$. 
By a similar approach to the proof of Lemma~\ref{lem:almost-orth}, we can show that for each $j\leq k-1$, $b_j^2\leq \beta^2 j^2$. Moreover,
\begin{align*}
|b_k| &\leq {1 \over |\langle \bar{w}_k, e_k\rangle|} \cdot (|\langle \bar{w}_k , v_{k+r}\rangle| + \sum_{j=1}^{k-1} |b_j \cdot \langle \bar{w}_k, e_j\rangle|)\\
&\leq {1 \over \sqrt{1 - \sum_{j=1}^{k-1} \beta^2 \cdot j^2}} \cdot (|\langle \bar{w}_k , v_{k+r}\rangle| + \sum_{j=1}^{k-1} \beta^2 \cdot j^2) &&\rhd\text{by Lemma~\ref{lem:almost-orth}}\\
&\leq {1 \over \sqrt{1 - \sum_{j=1}^{k-1} \beta^2 \cdot j^2}} \cdot (n^{-2} + {\left\| A_{k+r} \right\|_2 \over \left\| w_k\right\|_2} + \sum_{j=1}^{k-1} \beta^2 \cdot j^2) &&\rhd\text{by~\eqref{eq:up}} \\
&<\beta \cdot k + {1 \over \sqrt{1 - \beta^2 k^3}} \cdot (n^{-2} + {\left\| A_{k+r} \right\|_2 \over \left\| w_k\right\|_2}) &&\rhd\text{similar to the proof of Lemma~\ref{lem:almost-orth}}\\  
&\leq \O(n^{-2}) + (1 + \O(n^{-2})){\left\| A_{k+r} \right\|_2 \over \left\| w_k\right\|_2} &&\rhd\text{by $\beta = \O(n^{-3}k^{-{3\over 2}})$}\\
&\leq {\left\| A_{k+r} \right\|_2 \over \left\| w_k\right\|_2} + \O(n^{-2}) &&\rhd {\left\| A_{k+r} \right\|_2 \over \left\| w_k\right\|_2} \leq 1
\end{align*}
and,
\begin{align*}
|b_k| &\geq {1 \over |\langle \bar{w}_k, e_k\rangle|} \cdot (|\langle \bar{w}_k , v_{k+r}\rangle| - \sum_{j=1}^{k-1} |b_j \cdot \langle \bar{w}_k, e_j\rangle|)\\ 
&\geq |\langle \bar{w}_k , v_{k+r}\rangle| - \sum_{j=1}^{k-1} \beta^2 \cdot j^2 &&\rhd\text{since $|\langle \bar{w}_k, e_k\rangle|\leq 1$}\\
&\geq {\left\| A_{k+r} \right\|_2 \over \left\| w_k\right\|_2} - n^{-2} - \sum_{j=1}^{k-1} \beta^2 \cdot j^2 &&\rhd\text{by~\eqref{eq:low}} \\
&\geq {\left\| A_{k+r} \right\|_2 \over \left\| w_k\right\|_2} - \O(n^{-2})  &&\rhd\text{by $\beta = \O(n^{-3} k^{-{3\over 2}})$}  
\end{align*}
Hence, the squared projection of $A_{k+r}$ onto $e_k$ is at most ${\left\| A_{k+r} \right\|^4_2 \over \left\| w_k\right\|^2_2} + \O(n^{-2})$ and is at least ${\left\| A_{k+r} \right\|^4_2 \over \left\| w_k\right\|^2_2} - \O(n^{-2})$.
\end{proof}

Now, we show that at the end of the algorithm no light row will be assigned to the bins that contain heavy rows.

\begin{lemma}\label{lem:light-rows}
We assume that the absolute values of the inner products of vectors in $v_1, \cdots, v_n$ are at most $\bar{\eps} < \min\{n^{-2} k^{-{5\over 3}}, (n\sum_{A_i\in w} \left\| A_i \right\|_2)^{-1}\}$. At each iteration $k+r$, the greedy algorithm will assign the light row $A_{k+r}$ to a bin that does not contain a heavy row.
\end{lemma}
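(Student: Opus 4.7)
The plan is to proceed by induction on $r$. By Lemma~\ref{lem:heavy-rows}, after the first $k$ iterations every bin contains exactly one row; because the greedy algorithm processes rows in non-increasing norm order, these are the $s$ heavy rows followed by $k-s$ light rows, one per bin. The induction hypothesis is that at the start of iteration $k+r$, every heavy bin still contains only its heavy row of norm $\ell$.

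For the inductive step, I would fix a ``witness'' heavy bin $b_h$ (currently containing $A_{r_j}$ alone) and an arbitrary non-heavy bin $b_\ell$ (currently containing some $t \geq 1$ light rows), and compare the value of the objective---the sum of squared projection coefficients of all rows onto $\mathrm{row}(SA)$---under two candidate placements of $A_{k+r}$: into $b_h$ versus into $b_\ell$. Under the inductive hypothesis, only the target bin's contribution changes; all other bins are untouched. Using Claim~\ref{clm:proc-cont} to approximate the squared projection of a row $A_i \in b$ onto $b$'s orthogonal direction as $\|A_i\|_2^4/\|w_b\|_2^2 \pm O(n^{-2})$, Claim~\ref{clm:proc-not-cont} to bound out-of-bin cross projections by $O(\beta^2 k^2)$, and Observation~\ref{obser:norm} to estimate $\|w_b\|_2^2 = \sum_{A_i \in b}\|A_i\|_2^2 \pm o(1)$, the net contribution of bin $b$ to the total reward is $T_b/S_b \pm o(1)$, where $S_b := \sum_{A_i \in b}\|A_i\|_2^2$ and $T_b := \sum_{A_i \in b}\|A_i\|_2^4$.

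The key calculation is then elementary: adding the unit-norm row $A_{k+r}$ to bin $b$ changes its contribution from $T_b/S_b$ to $(T_b+1)/(S_b+1)$, a net change of $(S_b - T_b)/(S_b(S_b+1))$. For the heavy witness bin $b_h$, $S_{b_h}=\ell^2$ and $T_{b_h}=\ell^4$, so the change is $-(\ell^2-1)/(\ell^2+1)$, a strictly negative constant bounded away from zero since $\ell>1$. For any non-heavy bin $b_\ell$, all its rows are unit-norm, so $S_{b_\ell} = T_{b_\ell}$ and the change is exactly $0$. Hence placing $A_{k+r}$ in $b_\ell$ beats placing it in $b_h$ by a $\Theta(1)$ margin, and the greedy rule will pick a non-heavy bin, completing the induction.

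The hard part will be the error analysis: I must show that the accumulated approximation errors---the $O(n^{-2})$-per-row terms summed over all $n$ rows, together with the cross-bin contributions of size $O(\beta^2 k^2)$ summed over $k$ bins and $n$ rows---remain strictly smaller than the $(\ell^2-1)/(\ell^2+1)$ gap. This is precisely where the numeric assumptions on $\bar\eps$ (and the implied bound on $\beta$ coming from Corollary~\ref{cor:orth-dec} applied to the normalized bin vectors $w_1,\ldots,w_k$) enter the proof: they are tuned so that the total additive error is $o(1)$ as $n \to \infty$, while the gap is $\Theta(1)$, so the greedy comparison remains robust at every iteration $k+r$.
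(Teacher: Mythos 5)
Your proposal is correct and follows essentially the same route as the paper's proof: induction on $r$, with the inductive step comparing the change in the sum of squared projection coefficients when $A_{k+r}$ joins a heavy bin versus a light-only bin, using Claims~\ref{clm:not-proc}, \ref{clm:proc-not-cont}, and \ref{clm:proc-cont} to control the error terms. Your $T_b/S_b \to (T_b+1)/(S_b+1)$ bookkeeping is just a compact repackaging of the paper's before/after accounting, and your heavy-bin gap $(\ell^2-1)/(\ell^2+1)$ matches the paper's lower bound on the loss incurred by colliding with a heavy row (the paper additionally asks that $\ell$ be sufficiently large so this gap dominates the $O(n^{-1})$ error terms).
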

\begin{proof}
The proof is by induction. Lemma~\ref{lem:heavy-rows} implies that no light row has been mapped to a bin that contains a heavy row for the first $k$ iterations. Next, we assume that this holds for the first $k+r-1$ iterations and show that is also must hold for the $(k+r)$-th iteration. 

To this end, we compare the sum of squared projection coefficients when $A_{k+r}$ avoids and collides with a heavy row.
 
First, we upper bound $\beta = \max_{i\neq j\leq k} |\langle w_i, w_j \rangle| / (\left\| w_i \right\|_2 \left\| w_j \right\|_2)$. Let $c_i$ and $c_j$ respectively denote the number of rows assigned to $b_i$ and $b_j$.
\begin{align*}
\beta = \max_{i\neq j\leq k} {|\langle w_i, w_j \rangle| \over \left\| w_i \right\|_2 \left\| w_j \right\|_2} 
&\leq {c_i \cdot c_j \cdot \bar{\eps} \over \sqrt{c_i - 2 \bar{\eps}c_i^2} \cdot  \sqrt{c_j - 2 \bar{\eps}c_j^2}} &\rhd~\text{Observation \ref{obser:norm}}\\
&\leq {16 \bar{\eps}\sqrt{c_i c_j}} &\rhd \bar{\eps} \leq n^{-2} k^{-5/3} \\
&\leq n^{-1} k^{-{5 \over 3}} &\rhd \bar{\eps} \leq n^{-2} k^{-5/3}
\end{align*}

\paragraph{1. If $A_{k+r}$ is assigned to a bin that contains $c$ light rows and no heavy rows.}
In this case, the projection loss of the heavy rows $A_1, \cdots, A_s$ onto $\row(SA)$ remains zero. Thus, we only need to bound the change in the sum of squared projection coefficients of the light rows before and after iteration $k+r$.
Without loss of generality, let $w_k$ denote the bin that contains $A_{k+r}$. Since $\sS_{k-1} = \Span(\{w_1, \cdots, w_{k-1}\})$ has not changed, we only need to bound the difference in cost between projecting onto the component of $w_k - A_{k+r}$ orthogonal to $\sS_{k-1}$ and the component of $w_k$ orthogonal to $\sS_{k-1}$, respectively denoted as $e_k$ and $\bar{e}_k$.  
\begin{enumerate}[leftmargin=*]
\item\label{case:not-proc} By Claim~\ref{clm:not-proc}, for the light rows that are not yet processed (i.e., $A_j$ for $j > k+r$), the squared projection of each onto $e_k$ is at most $\beta^2 k^2$. Hence, the total decrease in the squared projection is at most $(n-k-r)\cdot \beta^2 k^2$.
\item\label{case:proc-not-cont} By Claim~\ref{clm:proc-not-cont}, for the processed light rows that are not mapped to the last bin, the squared projection of each onto $e_k$ is at most $4\beta^2 k^2$. Hence, the total decrease in the squared projection cost is at most $(r-1)\cdot 4 \beta^2 k^2$.
\item\label{case:proc-cont} For each row $A_{i} \neq A_{k+r}$ that is mapped to the last bin, by Claim~\ref{clm:proc-cont} and the fact $\norm{A_i}^4_2 = \norm{A_i}^2_2 = 1$, the squared projection of $A_{i}$ onto $e_k$ is at most ${\left\| A_{i} \right\|^2_2 \over \left\| w_k - A_{k+r} \right\|^2_2} + \O(n^{-2})$ and the squared projection of $A_{i}$ onto $\bar{e}_k$ is at least ${\left\| A_{i} \right\|^2_2 \over \left\| w_k \right\|^2_2} - \O(n^{-2})$.

Moreover, the squared projection of $A_{k+r}$ onto $e_k$ compared to $\bar{e}_k$ increases by at least $({\left\| A_{k+r} \right\|^2_2 \over \left\| w_k\right\|^2_2} - \O(n^{-2})) - \O(n^{-2}) = {\left\| A_{k+r} \right\|^2_2 \over \left\| w_k\right\|^2_2} - \O(n^{-2})$.

Hence, the total squared projection of the rows in the bin $b_k$ decreases by at least:
\begin{align*}
& (\sum_{A_i \in w_k/\{A_{r+k}\}} {\left\| A_i \right\|^2_2 \over \left\| w_k - A_{r+k} \right\|^2_2} + \O(n^{-2})) - (\sum_{A_i \in w_k} {\left\| A_i \right\|^2_2 \over \left\| w_k \right\|^2_2} - \O(n^{-2}))\\
\leq & {\left\| w_k - A_{r+k} \right\|^2_2 + \O(n^{-1}) \over \left\| w_k - A_{r+k} \right\|^2_2} - {\left\| w_k \right\|^2_2 - \O(n^{-1}) \over \left\| w_k \right\|^2_2}  + \O(n^{-1}) &\rhd\text{ by Observation \ref{obser:norm}}\\
\leq &  \O(n^{-1})
\end{align*}  
\end{enumerate}
Hence, summing up the bounds in items~\ref{case:not-proc} to~\ref{case:proc-cont} above, the total decrease in the sum of squared projection coefficients is at most $\O(n^{-1})$.

\paragraph{2. If $A_{k+r}$ is assigned to a bin that contains a heavy row.}  
Without loss of generality, we can assume that $A_{k+r}$ is mapped to $b_k$ that contains the heavy row $A_s$. In this case, the distance of heavy rows $A_1, \cdots, A_{s-1}$ onto the space spanned by the rows of $SA$ is zero. Next, we bound the amount of change in the squared distance of $A_{s}$ and light rows onto the space spanned by the rows of $SA$.

Note that the $(k-1)$-dimensional space corresponding to $w_1, \cdots, w_{k-1}$ has not changed. Hence, we only need to bound the decrease in the projection distance of $A_{k+r}$ onto $\bar{e}_k$ compared to $e_k$ (where $\bar{e}_k, e_k$ are defined similarly as in the last part).  

\begin{enumerate}[leftmargin=*]
\item For the light rows other than $A_{k+r}$, the squared projection of each onto $e_k$ is at most $\beta^2 k^2$. Hence, the total increase in the squared projection of light rows onto $e_k$ is at most $(n-k)\cdot \beta^2 k^2 = \O(n^{-1})$.
\item By Claim~\ref{clm:proc-cont}, the sum of squared projections of $A_s$ and $A_{k+r}$ onto $e_k$ decreases by at least 
\begin{align*}
&\left\|A_s\right\|^2_2 - ({\left\|A_s\right\|^4_2 + \left\|A_{k+r}\right\|^4_2 \over \left\|A_s + A_{r+k}\right\|^2_2} + \O(n^{-1})) \\
\geq &\left\|A_s\right\|^2_2 - ({\left\|A_s\right\|^4_2 + \left\|A_{k+r}\right\|^4_2 \over \left\|A_s\right\|_2^2 + \left\|A_{r+k}\right\|^2_2 -n^{-\O(1)}} + \O(n^{-1})) &&\rhd\text{by Observation~\ref{obser:norm}}\\
\geq &{\left\|A_{r+k}\right\|^2_2 (\left\|A_s\right\|^2_2 - \left\|A_{k+r}\right\|^2_2) - \left\| A_s \right\|^2_2 \cdot \O(n^{-1}) \over \left\|A_s\right\|_2^2 + \left\|A_{r+k}\right\|^2_2 - \O(n^{-1})} - \O(n^{-1})\\
\geq &{\left\|A_{r+k}\right\|^2_2 (\left\|A_s\right\|^2_2 - \left\|A_{k+r}\right\|^2_2) - \left\| A_s \right\|^2_2 \cdot \O(n^{-1}) \over \left\|A_s\right\|_2^2 + \left\|A_{r+k}\right\|^2_2} - \O(n^{-1}) \\
\geq &{\left\|A_{r+k}\right\|^2_2 (\left\|A_s\right\|^2_2 - \left\|A_{k+r}\right\|^2_2) \over \left\|A_s\right\|_2^2 + \left\|A_{r+k}\right\|^2_2} - \O(n^{-1}) \\
\geq &{\left\|A_{r+k}\right\|^2_2 (1 - (\left\|A_{k+r}\right\|^2_2/\left\|A_s\right\|^2_2)) \over 1 + (\left\|A_{r+k}\right\|^2_2/\left\|A_s\right\|^2_2)} - \O(n^{-1}) \\
\geq & \left\|A_{r+k}\right\|^2_2 (1 - {\left\| A_{k+r} \right\|_2 \over \left\|A_s\right\|_2}) - \O(n^{-1}) &&\rhd {1 - \eps^2 \over 1+\eps^2} \geq 1-\eps
\end{align*} 
\end{enumerate}
Hence, in this case, the total decrease in the squared projection is at least 
\begin{align*}
\left\|A_{r+k}\right\|^2_2 (1 - {\left\| A_{k+r} \right\|_2 \over \left\|A_s\right\|_2}) - \O(n^{-1})  
&= 1 - {\left\| A_{k+r} \right\|_2 \over \left\|A_s\right\|_2}) - \O(n^{-1}) &&\rhd \left\|A_{r+k}\right\|_2 = 1 \\
&= 1 -(1/\sqrt{\ell}) - \O(n^{-1}) &&\rhd \left\|A_{s}\right\|_2 = \sqrt{\ell} 
\end{align*} 
Thus, for a sufficiently large value of $\ell$, the greedy algorithm will assign $A_{k+r}$ to a bin that only contains light rows. This completes the inductive proof and in particular implies that at the end of the algorithm, heavy rows are assigned to isolated bins.
\end{proof}

\begin{corollary}\label{cor:greedy-cost}
The approximation loss of the best $\rankk$ approximate solution in the rowspace $S_gA$ for $A\sim \sA_{sp}(s,\ell)$ where $A\in \mathbb{R}^{n\times d}$ for $d = \Omega(n^4 k^4 \log n)$ and $S_g$ is the CountSketch constructed by the greedy algorithm with non-increasing order is at most $n-s$.
\end{corollary}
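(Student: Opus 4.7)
My plan is to leverage Lemmas~\ref{lem:heavy-rows} and~\ref{lem:light-rows} to show that the heavy rows of $A$ sit inside $\row(S_g A)$, so they contribute zero to the projection loss, and then to bound the contribution from the light rows by the trivial bound of their squared norms.

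First, I would verify that the preconditions on $\bar{\eps}$ required by Lemmas~\ref{lem:heavy-rows} and~\ref{lem:light-rows} hold with probability $1-1/\poly(n)$ under the assumption $d = \Omega(n^4 k^4 \log n)$. By Observation~\ref{obsr:dot-product}, the rows of $A$ (after normalization) are $\bar{\eps}$-almost orthogonal for $\bar{\eps} = 2\sqrt{\log n / d}$. The constraint $d = \Omega(n^4 k^4 \log n)$ is chosen so that $\bar{\eps} < n^{-2} k^{-5/3}$ and $\bar{\eps} < (n \sum_{A_i \in w}\|A_i\|_2)^{-1}$, noting that each bin contains at most $n$ rows of norm at most $\sqrt{\ell}$.

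Next, I would apply Lemma~\ref{lem:heavy-rows} to conclude that after the first $k$ iterations, the greedy algorithm (processing rows in non-increasing order of norm) has isolated each of the $s$ heavy rows $A_{r_1}, \ldots, A_{r_s}$ in a singleton bin. Then Lemma~\ref{lem:light-rows} guarantees inductively that for every subsequent iteration $k+r$, the light row being processed is assigned to a bin containing no heavy row. Thus at termination, each of the $s$ bins containing a heavy row contains exactly that single row, so the corresponding rows of $S_g A$ equal $\pm A_{r_1}, \ldots, \pm A_{r_s}$, and in particular $\{A_{r_1}, \ldots, A_{r_s}\} \subseteq \row(S_g A)$.

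Finally, I would decompose the projection loss. For any $\rankk$ matrix $X^*$ achieving $\min_{\rankk X \in \rowsp(S_g A)} \|X - A\|_F^2$, the optimal such $X^*$ is the projection $A P_{\row(S_g A)}$ (which is $\rankk$ since $S_g A$ has at most $k$ rows). Splitting by row,
\begin{align*}
\min_{\rankk X \in \rowsp(S_g A)} \|X - A\|_F^2
&= \sum_{i \in \mathcal{R}} \|A_i - P_{\row(S_g A)} A_i\|_2^2 + \sum_{i \notin \mathcal{R}} \|A_i - P_{\row(S_g A)} A_i\|_2^2.
\end{align*}
The first sum is $0$ since each heavy row lies in $\row(S_g A)$. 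The second sum is at most $\sum_{i \notin \mathcal{R}} \|A_i\|_2^2 = n - s$ since each light row has unit norm and projection only decreases length. The only subtlety (and the only nontrivial step I foresee) is confirming that the high-probability events in Lemmas~\ref{lem:heavy-rows} and~\ref{lem:light-rows} stack correctly under the chosen $d$; the rest is a direct accounting of the geometry guaranteed by those two lemmas.
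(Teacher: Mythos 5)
Your proposal is correct and follows essentially the same route as the paper: verify the almost-orthogonality precondition via Observation~\ref{obsr:dot-product} and the choice of $d$, invoke Lemma~\ref{lem:heavy-rows} and Lemma~\ref{lem:light-rows} to conclude the heavy rows end up in singleton bins (hence lie in $\row(S_g A)$ and contribute zero loss), and bound the light rows' contribution trivially by their total squared norm $n-s$. Your write-up is in fact slightly more careful than the paper's, which only explicitly checks the first branch of the $\min$ in the $\bar{\eps}$ condition.
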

\begin{proof}
First, we need to show that the absolute values of the inner products of vectors in $v_1, \cdots, v_n$ are at most $\bar{\eps} < \min\{n^{-2} k^{-2}, (n\sum_{A_i\in w} \left\| A_i \right\|_2)^{-1}\}$ so that we can apply Lemma~\ref{lem:light-rows}.
To show this, note that by Observation~\ref{obsr:dot-product}, $\bar{\eps} \leq 2\sqrt{\log n \over d} \leq n^{-2} k^{-2}$ since $d = \Omega(n^4 k^4 \log n)$.
The proof follows from Lemma~\ref{lem:heavy-rows} and Lemma~\ref{lem:light-rows}. Since all heavy rows are mapped to isolated bins, the projection loss of the light rows is at most $n-s$. 
\end{proof}

Next, we bound the Frobenius norm error of the best $\rankk$-approximation solution constructed by the standard CountSketch with a randomly chosen sparsity pattern.
\begin{lemma}\label{lem:random-pattern}
Let $s=\alpha k$ where $0.7<\alpha < 1$. The expected squared loss of the best $\rankk$ approximate solution in the rowspace $S_r A$ for $A\in \mathbb{R}^{n\times d}$ $\sim \sA_{sp}(s,\ell)$ where $d=\Omega(n^6 \ell^2)$ and $S_r$ is the sparsity pattern of CountSketch is chosen uniformly at random is at least $n + {\ell k \over 4e} - (1+\alpha) k - n^{-\O(1)}$.
\end{lemma}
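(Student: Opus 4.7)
The plan is to lower-bound the expected loss by controlling the projection of $A$ onto $\rowsp(S_r A)$ bin by bin. Let $w_j := \sum_{i\in\mathrm{bin}_j} \sigma_i A_i$ be the $j$th bin sum. Since $\rank(S_r A)\le k$, the loss equals $\norm{A}_F^2 - \norm{AP}_F^2$, where $P$ is the orthogonal projection onto $\Span(w_1,\dots,w_k)$. Using the convention $\norm{A_r}_2^2=\ell$ for the $s$ heavy rows (as in the proof of Lemma~\ref{lem:light-rows}) gives $\norm{A}_F^2 = s\ell + (n-s) = n + \alpha k(\ell-1)$. Under $d=\Omega(n^6\ell^2)$, Observation~\ref{obsr:dot-product} yields $\bar\eps = 2\sqrt{\log n /d} = O(\log^{1/2}\! n/(n^3\ell))$, and Observation~\ref{obser:norm} then gives both $\norm{w_j}^2 = (h_j\ell + t_j)(1\pm n^{-\Omega(1)})$ (where $h_j,t_j$ count the heavy/light rows in bin $j$) and $|\langle w_j,w_{j'}\rangle|/(\norm{w_j}\norm{w_{j'}}) = n^{-\Omega(1)}$ for $j\ne j'$.

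Applying Lemma~\ref{lem:almost-orth} to the near-orthonormal collection $\{w_j/\norm{w_j}\}_j$, Gram--Schmidt perturbs the directions only by $n^{-\Omega(1)}$, so the projection decomposes as $\norm{AP}_F^2 = \sum_{j=1}^k \norm{Aw_j}^2/\norm{w_j}^2 + n^{-\Omega(1)}$. Within bin $j$, the diagonal sum $\sum_{i\in\mathrm{bin}_j}\langle A_i,w_j\rangle^2$ has leading term $\sum_{i\in\mathrm{bin}_j}\norm{A_i}_2^4 = h_j\ell^2 + t_j$, plus a multiplicative $(1\pm n^{-\Omega(1)})$ correction absorbing the $\bar\eps$-level cross-row inner products. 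The off-diagonal contributions from rows $i\notin\mathrm{bin}_j$ aggregate (over all $i,j$) to $O(\bar\eps^2 \cdot \poly(n,\ell)) = n^{-\Omega(1)}$ under $d=\Omega(n^6\ell^2)$. The elementary inequality $h_j\ell^2+t_j \le \ell(h_j\ell+t_j)$ (valid for $\ell\ge 1$) yields per-bin contribution at most $\ell$ when $h_j\ge 1$ and exactly $1$ when $h_j=0$. Letting $H:=|\{j:h_j\ge 1\}|$, we obtain $\norm{AP}_F^2 \le k + H(\ell-1) + n^{-\Omega(1)}$, so the loss is at least $(s-H)(\ell-1) + n - k - n^{-\Omega(1)}$.

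Finally, a balls-and-bins calculation controls $\E[H]$: the $s$ heavy rows hash independently and uniformly to the $k$ bins, so $\E[H] = k(1-(1-1/k)^s)$. Using $(1-1/k)^{\alpha k} \ge e^{-\alpha}(1-O(1/k))$ yields $s-\E[H] \ge k(\alpha-1+e^{-\alpha}) - O(1)$. The function $f(\alpha) := \alpha - 1 + e^{-\alpha}$ is monotonically increasing with $f(0.7)\approx 0.197 > 1/(4e)\approx 0.092$, so $\alpha-1+e^{-\alpha}\ge 1/(4e)$ for all $\alpha\in(0.7,1)$. Combining, $\E[\mathrm{loss}] \ge n + k(\ell-1)/(4e) - k - O(\ell) - n^{-\Omega(1)} = n + \ell k/(4e) - k(1+1/(4e)) - n^{-\Omega(1)}$; since $\alpha > 1/(4e)$, the slack in $-(1+\alpha)k$ versus $-(1+1/(4e))k$ absorbs the $O(\ell)$ lower-order term (this is where the assumption $\alpha > 0.7$ is really used), giving the claimed bound. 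The main technical obstacle is the second paragraph: one must simultaneously bound the Gram--Schmidt perturbation of $\{w_j/\norm{w_j}\}_j$ and the aggregate cross-bin projection error so that all multiplicative and additive slacks collapse into a single $n^{-\Omega(1)}$ term; by contrast the per-bin inequality and the balls-and-bins computation are elementary.
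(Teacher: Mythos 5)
Your proof is correct in substance and rests on the same two pillars as the paper's: a balls-and-bins computation for how the $s$ heavy rows distribute over the $k$ bins, and $\eps$-almost-orthogonality (via Observations~\ref{obsr:dot-product} and~\ref{obser:norm} and Lemma~\ref{lem:almost-orth}) to reduce the projection analysis to a per-bin computation up to $n^{-\Omega(1)}$ error. Where you differ is the bookkeeping. The paper lower-bounds the loss of the heavy rows and of the light rows separately: a dedicated claim shows that $c$ heavy rows sharing a bin lose at least $(c-1)\ell - \O(n^{-1})$, and a separate argument gives each colliding light row a loss of at least $1-\O(n^{-1})$, with at most $k$ light rows escaping collision. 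You instead upper-bound the total captured mass $\norm{AP}_F^2$ bin by bin via the single inequality $(h_j\ell^2+t_j)/(h_j\ell+t_j)\le \ell$ for $h_j\ge1$ (and $=1$ for $h_j=0$), which unifies both of the paper's claims and yields the pointwise bound $\mathrm{loss}\ge (s-H)(\ell-1)+n-k-n^{-\Omega(1)}$; this is in fact marginally stronger than the paper's $\ell(s-\E[H])+n-s-k-\O(n^{-1})$, since it carries an extra $+H$. Both routes then reduce to the same estimate $s-\E[H]\ge k(\alpha-1+e^{-\alpha})-O(1)$.

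One arithmetic point at the end deserves attention: you round $\alpha-1+e^{-\alpha}$ down to $1/(4e)$ immediately and then try to absorb the leftover additive $O(\ell)$ error into the slack $(\alpha - 1/(4e))k$, which silently requires $k=\Omega(\ell)$ --- an assumption not in the statement. The paper's bookkeeping instead uses $\alpha-1+e^{-\alpha}\ge 1/(2e)$ for $\alpha\ge 0.7$ and spends the factor-of-two gap between $1/(2e)$ and $1/(4e)$ to absorb the additive $\ell$, which only needs $k>4e$. Your argument is repaired the same way: keep the constant $1/(2e)$ through the computation and only relax to $1/(4e)$ at the last step. (The paper, too, tacitly assumes $k>4e$, so neither proof is assumption-free; yours as written just needs the stronger $k=\Omega(\ell)$.) Everything else --- the Gram--Schmidt perturbation control and the aggregation of cross-bin inner products under $d=\Omega(n^6\ell^2)$ --- is argued at the same level of rigor as the paper's own claims and goes through.
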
 
\begin{proof}
We can interpret the randomized construction of the CountSketch as a ``balls and bins'' experiment. In particular, considering the heavy rows, we compute the expected number of bins (i.e., rows in $S_r A$) that contain a heavy row. Note that the expected number of rows in $S_rA$ that do not contain any heavy row is $k\cdot (1 - {1\over k})^s \geq k\cdot e^{-{s \over k-1}}$. Hence, the number of rows in $S_r A$ that contain a heavy row of $A$ is at most $k(1 - e^{-{s \over k-1}})$. Thus, at least $s - k(1 - e^{-{s \over k-1}})$ heavy rows are not mapped to an isolated bin (i.e., they collide with some other heavy rows). Then, it is straightforward to show that the squared loss of each such row is at least $\ell-n^{-\O(1)}$. 
\begin{claim}
Suppose that heavy rows $A_{r_1} ,\cdots, A_{r_c}$ are mapped to the same bin via a CountSketch $S$. Then, the total squared distances of these rows from the subspace spanned by $SA$ is at least $(c-1)\ell - \O(n^{-1})$.
\end{claim}
\begin{proof}
Let $b$ denote the bin that contains the rows $A_{r_1}, \cdots, A_{r_c}$ and suppose that it has $c'$ light rows as well. Note that by Claim~\ref{clm:proc-not-cont} and Claim~\ref{clm:proc-cont}, the squared projection of each row $A_{r_i}$ onto the subspace spanned by the $k$ bins is at most 
\begin{align*}
&{\left\| A_{h_i} \right\|^4_2 \over \left\| w \right\|^2_2} + \O(n^{-1}) \\ 
\leq &{\ell^2 \over c \ell + c' - 2\bar{\eps} (c^2\ell + cc'\sqrt{\ell} + c'^2)} + \O(n^{-1}) \\
\leq &{\ell^2 \over c\ell -n^{-\O(1)}} + n^{-\O(1)} &&\rhd\text{by $\bar{\eps} \leq n^{-3}\ell^{-1}$} \\
\leq &{\ell^2 \over c^2 \ell^2} \cdot (c\ell + \O(n^{-1}) + \O(n^{-1}) \\
\leq &{\ell \over c} + \O(n^{-1})
\end{align*} 
Hence, the total squared loss of these $c$ heavy rows is at least $c\ell - c \cdot ({\ell \over c} + \O(n^{-1})) \geq (c-1)\ell - \O(n^{-1})$.
\end{proof}
Thus, the expected total squared loss of heavy rows is at least:
\begin{align*}
&\ell \cdot (s - k (1 - e^{- {s\over k-1}})) - s \cdot n^{-\O(1)} \\
\geq &\ell \cdot k(\alpha - 1 + e^{-\alpha}) - \ell \alpha - n^{-\O(1)}  &&\rhd s = \alpha\cdot (k-1) \text{ where $0.7<\alpha<1$} \\
\geq &{\ell k \over 2e} - \ell  - n^{-\O(1)} &&\rhd \alpha\geq 0.7 \\
\geq &{\ell k\over 4e} - \O(n^{-1}) &&\rhd \text{assuming $k>4e$} 
\end{align*} 

Next, we compute a lower bound on the expected squared loss of the light rows. Note that Claim~\ref{clm:proc-not-cont} and Claim~\ref{clm:proc-cont} imply that when a light row collides with other rows, its contribution to the total squared loss (note that the loss accounts for the amount it decreases from the squared projection of the other rows in the bin as well) is at least $1 - \O(n^{-1})$. Hence, the expected total squared loss of the light rows is at least: 
\begin{align*}
(n-s-k) (1 - \O(n^{-1})) \geq (n - (1+\alpha) \cdot k) - \O(n^{-1})
\end{align*} 
Hence, the expected squared loss of a CountSketch whose sparsity is picked at random is at least
\begin{align*}
{\ell k \over 4e} - \O(n^{-1}) + n - (1+\alpha)k - \O(n^{-1}) \geq n + {\ell k \over 4e} - (1+\alpha) k - \O(n^{-1}) 
\end{align*}
\end{proof}

\begin{corollary}\label{cor:comparison}
Let $s = \alpha (k-1)$ where $0.7<\alpha < 1$ and let $\ell \geq {(4e+1)n\over \alpha k}$. 
Let $S_g$ be the CountSketch whose sparsity pattern is learned over a training set drawn from $\sA_{sp}$ via the greedy approach. Let $S_r$ be a CountSketch whose sparsity pattern is picked uniformly at random. Then, for an $n\times d$ matrix $A\sim \sA_{sp}$ where $d = \Omega(n^6 \ell^2)$, the expected loss of the best $\rankk$ approximation of $A$ returned by $S_r$ is worse than the approximation loss of the best $\rankk$ approximation of $A$ returned by $S_g$ by at least a constant factor. 
\end{corollary}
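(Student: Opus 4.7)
The plan is to directly combine Corollary~\ref{cor:greedy-cost} with Lemma~\ref{lem:random-pattern}, both of which are already established in this section, and then verify via elementary algebra that under the parameter regime specified by the corollary the two bounds are separated by a constant multiplicative factor. The hypotheses needed by these two results ($d = \Omega(n^6 \ell^2) = \Omega(n^4 k^4 \log n)$, heavy-row count $s = \alpha(k-1) < k$, heavy-row norm $\ell \geq (4e+1)n/(\alpha k)$) all follow immediately from the setup of the corollary, so no additional structural analysis is required.

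First I would invoke Corollary~\ref{cor:greedy-cost} to conclude $\E\bigl[\min_{\rankk X \in \rowsp(S_g A)} \|X - A\|_F^2\bigr] \leq n - s$. Next I would invoke Lemma~\ref{lem:random-pattern} to conclude
\begin{align*}
\E\Bigl[\min_{\rankk X \in \rowsp(S_r A)} \|X - A\|_F^2\Bigr] \;\geq\; n + \frac{\ell k}{4e} - (1+\alpha)k - n^{-\Omega(1)}.
\end{align*}
Then I would substitute the lower bound $\ell \geq (4e+1)n/(\alpha k)$, which gives $\ell k / (4e) \geq (4e+1)n/(4e\alpha) \geq (1 + 1/(4e))\, n/\alpha$, so that the lower bound on the random loss becomes at least $n + (1+1/(4e))n/\alpha - (1+\alpha)k - n^{-\Omega(1)}$.

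It then remains to compare this with the upper bound $n - s = n - \alpha(k-1)$ for the greedy sketch. Since $\alpha < 1$ and $s < k \leq n$, the greedy bound is strictly less than $n$, while the random bound exceeds $n$ by an additive term on the order of $n/\alpha - (1+\alpha)k$. Using that $s = \alpha(k-1)$ and $k \leq n$ (so the $(1+\alpha)k$ term is absorbed up to constants into $n$ for large $n$), the ratio
\begin{align*}
\frac{\E[\text{loss}(S_r)]}{\E[\text{loss}(S_g)]} \;\geq\; \frac{n + (1+1/(4e))n/\alpha - (1+\alpha)k - n^{-\Omega(1)}}{n - \alpha(k-1)}
\end{align*}
is bounded below by $1 + \eta$ for some absolute constant $\eta > 0$ (depending only on $\alpha$ through the fact that $0.7 < \alpha < 1$). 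The only subtlety to check is that the additive $(1+\alpha)k$ penalty and the $n^{-\Omega(1)}$ slack do not eat the gap; this is straightforward since $(1+1/(4e))/\alpha > 1$ gives strictly more than $n$ additive advantage to the random bound, which dominates both lower-order corrections. This is the main (and only real) obstacle, and it is a routine algebraic check.
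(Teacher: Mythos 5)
Your proposal is correct and follows essentially the same route as the paper: both combine the upper bound $n-s$ from Corollary~\ref{cor:greedy-cost} with the lower bound $n + \frac{\ell k}{4e} - (1+\alpha)k - n^{-\Omega(1)}$ from Lemma~\ref{lem:random-pattern}, then use $\ell \geq \frac{(4e+1)n}{\alpha k}$ to separate the two by a constant factor (the paper makes the constant explicit as $1+1/\alpha$, which your algebra also yields).
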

\begin{proof}
\begin{align*}
\E_{S_r}[\min_{\rankk X\in \rowsp(S_r A)} \left\|X - A\right\|_F^2]  
& \geq n + {\ell k \over 4e} - (1+\alpha) k - n^{-\O(1)} &&\rhd \text{Lemma~\ref{lem:random-pattern}}\\ 
& \geq (1+1/\alpha) (n-s) &&\rhd \ell \geq {(4e+1)n\over \alpha k} \\
& = (1+1/\alpha) \min_{\rankk X\in \rowsp(S_g A)} \left\|X - A\right\|_F^2 &&\rhd \text{Corollary~\ref{cor:greedy-cost}}\\
\end{align*}
\end{proof}

\subsection{Zipfian on squared row norms.}\label{sec:greedy_zipfian}
Each matrix $A \in \mathbb{R}^{n \times d} \sim \sA_{zipf}$ has rows which are uniformly random and orthogonal. Each $A$ has $2^{i+1}$ rows of squared norm $n^2/2^{2i}$ for $i \in [1, \ldots, \O(\log(n))]$.
We also assume that each row has the same squared norm for all members of $\sA_{zipf}$. 

In this section, the $s$ rows with largest norm are called the {\em heavy} rows and the remaining are the {\em light} rows. 
For convenience, we number the heavy rows $1, \ldots, s$; however, the heavy rows can appear at any indices, as long as any row of a given index has the same norm for all members of $\sA_{zipf}$. 
Also, we assume that $s \leq k/2$ and, for simplicity, $s = \sum_{i=1}^{h_s} 2^{i+1}$ for some $h_s \in \mathbb{Z}^{+}$. That means the minimum squared norm of a heavy row is $n^2/2^{2h_s}$ and the maximum squared norm of a light row is $n^2/{2^{2h_s+2}}$. 

The analysis of the greedy algorithm ordered by non-increasing row norms on this family of matrices is similar to our analysis for the spiked covariance model. Here we analyze the case in which rows are orthogonal. By continuity, if the rows are close enough to being orthogonal, all decisions made by the greedy algorithm will be the same. 

As a first step, by Lemma~\ref{lem:heavy-rows}, at the end of iteration $k$ the first $k$ rows are assigned to different bins. Then, via a similar inductive proof, we show that none of the light rows are mapped to a bin that contains one of the top $s$ heavy rows.

\begin{lemma}\label{lem:zipf-no-collision-light}
At each iteration $k+r$, the greedy algorithm picks the position of the non-zero value in the $(k+r)$-th column of the CountSketch matrix $S$ so that the light row $A_{k+r}$ is mapped to a bin that does not contain any of top $s$ heavy rows.
\end{lemma}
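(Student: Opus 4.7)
The proof will proceed by induction on $r$, paralleling the approach of Lemma~\ref{lem:light-rows} in the spiked-covariance case. The base case is furnished by Lemma~\ref{lem:heavy-rows}: after iteration $k$, each of $A_1,\dots,A_k$ occupies its own singleton bin, so in particular the $s$ heavy rows are in $s$ distinct ``heavy bins''. For the inductive step, I will assume that no light row among $A_{k+1},\dots,A_{k+r-1}$ has been placed in a heavy bin and argue that the greedy choice for $A_{k+r}$ is again a non-heavy bin.

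The key simplification relative to the spiked-covariance argument is that the rows of $A$ are exactly orthogonal, so the bin directions $w_b$ across different bins are mutually orthogonal as well. This means the sum of squared projection coefficients of the rows onto $\rowsp(SA)$ decomposes as an independent sum over bins, and the greedy choice for $A_{k+r}$ is exactly the bin $b$ that maximizes the marginal gain from adding $A_{k+r}$ to $b$. Writing $T_b = \sum_{A\in R_b}\norm{A}_2^2$ and $Q_b = \sum_{A\in R_b}\norm{A}_2^4$ for the current contents $R_b$ of bin $b$, the contribution of bin $b$ to the total sum of squared projections equals $Q_b/T_b$, so a direct computation yields
\[
\Delta(b)\;=\;\frac{Q_b + \norm{A_{k+r}}_2^4}{T_b + \norm{A_{k+r}}_2^2}\;-\;\frac{Q_b}{T_b}\;=\;\norm{A_{k+r}}_2^2\cdot \frac{\norm{A_{k+r}}_2^2 - Q_b/T_b}{T_b + \norm{A_{k+r}}_2^2}.
\]

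It then remains to show that among the $k$ bins some non-heavy bin strictly maximizes $\Delta(\cdot)$. By the inductive hypothesis, each heavy bin $b$ contains exactly one heavy row $A_h$, so $Q_b/T_b = \norm{A_h}_2^2 \geq n^2/2^{2h_s}$, while any non-heavy bin $b'$ contains only rows of squared norm at most $n^2/2^{2(h_s+1)} = \tfrac14\cdot n^2/2^{2h_s}$, giving $Q_{b'}/T_{b'}\leq n^2/2^{2(h_s+1)}$. Since $\norm{A_{k+r}}_2^2 \leq n^2/2^{2(h_s+1)}$ and the function $x\mapsto (\norm{A_{k+r}}_2^2 - x)/(y + \norm{A_{k+r}}_2^2)$ is strictly decreasing in $x$, the plan is to compare numerator and denominator of $\Delta$ across heavy versus non-heavy bins to conclude that at least one non-heavy bin strictly beats every heavy bin.

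The principal obstacle will be precisely the denominator $T_{b'}+\norm{A_{k+r}}_2^2$ for non-heavy bins: as iterations accumulate, non-heavy bins may collect many light rows, inflating $T_{b'}$ and potentially shrinking the magnitude of the favorable numerator. To handle this, I plan to exploit the Zipfian tier structure---there are $2^{i+1}$ rows of squared norm $n^2/2^{2i}$ in each tier $i$, so rows of every tier are plentiful---together with a pigeonhole/counting argument over the $k-s \geq k/2$ non-heavy bins to guarantee that at every iteration $k+r$ at least one non-heavy bin $b'$ has $T_{b'}$ within a constant factor of $\norm{A_{k+r}}_2^2$ and $Q_{b'}/T_{b'}$ at most a constant multiple of $\norm{A_{k+r}}_2^2$. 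Under such conditions, $\Delta(b')$ is at worst a small negative multiple of $\norm{A_{k+r}}_2^2$, whereas any heavy bin has $\Delta \leq -\tfrac{3}{5}\norm{A_{k+r}}_2^2$ (since $\norm{A_h}_2^2/\norm{A_{k+r}}_2^2 \geq 4$ plugged into the formula gives $(1-4)/(4+1) = -3/5$). Hence the comparison is strict, the greedy algorithm assigns $A_{k+r}$ to a non-heavy bin, and the induction closes.
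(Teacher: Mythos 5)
Your setup is the same as the paper's: induction with base case from Lemma~\ref{lem:heavy-rows}, the observation that exact orthogonality makes the projection objective decompose bin-by-bin, and the marginal-gain formula $\Delta(b)=\norm{A_{k+r}}_2^2\cdot\frac{\norm{A_{k+r}}_2^2-Q_b/T_b}{T_b+\norm{A_{k+r}}_2^2}$ (the paper writes the negative of this as the ``increase in squared loss''). The heavy-bin half is also fine: with a singleton heavy row, $Q_b/T_b=T_b=\norm{A_h}_2^2\ge 4\norm{A_{k+r}}_2^2$, so $\Delta\le-\tfrac{3}{5}\norm{A_{k+r}}_2^2$. The gap is in the light-bin half. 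Writing $a=\norm{A_{k+r}}_2^2$, beating the fixed threshold $-\tfrac35 a$ via $\Delta(b')\ge a\cdot\frac{a-T_{b'}}{a+T_{b'}}$ forces $T_{b'}<4a$, and your proposed pigeonhole guarantee---some non-heavy bin with $T_{b'}$ within a constant factor of $a$---is false in later iterations: $a$ can be as small as $n^2/2^{2h_n}=\Theta(1)$, while every non-heavy bin already contains one of the initial singleton rows $A_{s+1},\dots,A_k$ of squared norm at least $n^2/2^{2h_k}$, and the \emph{average} non-heavy bin mass is $\Theta(n^2/2^{2h_s})$ regardless of which row is currently being placed. The weaker variant ($Q_{b'}/T_{b'}\le Ca$ for small $C$) fails for the same reason, since $Q_{b'}/T_{b'}$ is a norm-weighted average that is dragged up by the bin's heaviest member.

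The fix, which is what the paper does, is to drop the fixed threshold and compare the two bins directly through the monotone function $t\mapsto\frac{a-t}{a+t}$: for a light bin, $Q_{b'}/T_{b'}\le T_{b'}$ gives $\Delta(b')\ge a\cdot\frac{a-T_{b'}}{a+T_{b'}}$, while for a heavy bin $\Delta(b_h)=a\cdot\frac{a-\norm{A_h}_2^2}{a+\norm{A_h}_2^2}$ exactly; since the function is strictly decreasing in $t$, it suffices to exhibit \emph{one} light-only bin with $T_{b'}<\norm{A_s}_2^2=n^2/2^{2h_s}$, a condition that does not involve $a$ at all. Averaging the total light-row mass over the $k-s\ge k/2$ light bins yields a bin with
\begin{align*}
T_{b'}\;\le\;\frac{n^2}{2^{2(h_s+1)}}+\frac{1}{k-s}\sum_{i=h_k+1}^{h_n}\frac{2^{i+1}n^2}{2^{2i}}\;\le\;\frac{n^2}{2^{2h_s+1}}\;<\;\norm{A_s}_2^2,
\end{align*}
which closes the induction. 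So your framework is right and the heavy-bin computation is reusable verbatim, but the concluding comparison must be restructured around this relative bound rather than the absolute $-\tfrac35 a$ threshold.
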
 
\begin{proof}
We prove the statement by induction. The base case $r=0$ trivially holds as the first $k$ rows are assigned to distinct bins. Next we assume that in none of the first $k+r-1$ iterations a light row is assigned to a bin that contains a heavy row. Now, we consider the following cases:

\paragraph{1. If $A_{k+r}$ is assigned to a bin that only contains light rows.}
Without loss of generality we can assume that $A_{k+r}$ is assigned to $b_k$. Since the vectors are orthogonal, we only need to bound the difference in the projection of $A_{k+r}$ and the light rows that are assigned to $b_k$ onto the direction of $w_k$ before and after adding $A_{k+r}$ to $b_k$. 
In this case, the total squared loss corresponding to rows in $b_k$ and $A_{k+r}$ before and after adding $A_{k+1}$ are respectively 
\begin{align*}
\textit{before adding $A_{k+r}$ to $b_k$: } 
&\left\| A_{k+r} \right\|^2_2 + \sum_{A_j \in b_k} \left\| A_{j} \right\|^2_2 - ({ \sum_{A_j \in b_k} \left\| A_{j} \right\|^4_2\over \sum_{A_j \in b_k} \left\| A_{j} \right\|^2_2 }) \\
\textit{after adding $A_{k+r}$ to $b_k$: } 
&\left\| A_{k+r} \right\|^2_2 + \sum_{A_j \in b_k} \left\| A_{j} \right\|^2_2 - ({\left\| A_{k+r} \right\|^4_2 +  \sum_{A_j \in b_k} \left\| A_{j} \right\|^4_2\over \left\| A_{k+r} \right\|^2_2 + \sum_{A_j \in b_k} \left\| A_{j} \right\|^2_2 })
\end{align*}
%$ and $$
Thus, the amount of increase in the squared loss is 
\begin{align}
({ \sum_{A_j \in b_k} \left\| A_{j} \right\|^4_2\over \sum_{A_j \in b_k} \left\| A_{j} \right\|^2_2 }) - ({\left\| A_{k+r} \right\|^4_2 + \sum_{A_j \in b_k} \left\| A_{j} \right\|^4_2\over \left\| A_{k+r} \right\|^2_2 + \sum_{A_j \in b_k} \left\| A_{j} \right\|^2_2 }) 
&= {\left\| A_{k+r} \right\|^2_2 \cdot \sum_{A_j \in b_k} \left\| A_{j} \right\|^4_2 - \left\| A_{k+r} \right\|^4_2 \cdot \sum_{A_j \in b_k} \left\| A_{j} \right\|^2_2 \over (\sum_{A_j \in b_k} \left\| A_{j} \right\|^2_2) (\left\| A_{k+r} \right\|^2_2 + \sum_{A_j \in b_k} \left\| A_{j} \right\|^2_2)} \nonumber \\
&= \left\| A_{k+r} \right\|^2_2 \cdot { {\sum_{A_j \in b_k} \left\| A_{j} \right\|^4_2 \over \sum_{A_j \in b_k} \left\| A_{j} \right\|^2_2} - \left\| A_{k+r} \right\|^2_2  \over \sum_{A_j \in b_k} \left\| A_{j} \right\|^2_2 + \left\| A_{k+r} \right\|^2_2} \nonumber \\
&\leq \left\| A_{k+r} \right\|^2_2 \cdot { \sum_{A_j \in b_k} \left\| A_{j} \right\|^2_2 - \left\| A_{k+r} \right\|^2_2  \over \sum_{A_j \in b_k} \left\| A_{j} \right\|^2_2 + \left\| A_{k+r} \right\|^2_2} \label{eq:light} 
\end{align}

\paragraph{2. If $A_{k+r}$ is assigned to a bin that contains a heavy row.}  
Without loss of generality and by the induction hypothesis, we assume that $A_{k+r}$ is assigned to a bin $b$ that only contains a heavy row $A_j$. 
Since the rows are orthogonal, we only need to bound the difference in the projection of $A_{k+r}$ and $A_j$ 
In this case, the total squared loss corresponding to $A_j$ and $A_{k+r}$ before and after adding $A_{k+1}$ to $b$ are respectively 
\begin{align*}
\textit{before adding $A_{k+r}$ to $b_k$: } 
&\left\| A_{k+r} \right\|^2_2  \\
\textit{after adding $A_{k+r}$ to $b_k$: } 
&\left\| A_{k+r} \right\|^2_2 + \left\| A_{j} \right\|^2_2 - ({\left\| A_{k+r} \right\|^4_2 +  \left\| A_{j} \right\|^4_2\over \left\| A_{k+r} \right\|^2_2 + \left\| A_{j} \right\|^2_2 })
\end{align*}
Thus, the amount of increase in the squared loss is 
\begin{align}
\left\| A_{j} \right\|^2_2 - ({\left\| A_{k+r} \right\|^4_2 +  \left\| A_{j} \right\|^4_2\over \left\| A_{k+r} \right\|^2_2 + \left\| A_{j} \right\|^2_2 }) = \left\| A_{k+r} \right\|^2_2 \cdot {\left\| A_{j} \right\|^2_2 - \left\| A_{k+r} \right\|^2_2 \over  \left\| A_{j} \right\|^2_2 + \left\| A_{k+r} \right\|^2_2} \label{eq:heavy}
\end{align}
Then~\eqref{eq:heavy} is larger than~\eqref{eq:light} if $\left\| A_{j} \right\|^2_2 \geq \sum_{A_i \in b_k} \left\| A_i \right\|^2_2$. Next, we show that at every inductive iteration, there exists a bin $b$ which only contains light rows and whose squared norm is smaller than the squared norm of any heavy row. For each value $m$, define $h_m$ so that 
$m = \sum_{i=1}^{h_m} 2^{i+1} = 2^{h_m+2} - 2 $. 

Recall that all heavy rows have squared norm at least ${n^2 \over 2^{2h_s}}$. There must be a bin $b$ that only contains light rows and has squared norm at most 
\begin{align*}
\left\| w \right\|^2_2 = \sum_{A_i \in b} \left\| A_{i} \right\|^2_2
&\leq {n^2 \over 2^{2(h_s+1)}} + {\sum_{i=h_k+1}^{h_n} {2^{i+1} n^2 \over 2^{2i}}\over k-s} \\
&\leq {n^2 \over 2^{2(h_s+1)}} + {2 n^2 \over 2^{h_k} (k-s)} \\
&\leq {n^2 \over 2^{2(h_s+1)}} + {n^2 \over 2^{2h_k}} &&\rhd s \leq k/2 \text{ and } k> 2^{h_k +1}\\
&\leq {n^2 \over 2^{2h_s+1}} &&\rhd h_k \geq h_s+1\\
&< \left\| A_s \right\|^2_2
\end{align*} 
Hence, the greedy algorithm will map $A_{k+r}$ to a bin that only contains light rows.
\end{proof}

\begin{corollary}\label{cor:zipf-greedy-cost}
The squared loss of the best $\rankk$ approximate solution in the rowspace of  $S_gA$ for $A \in \mathbb{R}^{n \times d} \sim \sA_{zipf}$ where $A\in \mathbb{R}^{n\times d}$ and $S_g$ is the CountSketch constructed by the greedy algorithm with non-increasing order, is $<{n^2 \over 2^{h_k -2}}$.
\end{corollary}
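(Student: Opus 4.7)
The plan is to combine the structural guarantee from Lemma~\ref{lem:heavy-rows}, which pins down where the heaviest rows land, with the clean orthogonality of rows in $\sA_{zipf}$, which turns the projection loss into a transparent algebraic expression.

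First, I would invoke Lemma~\ref{lem:heavy-rows} to conclude that after the $k$-th iteration, each of the $k$ heaviest rows (those with squared norm at least $n^2/2^{2h_k}$) sits alone in its own bin. Every one of the $k$ bins therefore has a distinguished ``anchor'' row, and the remaining $n-k$ light rows are subsequently distributed among those $k$ bins in some (algorithm-dependent) way. Because the rows of $A$ are pairwise orthogonal, the bin aggregates $w_1, \ldots, w_k$ lie in mutually orthogonal subspaces, so $\{w_i/\norm{w_i}\}_{i\in[k]}$ is an orthonormal basis of the (exactly $k$-dimensional) subspace $\rowsp(S_g A)$. Hence the best $\rankk$ approximation error equals $\norm{A}_F^2 - \sum_j \norm{\proj_{\rowsp(S_g A)}(A_j)}_2^2$.

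Next, I would compute the projection of a row $A_j$ assigned to bin $i$ directly: by orthogonality within the bin, $\langle A_j, w_i \rangle = \pm \norm{A_j}_2^2$ and $\langle A_j, w_{i'}\rangle = 0$ for $i'\neq i$, so $\norm{\proj_{\rowsp(S_g A)}(A_j)}_2^2 = \norm{A_j}_2^4/\norm{w_i}_2^2$. Writing $L_i$ for the total squared norm of the light rows assigned to bin $i$, the anchor row's loss contribution $\norm{A_j}_2^2(1 - \norm{A_j}_2^2/\norm{w_i}_2^2)$ is at most $L_i$ (since $\norm{w_i}_2^2 = \norm{A_j}_2^2 + L_i$), while each light row in the bin contributes at most its own squared norm. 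Thus the cost from bin $i$ is at most $2L_i$, and summing yields total loss $\leq 2 \sum_i L_i = 2 \cdot (\text{total squared norm of all light rows})$.

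The final step is to bound the total light squared norm by a geometric series: since light rows consist of $2^{i+1}$ vectors of squared norm $n^2/2^{2i}$ for $i > h_k$, we get $\sum_{i>h_k} 2^{i+1}\cdot n^2/2^{2i} < 2n^2\cdot 2^{-h_k} = n^2/2^{h_k-1}$, which yields the claimed bound $n^2/2^{h_k-2}$. I do not expect a serious obstacle here: the orthogonality assumption of $\sA_{zipf}$ eliminates the cross-term headaches seen in the spiked-covariance analysis, and the only care needed is in pinning down the factor of $2$ in the per-bin bound. The remark at the start of Section~\ref{sec:greedy_zipfian} justifies treating the rows as exactly orthogonal by continuity of the greedy decisions.
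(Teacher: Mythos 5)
Your proof is correct, and it reaches the paper's bound of $2\cdot(\text{total squared norm of the rows beyond the first }k) < n^2/2^{h_k-2}$ by a slightly different accounting. The paper argues \emph{incrementally}: after iteration $k$ the loss equals the total squared norm of the unprocessed rows (by Lemma~\ref{lem:heavy-rows} and orthogonality), and then each later iteration increases the loss by at most $\norm{A_{k+r}}_2^2$ via the formula~\eqref{eq:light} extracted from the proof of Lemma~\ref{lem:zipf-no-collision-light}; summing the two contributions gives the factor of $2$. You instead do a \emph{static per-bin} computation on the final assignment: each bin has a unique anchor among the $k$ heaviest rows (Lemma~\ref{lem:heavy-rows}), the anchor's loss is $\norm{A_j}_2^2 L_i/(\norm{A_j}_2^2+L_i)\le L_i$, and the non-anchor rows in the bin lose at most $L_i$ in total, giving $2L_i$ per bin. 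The two derivations are equivalent in strength, but yours is arguably more self-contained: it needs only Lemma~\ref{lem:heavy-rows} and exact orthogonality, and does not invoke (even implicitly) the machinery of Lemma~\ref{lem:zipf-no-collision-light} or the case analysis behind~\eqref{eq:light}; it also makes transparent that the bound holds for \emph{any} assignment of the remaining rows once the $k$ heaviest are isolated, not just the greedy one. The final geometric-series computation matches the paper's. One small point worth stating explicitly if you write this up: Lemma~\ref{lem:heavy-rows} guarantees that each of the first $k$ processed rows occupies its own bin, so all $k$ bins are nonempty and each has exactly one anchor, which is what licenses writing $\norm{w_i}_2^2=\norm{A_{a_i}}_2^2+L_i$.
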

\begin{proof}
At the end of iteration $k$, the total squared loss is $\sum_{i=h_k +1}^{h_n} 2^{i+1} \cdot {n^2 \over 2^{2i}}$. After that, in each iteration $k+r$, by~\eqref{eq:light}, the squared loss increases by at most $\left\| A_{k+r} \right\|^2_2$. Hence, the total squared loss in the solution returned by $S_g$ is at most  
\begin{align*}
2 (\sum_{i=h_k +1}^{h_n} {2^{i+1} n^2 \over 2^{2i}}) 
= {4n^2} \cdot \sum_{i=h_k +1}^{h_n} {1 \over 2^i} < {4n^2 \over 2^{h_k}} = {n^2 \over 2^{h_k -2}}
\end{align*}  
\end{proof}

Next, we bound the squared loss of the best $\rankk$-approximate solution constructed by the standard CountSketch with a randomly chosen sparsity pattern.
\begin{observation}\label{lem:lower-bound-loss}
Let us assume that the orthogonal rows $A_{r_1}, \cdots, A_{r_c}$ are mapped to the same bin and for each $i\leq c$, $\left\|A_{r_1}\right\|_2^2 \geq \left\|A_{r_i}\right\|_2^2$. Then, the total squared loss of $A_{r_1}, \cdots, A_{r_c}$ after projecting onto $A_{r_1} \pm \cdots \pm A_{r_c}$ is at least $\left\| A_{r_2} \right\|^2_2 + \cdots + \left\| A_{r_c} \right\|^2_2$. 
\end{observation}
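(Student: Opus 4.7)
The plan is to compute the total squared projection loss in closed form using the orthogonality of the rows $A_{r_1},\ldots,A_{r_c}$, and then reduce the desired inequality to a one-line consequence of the hypothesis that $\|A_{r_1}\|_2^2$ is the largest of the squared norms.

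First, I would write $w = \pm A_{r_1} \pm \cdots \pm A_{r_c}$ for the (signed) content of the bin onto whose span we project. Since the $A_{r_i}$ are pairwise orthogonal, we immediately get $\|w\|_2^2 = \sum_{j=1}^c \|A_{r_j}\|_2^2$ and $\langle A_{r_i}, w\rangle = \pm \|A_{r_i}\|_2^2$, so the squared distance from $A_{r_i}$ to $\mathrm{span}(w)$ is
\[
\|A_{r_i}\|_2^2 - \frac{\langle A_{r_i}, w\rangle^2}{\|w\|_2^2} \;=\; \|A_{r_i}\|_2^2 - \frac{\|A_{r_i}\|_2^4}{\sum_{j=1}^c \|A_{r_j}\|_2^2}.
\]

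Second, I would introduce the shorthand $x_i := \|A_{r_i}\|_2^2$ and $S := \sum_{i=1}^c x_i$. Summing the per-row losses yields a total squared loss of $S - \tfrac{\sum_i x_i^2}{S} = \tfrac{2\sum_{i<j} x_i x_j}{S}$. The target inequality $(\text{total loss}) \geq \sum_{i \geq 2} x_i = S - x_1$ is therefore equivalent, after multiplying through by $S$, to $2\sum_{i<j} x_i x_j \geq S(S-x_1)$. Expanding $S(S-x_1) = x_1(S-x_1) + (S-x_1)^2$ and using $(S-x_1)^2 = \sum_{i\ge 2} x_i^2 + 2\sum_{2\le i<j} x_i x_j$, the common term $2\sum_{2\le i<j} x_i x_j$ cancels from both sides and one copy of $x_1(S-x_1) = \sum_{i \ge 2} x_1 x_i$ remains, leaving the inequality $x_1 \sum_{i\geq 2} x_i \geq \sum_{i\geq 2} x_i^2$.

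Finally, this reduced inequality holds term-by-term: by hypothesis $x_1 \geq x_i$ for every $i \geq 2$, so $x_1 x_i \geq x_i^2$, and summing over $i \geq 2$ delivers the claim. I do not expect any real obstacle; the whole argument is a direct algebraic calculation, with orthogonality used only to turn the projection coefficients into clean ratios of squared norms. The one step deserving mild care is the expansion/cancellation in the second paragraph, which is mechanical once the shorthand $x_i, S$ is in place.
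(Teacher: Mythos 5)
Your proposal is correct and follows essentially the same route as the paper: both use orthogonality to compute the projection coefficients as $\|A_{r_i}\|_2^4/\sum_j\|A_{r_j}\|_2^2$ and then reduce the claim to the fact that $\sum_i x_i^2 \le x_1\sum_i x_i$ when $x_1$ is the largest term. The paper states this final bound directly as ``sum of squared projection coefficients $\le \|A_{r_1}\|_2^2$,'' whereas you reach the equivalent inequality via a slightly longer expansion-and-cancellation, but the content is identical.
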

\begin{proof}
Note that since $A_{r_1}, \cdots, A_{r_c}$ are orthogonal, for each $i\leq c$, the squared projection of $A_{r_i}$ onto $A_{r_1} \pm \cdots \pm A_{r_c}$ is ${\left\| A_{r_i} \right\|^4_2 / \sum_{j=1}^c \left\| A_{r_j} \right\|^2_2}$. Hence, the sum of squared projection coefficients of $A_{r_1}, \cdots, A_{r_c}$ onto $A_{r_1} \pm \cdots \pm A_{r_c}$ is 
\begin{align*}
{\sum_{j=1}^c \left\| A_{r_j} \right\|^4_2 \over \sum_{j=1}^c \left\| A_{r_j} \right\|^2_2} \leq \left\| A_{r_1} \right\|^2_2 
\end{align*}
Hence, the total projection loss of $A_{r_1}, \cdots, A_{r_c}$ onto $A_{r_1} \pm \cdots \pm A_{r_c}$ is at least 
\begin{align*}
\sum_{j=1}^c \left\| A_{r_j} \right\|^2_2 - \left\| A_{r_1} \right\|^2_2 = \left\| A_{r_2} \right\|^2_2 + \cdots + \left\| A_{r_c} \right\|^2_2.
\end{align*}
\end{proof}
In particular, Observation~\ref{lem:lower-bound-loss} implies that whenever two rows are mapped into the same bin, the squared norm of the row with smaller norm {\em fully} contributes to the total squared loss of the solution.	

\begin{lemma}\label{lem:zipf-random-pattern}
For $k> 2^{10} -2$, the expected squared loss of the best $\rankk$ approximate solution in the rowspace of $S_r A$ for $A_{n\times d}\sim \sA_{zipf}$, where $S_r$ is the sparsity pattern of a CountSketch chosen uniformly at random, is at least ${1.095 n^2 \over 2^{h_k -2}}$.
\end{lemma}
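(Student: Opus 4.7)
The plan is to combine the per-bin loss bound of Observation~\ref{lem:lower-bound-loss} with a direct balls-and-bins probability computation, producing a closed-form lower bound on $\E[\text{loss}]$ that can then be evaluated level-by-level. For any realization of the random sparsity pattern $S_r$, Observation~\ref{lem:lower-bound-loss} gives, for each bin $b$, $\text{loss}(b) \ge \sum_{A_j \in b}\norm{A_j}_2^2 - \max_{A_j \in b}\norm{A_j}_2^2$; summing over bins and breaking ties by any fixed rule, each non-empty bin contributes exactly one ``max row'' that is subtracted from the total. Taking expectations over the random hashing,
\begin{align*}
\E[\text{loss}] \;\ge\; \norm{A}_F^2 \;-\; \sum_{j=1}^{n}\norm{A_j}_2^2 \cdot \Pr[A_j \text{ is the strict max in its bin}].
\end{align*}
A row at level $i$ can be the strict max only if none of the $M_i := 2^{i+1} - 4$ rows with strictly larger squared norm are hashed to the same bin; by independence this occurs with probability at most $(1-1/k)^{M_i}$. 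Plugging in that each level $i$ has $2^{i+1}$ rows of squared norm $n^2/2^{2i}$ yields the explicit bound
\begin{align*}
\E[\text{loss}] \;\ge\; 2n^2 \sum_{i=1}^{h_n} \frac{1}{2^i}\bigl(1 - (1-1/k)^{2^{i+1}-4}\bigr).
\end{align*}

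The second step is to evaluate this sum and rewrite it in units of $n^2/2^{h_k-2}$. I would split the range at $i = h_k$. For light levels ($i > h_k$), $M_i \ge k - 4$, so $(1-1/k)^{M_i} \le e^{-1}(1 + O(1/k))$ and the geometric tail contributes at least $(1-1/e)\cdot 2n^2/2^{h_k}$, roughly $0.316 \cdot n^2/2^{h_k-2}$. For heavy levels, parametrize $i = h_k - j$ with $j = 0, 1, 2, \ldots$; since $k \approx 2^{h_k+2}$ we have $M_i/k \approx 2^{-(j+1)}$, so level $i$ contributes approximately $2^{j+1}\bigl(1 - e^{-2^{-(j+1)}}\bigr)\cdot n^2/2^{h_k}$. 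Each of these factors lies strictly between $0.78$ and $1$ and grows toward $1$ with $j$, so the first four heavy levels already contribute roughly $(0.787 + 0.885 + 0.940 + 0.970)\cdot n^2/2^{h_k} \approx 3.58 \cdot n^2/2^{h_k}$. Combining with the light-level contribution gives at least $4.84 \cdot n^2/2^{h_k} > 4.38 \cdot n^2/2^{h_k} = 1.095 \cdot n^2/2^{h_k-2}$, as desired.

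The main obstacle will be certifying the specific constant $1.095$ rather than just some unspecified positive constant: this requires uniform control of the approximation $(1-1/k)^x = e^{-x/k}(1 + O(x/k^2))$ together with the ``$-4$'' corrections in $M_i$ and in the definition $k \approx 2^{h_k+2}-4$. The hypothesis $k > 2^{10}-2$ is exactly what makes this work: it ensures $h_k \ge 8$ so that at least four heavy levels are available to sum, and it keeps all approximation errors well below the $\approx 10\%$ slack $(1.21-1.095)/1.21$ present in the numerical bound above. No new structural ideas beyond Observation~\ref{lem:lower-bound-loss} and the independence of hash assignments are needed; the proof reduces entirely to the level-by-level evaluation sketched above.
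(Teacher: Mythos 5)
Your proposal follows essentially the same route as the paper's proof: Observation~\ref{lem:lower-bound-loss} reduces the loss of each bin to the total squared norm of its non-maximal rows, independence of the hash assignments gives $(1-1/k)^{M_i}$ for the probability that a level-$i$ row avoids all strictly heavier rows, and the bound is evaluated level by level; your ``$\norm{A}_F^2$ minus expected bin maxima'' framing is algebraically identical to the paper's direct summation of the contributions of rows that collide with a heavier row. The one soft spot is the numerical budget: the paper certifies the constant by summing \emph{all} $h_k+1\geq 10$ heavy levels $j=0,\dots,h_k$, each contributing between about $0.11$ and $1/8$ in units of $n^2/2^{h_k-2}$ (which is exactly why the hypothesis $k>2^{10}-2$, i.e.\ $h_k\geq 9$, appears), and never touches the light rows; your four-heavy-levels-plus-light-tail accounting is optimistic in both pieces --- the light tail $\sum_{i>h_k}2^{-i}$ can be as small as $2^{-h_k-1}$ when only one light level exists, and $M_i/k$ is strictly below $2^{-(j+1)}$, so a careful evaluation of your two pieces lands near $4.1$--$4.2$ rather than safely above $4.38$ --- but summing the remaining heavy levels, exactly as the paper does, closes this gap with ample room.
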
 
\begin{proof}
In light of Observation~\ref{lem:lower-bound-loss}, we need to compute the expected number of collision between rows with ``large'' norm. We can interpret the randomized construction of the CountSketch as a ``balls and bins'' experiment. 

For each $0\leq j\leq h_k$, let $\sA_j$ denote the set of rows with squared norm ${n^2 \over 2^{2(h_k -j)}}$ and let $\sA_{>j} = \bigcup_{j < i \leq h_k} \sA_i$. Note that for each $j$, $|\sA_{j}| = 2^{h_k -j +1}$ and $|\sA_{>j}| = \sum_{i=j+1}^{h_k} 2^{h_k - i +1} = \sum_{i=1}^{h_k-j} 2^{i} =2 (2^{h_k -j} - 1)$. Moreover, note that $k = 2 (2^{h_k +1} -1)$.
Next, for a row $A_r$ in $\sA_j$ ($0\leq j<h_k$), we compute the probability that at least one row in $\sA_{>j}$ collides with $A_r$.
\begin{align*}
\text{\bf Pr}[\text{at least one row in $\sA_{>j}$ collide with $A_r$}] 
&= (1 - (1-{1\over k})^{|\sA_{>j}|}) \\
&\geq (1 - e^{-{|\sA_{>j}|\over k}}) \\
&= (1 - e^{-{2^{h_k -j} -1 \over 2^{h_k+1} -1}}) \\
&\geq (1 - e^{-2^{-j-2}}) &&\rhd\text{since } {2^{h_k -j} -1 \over 2^{h_k+1} -1} > 2^{-j-2} 
\end{align*}   
Hence, by Observation~\ref{lem:lower-bound-loss}, the contribution of rows in $\sA_j$ to the total squared loss is at least
\begin{align*}
(1 - e^{-2^{-j-2}}) \cdot |\sA_j| \cdot {n^2 \over 2^{2(h_k -j)}} 
=& (1 - e^{-2^{-j-2}}) \cdot {n^2 \over 2^{h_k -j -1}}
= (1 - e^{-2^{-j-2}}) \cdot {n^2 \over 2^{h_k - 2}} \cdot 2^{j-1}
\end{align*}
Thus, the contribution of rows with ``large'' squared norm, i.e., $\sA_{>0}$, to the total squared loss is at least\footnote{The numerical calculation is computed using WolframAlpha.}
\begin{align*}
{n^2 \over 2^{h_k -2}} \cdot \sum_{j=0}^{h_k} 2^{j-1} \cdot (1- e^{-2^{-j-2}}) \geq 1.095 \cdot {n^2 \over 2^{h_k -2}} &&\rhd\text{for $h_k> 8$}
\end{align*}
\end{proof}

\begin{corollary}\label{cor:zipf-comparison}
Let $S_g$ be a CountSketch whose sparsity pattern is learned over a training set drawn from $\sA_{sp}$ via the greedy approach. Let $S_r$ be a CountSketch whose sparsity pattern is picked uniformly at random. Then, for an $n\times d$ matrix $A\sim \sA_{zipf}$, for a sufficiently large value of $k$, the expected loss of the best $\rankk$ approximation of $A$ returned by $S_r$ is worse than the approximation loss of the best $\rankk$ approximation of $A$ returned by $S_g$ by at least a constant factor. 
\end{corollary}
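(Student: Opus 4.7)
The plan is to mirror the proof of Corollary~\ref{cor:comparison} by directly combining the upper bound on the squared loss achieved by the greedy sketch $S_g$ (Corollary~\ref{cor:zipf-greedy-cost}) with the lower bound on the expected squared loss of the uniformly random sketch $S_r$ (Lemma~\ref{lem:zipf-random-pattern}). Both quantities have already been expressed in the same normalized form, namely as a multiple of $n^2 / 2^{h_k - 2}$, so the comparison reduces to reading off the constants.

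Concretely, I would write the short chain
\begin{align*}
\mathbb{E}_{S_r}\!\left[\min_{\rank{-}k\, X \in \rowsp(S_r A)} \|X - A\|_F^2\right]
&\ge 1.095 \cdot \frac{n^2}{2^{h_k - 2}} & \rhd\ \text{Lemma~\ref{lem:zipf-random-pattern}} \\
&\ge 1.095 \cdot \min_{\rank{-}k\, X \in \rowsp(S_g A)} \|X - A\|_F^2 & \rhd\ \text{Corollary~\ref{cor:zipf-greedy-cost}}
\end{align*}
which gives an expected multiplicative gap of at least $1.095$, a fixed constant strictly greater than $1$. The assumption that $k$ is large enough is exactly what the cited lemma requires (it needs $h_k > 8$, i.e., $k > 2^{10} - 2$), so this transfers directly into the ``sufficiently large $k$'' hypothesis of the corollary.

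I do not expect any significant obstacle: the two preceding results have already done the nontrivial analytical work. The greedy upper bound relied on Lemma~\ref{lem:zipf-no-collision-light}, which prevents any light row from colliding with one of the top $s$ heavy rows under the non-increasing row-norm ordering, together with a telescoping bound on the per-iteration increase in projection loss; the random lower bound relied on counting expected collisions between rows of comparable (large) norm via a balls-and-bins argument and invoking Observation~\ref{lem:lower-bound-loss} to charge the smaller-norm row of each colliding pair to the loss. Hence the corollary is immediate modulo the two-line arithmetic above, and the only care needed is to note that both bounds hold (the greedy one deterministically, the random one in expectation) under the same orthogonality assumption built into $\sA_{zipf}$.
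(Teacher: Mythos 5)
Your proposal is correct and takes essentially the same route as the paper, which simply cites Lemma~\ref{lem:zipf-random-pattern} and Corollary~\ref{cor:zipf-greedy-cost} and leaves the comparison implicit; you have merely made the two-line arithmetic explicit, correctly noting that the expected loss of $S_r$ is at least $1.095\cdot n^2/2^{h_k-2}$ while the loss of $S_g$ is below $n^2/2^{h_k-2}$, yielding a constant-factor gap for $k > 2^{10}-2$.
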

\begin{proof}
The proof follows from Lemma~\ref{lem:zipf-random-pattern} and Corollary~\ref{cor:zipf-greedy-cost}.
\end{proof}

\begin{remark}
We have provided evidence that the greedy algorithm that examines the rows of $A$ according to a non-increasing order of their norms (i.e., {\em greedy with non-increasing order}) results in a better $\rankk$ solution compared to the CountSketch whose sparsity pattern is chosen at random. However, still other implementations of the greedy algorithm may result in a better solution compared to the greedy algorithm with non-increasing order. To give an example, in the following simple instance the greedy algorithm that checks the rows of $A$ in a random order (i.e., {\em greedy with random order}) achieves a $\rankk$ solution whose cost is a constant factor better than the solution returned by the greedy with non-increasing order.

Let $A$ be a matrix with four orthogonal rows $u,u, v, w$ where $\left\| u \right\|_2 =1$ and $\left\| v \right\|_2 = \left\| w \right\|_2 =1+\eps$ and suppose that the goal is to compute a $\mathrm{rank}$-$2$ approximation of $A$. Note that in the greedy algorithm with non-decreasing order, $v$ and $w$ will be assigned to different bins and by a simple calculation we can show that the copies of $u$ also will be assigned to different bins. 
Hence, the squared loss in the computed $\mathrm{rank}$-$2$ solution is $1 + {(1+\eps)^2 \over 2 + (1+\eps)^2}$.  
However, the optimal solution will assign $v$ and $w$ to one bin and the two copies of $u$ to the other bin which results in a squared loss of $(1+\eps)^2$ which is a constant factor smaller than the solution returned by the greedy algorithm with non-increasing order for sufficiently small values of $\eps$. 

On the other hand, in the greedy algorithm with random order, with a constant probability of (${1\over 3} + {1\over 8}$), the computed solution is the same as the optimal solution. Otherwise, the greedy algorithm with random order returns the same solution as the greedy algorithm with a  non-increasing order. Hence, in expectation, the solution returned by the greedy with random order is better than the solution returned by the greedy algorithm with non-increasing order by a constant factor. 
\end{remark}
\end{document}